\def\eqref#1{equation~\ref{#1}}
\def\1{\bm{1}}
\DeclareMathAlphabet{\mathsfit}{\encodingdefault}{\sfdefault}{m}{sl}
\SetMathAlphabet{\mathsfit}{bold}{\encodingdefault}{\sfdefault}{bx}{n}
\DeclareMathOperator*{\argmax}{arg\,max}
\DeclareMathOperator*{\argmin}{arg\,min}
\def\##1\#{\begin{align}#1\end{align}}
\def\$#1\${\begin{align*}#1\end{align*}}
\def\given{\,|\,}
\def\biggiven{\,\big{|}\,}
\def\Biggiven{\,\Big{|}\,}
\def\tr{\mathop{\text{tr}}\kern.2ex}
\long\def\comment#1{}
\def\tr{\mathop{\text{tr}}}
\def\cS{{\mathcal{S}}}
\newcommand{\lan}{\langle}
\newcommand{\ran}{\rangle}
\newcommand{\cA}{\mathcal{A}}
\newcommand{\cB}{\mathcal{B}}
\newcommand{\cD}{\mathcal{D}}
\newcommand{\cJ}{\mathcal{J}}
\newcommand{\cL}{\mathcal{L}}
\newcommand{\cM}{\mathcal{M}}
\newcommand{\cP}{\mathcal{P}}
\newcommand{\EE}{\mathbb{E}}
\newcommand{\RR}{\mathbb{R}}
\newcommand{\kl}{\text{KL}}
\newcommand{\var}{\text{Var}}
\newcommand{\regr}{\text{SubOptGap}}
\newcommand{\perr}{\varepsilon_{\text{Pess}}}
\theoremstyle{definition}
\newtheorem{definition}{Definition}[section]
\newtheorem{theorem}{Theorem}[section]
\newtheorem{lemma}[theorem]{Lemma}
\newtheorem{assumption}[theorem]{Assumption}
\newcommand{\Rmnum}[1]{\expandafter\@slowromancap\romannumeral #1@}
\begin{document}
%
\title{False Correlation Reduction for Offline Reinforcement Learning}
%
%
%
%

\author{Zhihong~Deng, 
    Zuyue~Fu, 
    Lingxiao~Wang, 
    Zhuoran~Yang,\\ 
    Chenjia~Bai, 
    Tianyi~Zhou, 
    Zhaoran~Wang, 
    Jing~Jiang%
\IEEEcompsocitemizethanks{\IEEEcompsocthanksitem Z. Deng, J. Jiang are with Centre for Artificial Intelligence, FEIT, University of Technology Sydney, NSW 2007, Australia (Email: zhi-hong.deng@student.uts.edu.au, jing.jiang@uts.edu.au).\protect\\
\IEEEcompsocthanksitem Z. Fu, L. Wang. Z. Wang are with the Department of Industrial Engineering and Management Sciences, Northwestern University, Evanston, IL 60208, USA (zuyuefu2022@u.northwestern.edu, lingxiaowang2022@u.northwestern.edu,  zhaoran-wang@gmail.com).\protect\\
\IEEEcompsocthanksitem Z. Yang is with the Department of Statistics and Data Science, Yale University, 24 Hillhouse Avenue, New Haven, CT 06511, USA (zryang1993@gmail.com).\protect\\
\IEEEcompsocthanksitem C. Bai is with Shanghai Artificial Intelligence Laboratory, Shanghai 200232, China (e-mail: baichenjia@pjlab.org.cn).\protect\\
\IEEEcompsocthanksitem T. Zhou is with the Department of Computer Science and UMIACS, University of Maryland, College Park, MD 20742, USA (Email: tianyi.david.zhou@gmail.com).}%
\thanks{Manuscript received November xx, 2022; revised xx xx, xxxx.}}

%
%

\markboth{Journal of \LaTeX\ Class Files,~Vol.~14, No.~8, August~2015}%
{Shell \MakeLowercase{\textit{et al.}}: Bare Demo of IEEEtran.cls for Computer Society Journals}
%



\IEEEtitleabstractindextext{%
\begin{abstract}
Offline reinforcement learning (RL) harnesses the power of massive datasets for resolving sequential decision problems. Most existing papers only discuss defending against out-of-distribution (OOD) actions while we investigate a broader issue, the false correlations between epistemic uncertainty and decision-making, an essential factor that causes suboptimality. In this paper, we propose falSe COrrelation REduction (SCORE) for offline RL, a practically effective and theoretically provable algorithm. We empirically show that SCORE achieves the SoTA performance with 3.1x acceleration on various tasks in a standard benchmark (D4RL). The proposed algorithm introduces an annealing behavior cloning regularizer to help produce a high-quality estimation of uncertainty which is critical for eliminating false correlations from suboptimality. Theoretically, we justify the rationality of the proposed method and prove its convergence to the optimal policy with a sublinear rate under mild assumptions.
\end{abstract}

\begin{IEEEkeywords}
Offline reinforcement learning, false correlation, uncertainty estimation.
\end{IEEEkeywords}}

\maketitle

\IEEEdisplaynontitleabstractindextext

%
\IEEEpeerreviewmaketitle

%
%
\IEEEraisesectionheading{\section{Introduction}\label{sec:introduction}}

%
%
%
%
\IEEEPARstart{O}{ffline} reinforcement learning (RL) aims to learn the optimal policy from a pre-collected dataset without interacting with the environment. Theoretically, experience replay allows for the direct application of off-policy RL algorithms to this setting, but they perform poorly in practice~\cite{fujimotoOffpolicyDeepReinforcement2019,fuD4RLDatasetsDeep2021}. Many research works attribute this problem to out-of-distribution (OOD) actions~\cite{fujimotoOffpolicyDeepReinforcement2019,wuBehaviorRegularizedOffline2019a,kumarStabilizingOffpolicyQlearning2019}. Because the offline setting disallows learning by trial and error, an agent may "exploit" OOD actions to attack the value estimator, resulting in a highly suboptimal learned policy. Although this provides an intuitive explanation, the relationship between OOD actions and the suboptimality of the learned policy is ambiguous and lacks a mathematical description.

As opposed to this, false correlation is a rigorously defined concept by mathematically decomposing suboptimality. Due to the insufficient data coverage, a correlation exhibits between epistemic uncertainty and decision-making. As a result, the agent is biased towards suboptimal policies that look good only by chance. Such false correlations can be generated not only by OOD actions but also by insufficient coverage over the state space (or, equivalently, OOD states). Moreover, in-distribution samples have differences in uncertainty, and those with higher uncertainty may subtly raise suboptimality when the agent greedily pursues the maximum estimated value.

\begin{figure}
    \includegraphics[width=1.0\linewidth]{./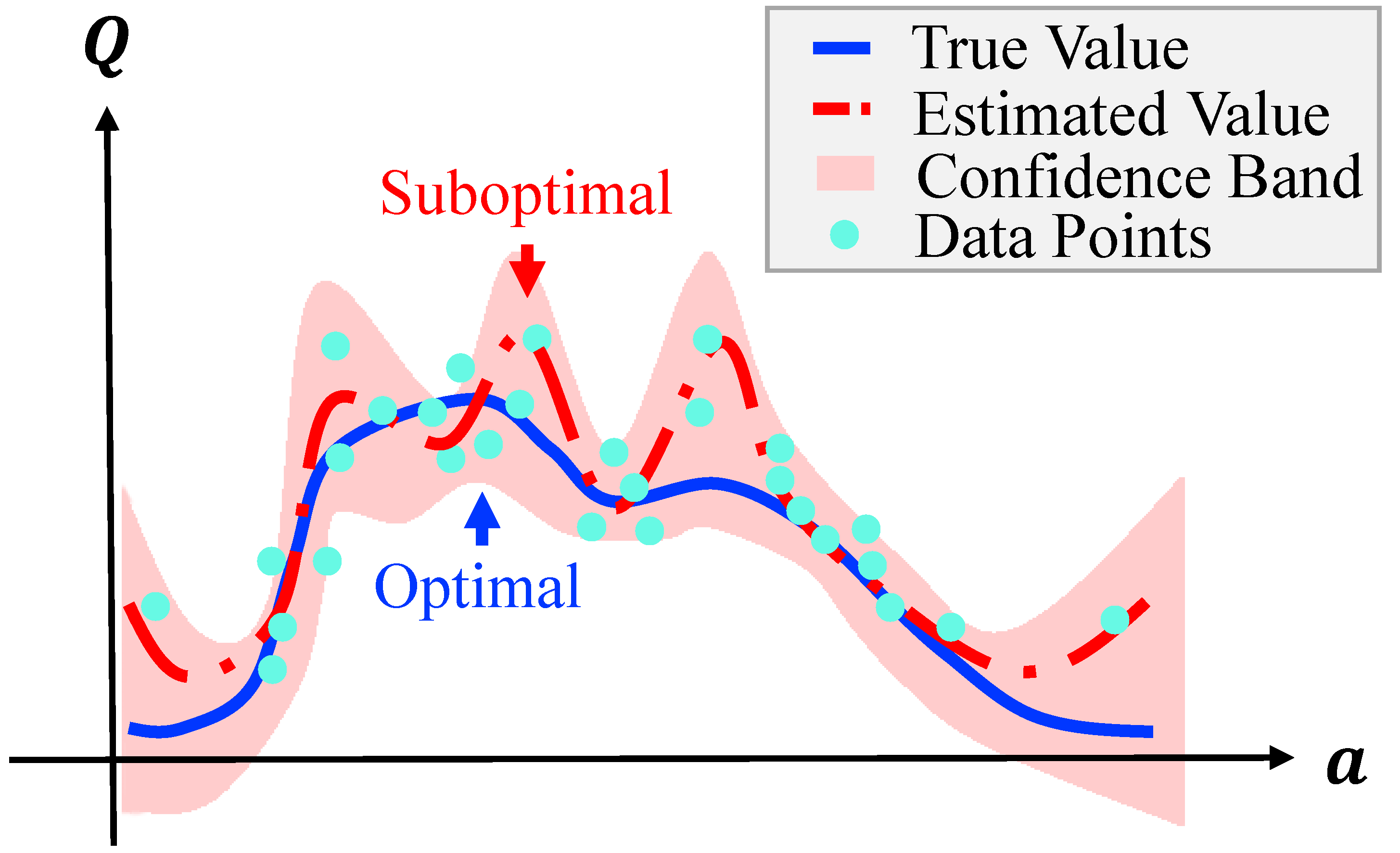}
    \caption{An example of false correlation: the epistemic uncertainty is correlated with the value, making a suboptimal action with high uncertainty appear to be better than the optimal one.}
    \label{fig:example}
\end{figure}
According to recent theoretical studies, pessimism in the face of uncertainty can solve this problem~\cite{jinPessimismProvablyEfficient2021a,xieBellmanconsistentPessimismOffline2021a}. An example of false correlation is shown in \figurename~\ref{fig:example}, where the estimated value function encourages the agent to pick a suboptimal action while the lower confidence bound (pessimism) recovers the optimal one. Unfortunately, this approach fails in practice~\cite{fujimotoOffpolicyDeepReinforcement2019,levineOfflineReinforcementLearning2020a} due to the inability to obtain high-quality estimations of uncertainty.

In this paper, we propose falSe COrrelation REduction (SCORE) for offline RL, a practically effective and theoretically provable algorithm. Specifically, SCORE introduces an annealing behavioral cloning regularizer on top of pessimism in the face of uncertainty. This regularizer drives the agent to concentrate on the dataset distribution during the initial stages of training, producing high-quality uncertainty estimates. In the later phases, SCORE gradually decays the regularization weight towards zero, avoiding the bias towards the behavioral policy. 
On the theoretical side, we generalize the conclusion in~\cite{jinPessimismProvablyEfficient2021a} to the context of infinite-horizon regularized MDP and incorporate the policy optimization process, showing SCORE converges to the optimal policy at a sublinear rate. Importantly, this result does not rely on strong assumptions regarding access to the exact value of OOD samples~\cite{baiPessimisticBootstrappingUncertaintyDriven2022} or sufficient coverage over the sample space~\cite{kumarConservativeQLearningOffline2020,yuMOPOModelbasedOffline2020a}. Furthermore, SCORE eliminates the need to sample and calculate the target value of OOD data, which distinguishes it from the approach presented in~\cite{baiPessimisticBootstrappingUncertaintyDriven2022}.

Through extensive experiments on the D4RL benchmark, we demonstrate that SCORE is not only theoretically sound but also obtains promising empirical results across various data settings. Furthermore, the simplicity of SCORE brings approximately a 3.1x speedup over the previous SoTA method.

%
%
\section{Preliminaries}
\label{sec:preliminaries}
In this section, we first formalize the offline RL problem, and analyze the performance gap between the optimality policy and a learned policy. Then we show how pessimism can eliminate false correlations from the suboptimality.

\subsection{The Offline RL Problem}
Consider a MDP $\mathcal{M}=(\mathcal{S}, \mathcal{A}, P, R, \gamma, d_0)$, where $\mathcal{S}$ and $\mathcal{A}$ represent the state space and the action space respectively. $P\colon \cS \times\cS \times \cA \to [0,1]$ is the Markov transition function, $R\colon \cS \times \cA \to \RR$ is the reward function, $\gamma \in (0,1)$ is the discount factor, and $d_0\colon \cS \to [0,1]$ is the initial distribution of states. In offline RL, the agent is given a static dataset $\mathcal{D}=\{(s^{(i)}, a^{(i)}, s'^{(i)}, r^{(i)})\}_{i=1}^{N}$ collected by the behavioral policy  $\pi_\beta$. Suppose that $d^{\pi}(s,a)$ denotes the discounted state-action distribution of a policy $\pi$. We have $(s^{(i)}, a^{(i)}) \sim d^{\pi_\beta}(\cdot, \cdot), s'^{(i)}\sim P(\cdot \given s^{(i)},a^{(i)})$, and $r^{(i)} = R(s^{(i)}, a^{(i)})$. Then the goal of offline RL is to search for a policy $\pi \colon \cA \times \cS \to [0,1]$ that maximizes the expected cumulative reward $\cJ(\pi) = \EE_{\pi}[ \sum_{t = 0}^\infty \gamma^t \cdot R(s_t, a_t)]$ given a static dataset $\cD$, where expectation $\EE_\pi[\cdot ]$ is taken with respect to $s_0 \sim d_0(\cdot )$, $a_t \sim \pi(\cdot \given s_t)$, and $s_{t+1} \sim P(\cdot \given s_t, a_t)$. With a slight abuse of notation, we refer to $\mathcal{D}$ as the dataset distribution.

\subsection{Suboptimality Decomposition}
In offline RL, the samples are drawn from a fixed distribution $\mathcal{D}$ instead of the environment. Therefore, in value iteration, the standard Bellman optimality operator $\mathcal{B}$ gets replaced by its empirical counterpart $\widehat{\mathcal{B}}$ where the transition probabilities and rewards are estimated by the sample average in $\cD$. Since the dataset only covers partial information of the environment, the agent would be learning with bias. In this paper, we formalize such bias for any action-value function $Q\colon \cS \times \cA \to \RR$ as follows:
\begin{equation}
\label{eq:epistemic_error}
\iota(s,a) = \mathcal{B}Q(s,a) - \widehat{\mathcal{B}}Q(s,a). 
\end{equation}
Since $\iota(s, a)$ characterizes the error arising from insufficient information about the environment in knowledge and gradually converges to zero as we learn more about the state-action pair $(s, a)$ (including its long-term effects), we refer to it as the \emph{epistemic error}. In the ideal case, the dataset accurately mirrors the environment, i.e., $\widehat{\mathcal{B}}=\mathcal{B}$, resulting in zero epistemic error. The agent can learn the optimal policy offline just like in the online setting. However, this is almost impossible in real-world domains. In general, the dataset only contains limited information and the epistemic error persists throughout the learning process.

According to~\cite{jinPessimismProvablyEfficient2021a}, the suboptimality of a learned policy $\widehat{\pi}$, i.e., the performance gap between $\widehat{\pi}$ and the optimal policy $\pi^*$, can be decomposed mathematically into three different components:
\begin{equation}
\label{eq:suboptimality}
\begin{aligned}
&\operatorname{SubOpt}(\widehat{\pi} ; s_{0})
= V^{\pi^{*}}(s_{0})-V^{\widehat{\pi}}(s_{0})\\
&=\underbrace{-\sum_{t = 0}^\infty \gamma^t\mathbb{E}_{\widehat{\pi}}\left[\iota(s_{t}, a_{t}) \given s_{0}\right]}_{(\mathrm{i})\text{: False Correlation}} + \underbrace{\sum_{t = 0}^\infty \gamma^t \mathbb{E}_{\pi^*}\left[\iota(s_{t}, a_{t}) \given s_{0}\right]}_{(\mathrm{ii})\text{: Intrinsic Uncertainty}}\\
&+ \underbrace{\sum_{t = 0}^\infty \gamma^t \mathbb{E}_{\pi^*}\left[\left\langle\widehat{Q}\left(s_{t}, \cdot\right), \pi^*\left(\cdot \given s_{t}\right)-\widehat{\pi}\left(\cdot \given s_{t}\right)\right\rangle_{\mathcal{A}} \biggiven s_{0}\right]}_{(\mathrm{iii})\text{: Optimization Error}},
\end{aligned}
\end{equation}
where $s_0$ is the initial state, $\widehat Q$ is an estimated Q function, and $V^{\pi}(s) = \langle Q^{\pi}(s, \cdot), \pi(\cdot \given s) \rangle$ is the state-value function. It is straightforward that the suboptimality of the optimal policy $\pi^*$ is zero and lower indicates a better policy. Term ($\mathrm{ii}$) in~\eqref{eq:suboptimality} is proved to arise from the information-theoretic lower bound and thus is impossible to eliminate. Meanwhile, term ($\mathrm{iii}$) is non-positive as long as the policy $\widehat{\pi}$ is greedy with respect to the estimated action-value function $\widehat{Q}$. Therefore, controlling term ($\mathrm{i}$) is the key to reducing suboptimality in offline RL. Next, we explain how pessimism could help address this issue.

\subsection{Pessimism}
\label{subsec:pessimism}
Let $\widehat{Q}\colon \mathcal{S} \times \mathcal{A} \rightarrow \mathbb{R}$ represents an action value estimator based on the dataset. We first define an uncertainty quantifier $U$ with confidence $\xi \in (0,1)$ as follows.

\begin{definition}[$\xi$-Uncertainty Quantifier]
\label{def:uncertainty}
$U: \mathcal{S} \times \mathcal{A} \rightarrow \mathbb{R}$ is a $\xi$-uncertainty quantifier with respect to the dataset distribution $\mathcal{D}$ if the event
\begin{equation}
\mathcal{E}=\left\{ |\widehat{\mathcal{B}}\widehat{Q}(s, a) - \mathcal{B}\widehat{Q}(s, a)| \leq U(s, a), \forall (s,a)\in \mathcal{S}\times\mathcal{A} \right\}
\end{equation}
satisfies $\text{Pr}(\mathcal{E}|\mathcal{D}) \geq 1-\xi$.
\end{definition}

In Definition~\ref{def:uncertainty}, $U$ measures the uncertainty arising from approximating $\mathcal{B}\widehat{Q}$ with $\widehat{\mathcal{B}}\widehat{Q}$, where $\cB$ is the true Bellman optimality operator and $\widehat{\cB}$ is the empirical Bellman operator. We remark that $\widehat{\cB}$ can be constructed implicitly by treating $\widehat{\cB}\widehat{Q}\colon \cS \times \cA \to \RR$ as a whole. 
When $\mathcal{B}\widehat{Q}$ and $\widehat{\mathcal{B}}\widehat{Q}$ differ by a large amount, $U$ should be large, while when the two quantities are sufficiently close, $U$ can be very small or even zero. Based on this definition, Jin et al.~\cite{jinPessimismProvablyEfficient2021a} construct a pessimistic Bellman operator as follows:
\begin{equation}
\label{eq:bellman}
\widehat{\mathcal{B}}^{-}\widehat{Q}(s,a) := \widehat{\mathcal{B}}\widehat{Q}(s,a) - U(s, a). 
\end{equation}
According to Definition~\ref{def:uncertainty}, $\widehat{\mathcal{B}}^{-}\widehat{Q}(s,a) \leq \mathcal{B}\widehat{Q}(s,a)$ holds for all state-action pairs with a high probability, i.e., the Q-value derived from~\eqref{eq:bellman} lower bounds the one derived from the standard Bellman operator $\mathcal{B}$. In other words, \eqref{eq:bellman} provides a pessimistic estimation. 
Replacing the empirical Bellman operator $\widehat{\mathcal{B}}$ in~\eqref{eq:epistemic_error} with the pessimistic Bellman operator, it holds that:
\begin{equation}
\label{eq:suboptimality_bound}
0 \leq \iota(s,a) = \mathcal{B}\widehat{Q}(s,a) - \widehat{\mathcal{B}}^{-}\widehat{Q}(s,a) \leq 2U(s,a).
\end{equation}
Since the epistemic error $\iota(s,a)$ is non-negative in this case, term ($\mathrm{i}$) in~\eqref{eq:suboptimality} only reduces the suboptimality. As a result, pessimism eliminates false correlations. Meanwhile, the suboptimality is now upper-bounded by $\sum_{t = 0}^\infty 2\gamma^t \mathbb{E}_{\pi^*}\left[U(s,a) \given s_0\right]$, so the remaining issue is finding a sufficiently small $\xi$-uncertainty quantifier that satisfies Definition~\ref{def:uncertainty}.

%
%
\section{False Correlation Reduction for Offline RL}
\label{sec:score}
In this section, we begin with a motivating example in Section~\ref{subsec:example} to demonstrate the universality of false correlations. We then introduce a practical algorithm named SCORE in Section~\ref{subsec:algorithm}. In Section~\ref{subsec:theory}, we further analyze the theoretical properties of the proposed algorithm.

\subsection{A Motivating Example}
\label{subsec:example}
Consider an autonomous vehicle navigating through a complex urban environment, aiming to optimize its route for efficient travel. 
During data collection, the vehicle successfully reaches a specific location during rush hour, receiving a high return for the fast completion. 
However, this success was merely due to a lucky sequence of green lights rather than an optimal policy. 
When an RL agent learns from the dataset and strives to maximize return, it may be prone to bias towards this suboptimal route, leading to inefficient and frustrating experiences in real-world deployment. 
This example illustrates how false correlations can significantly influence offline learning and highlights the critical need to address this issue for practical success.

\subsection{False Correlation Reduction}
\label{subsec:algorithm}

Based on Definition~\ref{def:uncertainty} and~\eqref{eq:suboptimality_bound}, it is straightforward that $U(s, a)=|\widehat{\mathcal{B}}\widehat{Q}(s, a) - \mathcal{B}\widehat{Q}(s, a)|$ achieves the tightest bound, i.e., $U$ accurately portrays the epistemic uncertainty arising from approximating $\mathcal{B}$ using $\widehat{\mathcal{B}}$. Note that the Bellman operator acts on the value function, so $U$ must consider the input state-action pair's missing information in predicting its long-term effects. 
Besides, since the state and action spaces are enormous in real-world domains, using value function approximators (such as deep neural networks) is necessary. In this case, we can neither derive an analytical solution of the epistemic uncertainty like in linear MDPs~\cite{jinPessimismProvablyEfficient2021a} nor evaluate it by maintaining a counter or a pseudo-counter for all state-action pairs.

\begin{algorithm}[htp]
\caption{False Correlation Reduction for offline RL}
\label{alg:score}
\begin{algorithmic}[1]
\State Initialize critic networks $\{Q_{\theta_i}\}_{i=1}^{M}$ and actor network $\pi_{\phi}$, with random parameters $\{\theta_i\}_{i=1}^{M}, \phi$
\State Initialize target networks $\{\theta'_{i}\}_{i=1}^{M} \leftarrow \{\theta_i\}_{i=1}^{M}, \phi' \leftarrow \phi$
\State Initialize replay buffer with the dataset $\mathcal{D}$
\For{$t = 1 \text{ to } T$}
    \State Sample $n$ transitions $(s,a,s',r)$ from $\mathcal{D}$
    \State $a' \leftarrow \pi_{\phi'}(s')+\mathbf{\epsilon}, \mathbf{\epsilon} \sim \operatorname{clip}(\mathcal{N}(0, \sigma^2), -c, c)$
    \For{$i=1 \text{ to } M$}
        \State Update $\theta_i$ to minimize~\eqref{eq:q_objective}. \Comment{Pessimism}
    \EndFor
    \If{$t \% d=0$}
        \State Update $\phi$ to maximize~\eqref{eq:pi_objective}.
        \State Update target networks: 
        \State $ \theta'_{i} \leftarrow \tau \theta'_{i} + (1-\tau) \theta_{i},\ \phi'\leftarrow \tau \phi' + (1-\tau) \phi$.
    \EndIf
    \If{$t \% d_{\text{bc}}=0$}
        \State $\lambda=\gamma_{\text{bc}} \cdot \lambda$
    \EndIf
\EndFor
\end{algorithmic}
\end{algorithm}
Estimating epistemic uncertainty with deep neural networks is an important research topic. One of the most popular approaches is the bootstrapped ensemble method~\cite{osbandDeepExplorationBootstrapped2016,lakshminarayananSimpleScalablePredictive2017}. Each ensemble member is trained on a different version of data generated by a bootstrap sampling procedure. This approach provides a general and non-parametric way to approximate the Bayesian posterior distribution, so the standard deviation of multiple estimations can be regarded as a reasonable estimation of the uncertainty. Previous works mainly use uncertainty as a bonus to promote efficient exploration in online RL ~\cite{osbandDeepExplorationBootstrapped2016,osbandRandomizedPriorFunctions2018, ciosekBetterExplorationOptimistic2019, leeSUNRISESimpleUnified2021}, while we utilize uncertainty as a penalty to reduce false correlations. Next, we include this technique into generalized policy iteration to develop a complete RL algorithm.

\noindent\textbf{Policy Evaluation.} 
To implement the pessimistic Bellman operator practically, we maintain $M$ independent critics $\{Q_{\theta_i}\}_{i=1}^{M}$ and their corresponding target networks $\{Q_{\theta'_i}\}_{i=1}^{M}$ in the policy evaluation step. Samples are drawn uniformly from the offline dataset, and the learning objective of each critic $Q_{\theta_i}$ is as follows,
\begin{equation}
\label{eq:q_objective}
\begin{aligned}
\mathcal{L}(Q_{\theta_i}) = \mathbb{E}_{s,a,s',r \sim \mathcal{D}, a' \sim \pi(\cdot \given s')}\left[ \left(Q_{\theta_i}(s,a)-y_i\right)^2 \right],\\
y_{i} = r+ \gamma Q_{\theta'_{i}}(s', a') - \beta u(s, a),
\end{aligned}
\end{equation}
where $\beta$ is a hyperparameter that controls the strength of the uncertainty penalty, and $u(\cdot,\cdot)$ is computed as the standard deviation of  $\{Q_{\theta_i}\}_{i=1}^{M}$.
From the Bayesian perspective, the output of the critic ensemble forms a distribution over the Q-value that approximates its posterior. As a result, the standard deviation quantifies the uncertainty in beliefs, i.e., \eqref{eq:q_objective} implements the $\xi$-uncertainty quantifier $U$ in~\eqref{eq:bellman} using the bootstrapped ensemble method.
Remark that, in the theoretical algorithm for linear MDPs~ \cite{jinPessimismProvablyEfficient2021a}, the uncertainty quantifier and the penalty coefficient are in closed forms. All samples participate in the calculation simultaneously and are used only once, which is almost impossible to implement, especially when dealing with massive datasets collected from complex dynamics. To make the proposed method compatible with large data sets, we optimize the Q networks using mini-batch gradient descent. The samples are used repeatedly in varying order. The uncertainty estimator is unstable at the early stage as it is derived from networks trained from scratch, so penalizing Q with a small quantity (controlled by $\beta$) is preferable. While most existing approaches use the smallest Q-value as the target value to avoid overestimation, \eqref{eq:q_objective} updates each critic $Q_{\theta_i}$ toward its corresponding target network $Q_{\theta'_i}$. As a result, \eqref{eq:q_objective} guarantees temporal consistency and passes the uncertainty over time~\cite{osbandDeepExplorationBootstrapped2016,osbandRandomizedPriorFunctions2018}.

\noindent\textbf{Policy Improvement.} 
Designing an uncertainty-based algorithm for offline RL is a non-trivial task. We attribute the failure of previous work~\cite{fujimotoOffpolicyDeepReinforcement2019,levineOfflineReinforcementLearning2020a,yuCOMBOConservativeOffline2021} to the inability to obtain high-quality uncertainty estimations, which, as pointed out in Section~\ref{subsec:pessimism}, is the key to addressing the false correlation issue. Empirically, the bootstrapped ensemble method alone cannot produce such estimation. To this end, we propose using the following objective function for policy $\pi_\phi$ in the policy improvement step:
\begin{equation}
\label{eq:pi_objective}
\begin{aligned}
\mathcal{L}(\pi_\phi) = \mathbb{E}_{s, a \sim \mathcal{D}}\left[\min_{i}Q_{\theta_i}(s, \pi_\phi( s)) - \lambda \Vert \pi_\phi(s)-a \Vert^{2}_{2}\right].
\end{aligned}
\end{equation}
The behavior cloning loss $\Vert \pi_\phi(s)-a \Vert^{2}_{2}$ functions as a regularization term. While the ensemble networks initially struggle to accurately quantify epistemic uncertainty, this term guides the policy to stay near the dataset distribution. 
It preventss the agent from "exploiting" OOD data to attack the critics. Meanwhile, it helps the critics to be sufficiently familiar with the dataset distribution to deliver high-quality uncertainty estimations.
In particular, we gradually decrease the regularization coefficient $\lambda$ during the training process. The regularizer provides a good initialization at the beginning. Later in the training process, the regularization effect becomes weaker and weaker, and the pessimistic Q-values gradually dominate the policy objective. This way, SCORE returns to a pure uncertainty-based method that reliably implements the pessimism principle, avoiding the bias towards the behavioral policy. 
Alternatively, we can understand this regularizer from the optimization perspective~\cite{guoBatchReinforcementLearning2022}. Directly maximizing the value function is challenging, and behavior cloning lowers the difficulty of the optimization problem at the early stage. As the training process proceeds, the regularization effect decreases, so the objective function gradually returns to the original problem, i.e., maximizing the pessimistic value function. 
The complete algorithm is summarized in Algorithm~\ref{alg:score}.

\subsection{Theoretical Results} \label{subsec:theory}
In this section, we theoretically analyze the proposed algorithm and show that it achives a sublinear rate of convergence. We begin with the notion of regularized MDP. 

\noindent\textbf{Regularized MDP.}
For any behavior policy $\pi_0$, based on the definition of the MDP $\cM = (\cS, \cA, P, R, \gamma, d_0)$, we introduce its regularized counterpart $\cM_{\lambda} = (\cS, \cA, P, R, \gamma, d_0, \lambda)$, where $\lambda$ is the regularization parameter. Specifically, for any policy $\pi$ in $\cM_{\lambda}$, the regularized state-value function $V_\lambda^\pi$ and the regularized action-value function $Q_\lambda^\pi$ are defined as 
\$ 
\begin{aligned}
&V_\lambda^\pi(s) = \EE_{\pi}\Bigl[\sum_{t = 0}^\infty \gamma^t \cdot \bigl( r(s_t, a_t)\\
&\qquad\qquad - \lambda\cdot \log\bigl({\pi(\cdot\given s_t)}/{\pi_0(\cdot \given s_t)}\bigr) \bigr)\Biggiven s_0 = s\Bigr],\\
&Q_\lambda^\pi(s,a) = r(s,a) + \gamma \cdot \EE_{s'\sim P(\cdot \given s, a)} \bigl[V_\lambda^\pi(s')\bigr],\\
&\qquad\qquad\qquad\qquad\qquad\qquad\qquad \forall (s,a) \in \cS\times \cA,
\end{aligned}
\$
respectively. We remark that such a regularization term in the definition of $V_\lambda^\pi$ functions as a behavior cloning term. Throughout the learning process, we anneal the regularization parameter $\lambda$ so that the impact of the behavior cloning term gradually decreases. Formally, for a collection of regularized MDPs $\{\cM_{\lambda_k}\}_{k = 0}^K$, we aim to minimize the suboptimality gap defined as follows,
\#\label{eq:regr}
\regr(K) = \min_{k \in \{0, 1, \ldots, K-1\}} \bigl (V_k^*(s_0) - V_{k}^{\pi_k}(s_0) \bigr ).
\#
Here we denote by $V_k^* = V_{\lambda_k}^{\pi_k^*}$ and $V_{k}^{\pi_k} = V_{\lambda_k}^{\pi_k}$ for notational convenience, where $\pi^*_k \in \argmax_\pi \EE_{s_0 \sim d_0} [ V_{\lambda_k}^\pi(s_0)]$ is an optimal policy for $\cM_{\lambda_k}$. In other words, equation~\ref{eq:regr} measures the suboptimality gap between the best policy $\pi_{k^*}$ and the corresponding optimal policy $\pi_{k^*}^*$ under the regularized MDP $\cM_{\lambda_{k^*}}$, where $k^* = \argmin_{k \in \{0, 1, \ldots, K-1\}}  (V_k^*(s_0) - V_{k}^{\pi_k}(s_0))$. 

\noindent\textbf{Theo-SCORE.} 
For the simplicity of discussion, we introduce a theoretical counterpart of Algorithm~\ref{alg:score} named Theo-SCORE. 
At the $k$-th iteration of Theo-SCORE, with the estimated pessimistic Q-function $Q_k$, the Theo-SCORE objective for the regularized MDP $\cM_{\lambda_k}$ is defined as follows, 
\#\label{eq:dpg-obj}
\cL^k_{\rm SCORE}(\pi) = \EE_{s\sim \cD} \bigl[\lan Q_k(s, \cdot), \pi(\cdot \given s) \ran\notag\\
\hfill - \lambda_k \cdot \kl (\pi(\cdot \given s) \| \pi_0(\cdot \given s) ) \bigr],
\#
where $\cD$ is the static dataset, and the KL divergence corresponds to the behavior cloning term. In the policy improvement step of Theo-SCORE, we employ deterministic policy gradient \cite{silverDeterministicPolicyGradient2014} to maximize \eqref{eq:dpg-obj}. We remark that the objective function in \eqref{eq:dpg-obj} is equivalent to \eqref{eq:pi_objective} under Gaussian policies with the same covariance. While in the policy evaluation step of Theo-SCORE, we assume there exists an oracle that uses the $\xi$-uncertainty quantifier $U(s,a)$ defined in Definition \ref{def:uncertainty} to construct a pessimistic estimator of the Q-function, which is  practically achieved by \eqref{eq:q_objective}, as shown in Section \ref{subsec:algorithm}. 

To better study the convergence of Theo-SCORE, we further introduce a helper algorithm termed offline proximal optimization (OPO). Formally, we consider the linear function parameterization in the $k$-th iteration as follows, 
\#\label{eq:lin-param}
\begin{aligned}
\pi_{\phi_k} \propto \exp( f_{\phi_k}(s,a)),\\
f_{\phi_k}(s,a) = \psi(s,a)^\top \phi_k,\\
Q_k(s,a) = \theta_k(s)^\top a,
\end{aligned}
\#
where $\psi$ and $\theta_k$ are feature vectors, and $f_{\phi_k}$ is the energy function. 
We denote by $\pi_k = \pi_{\phi_k}$ and $f_k = f_{\phi_k}$ for notational convenience. 
With pessimistic Q-function $Q_k$ and current policy $\pi_k$ in the $k$-th iteration, OPO's objective function for the regularized MDP $\cM_{\lambda_k}$ is defined as follows, 
\#\label{eq:ppo-obj}
\cL^k_{\rm OPO}(\phi) = \EE_{s\sim \cD} \Bigl[ \Bigl \lan Q_k(s, \cdot) - \lambda_k \cdot \log \frac{\pi_\phi(\cdot\given s)}{\pi_0(\cdot\given s)}, \pi_{\phi}(\cdot \given s)  \Bigr \ran\notag\\
\hfill - \eta_k\cdot \kl \bigl (\pi_\phi (\cdot\given s) \| \pi_k(\cdot \given s)\bigr ) \Bigr], 
\#
where $\pi_0$ is the behavior policy and $\eta_k$ is the regularization parameter.

\noindent\textbf{Equivalence between Theo-SCORE and OPO.} 
Under the linear function parameterization in~\eqref{eq:lin-param}, we now relate the objective functions in~\eqref{eq:ppo-obj} and~\eqref{eq:dpg-obj} in the following lemma. 
For the sake of simplicity, we define $I_\phi = \EE_{s\sim \cD}[I_\phi(s)]$, where $I_\phi (s)= \var_{a\sim \pi_\phi(\cdot\given s)}[\psi(s,a)]$. 

\begin{lemma}[Equivalence between Theo-SCORE and OPO]\label{lemma:dpg=ppo}
The stationary point $\phi_{k+1}$ of $\cL^k_{\rm OPO}(\phi)$ satisfies
\$
\begin{aligned}
\phi_{k+1} = &\frac{\eta_k \phi_k + \lambda_k \phi_0}{\eta_k + \lambda_k} + (\eta_k + \lambda_k)^{-1} \cdot I_{\phi_{k+1}}^{-1}\\
&\cdot \EE_{s\sim \cD} \bigl[ \nabla_a Q_k(s, \Pi_{\phi_{k+1}}(s)) \nabla_\phi \Pi_{\phi_{k+1}}(s) \bigr], 
\end{aligned}
\$
where $\Pi_\phi(s) = \EE_{a\sim \pi_\phi(\cdot \given s)}[a]$ is the deterministic policy associated with $\pi_\phi$. 
\end{lemma}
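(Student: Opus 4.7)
The plan is to compute the gradient $\nabla_\phi \cL^k_{\rm OPO}(\phi)$ in closed form by exploiting the exponential-family form of $\pi_\phi$ together with the linearity of $Q_k$ in $a$, and then solve the resulting first-order stationarity condition. As a preliminary simplification, I would recognize that the inner product in \eqref{eq:ppo-obj} involving $\log(\pi_\phi/\pi_0)$ is exactly $\lambda_k \kl(\pi_\phi \| \pi_0)$, so the objective reduces to $\cL^k_{\rm OPO}(\phi) = \EE_{s\sim \cD}[\EE_{a\sim \pi_\phi}[Q_k(s,a)] - \lambda_k \kl(\pi_\phi \| \pi_0) - \eta_k \kl(\pi_\phi \| \pi_k)]$.

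For the Q-value term, I would apply the log-derivative trick to get $\nabla_\phi \EE_{a\sim\pi_\phi}[Q_k(s,a)] = \EE_a[Q_k(s,a)(\psi(s,a) - \bar\psi_\phi(s))] = \Cov_a[\psi(s,a), Q_k(s,a)]$, where $\bar\psi_\phi(s) = \EE_{a\sim \pi_\phi}[\psi(s,a)]$. Because $Q_k(s,a) = \theta_k(s)^\top a$ is linear in $a$ with $\nabla_a Q_k(s,\cdot) = \theta_k(s)$, this covariance factors as $\Cov_a[\psi, a]\,\theta_k(s)$, which in turn equals $\nabla_\phi \Pi_\phi(s)\cdot \nabla_a Q_k(s, \Pi_\phi(s))$, matching the form in the lemma. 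For the two KL terms, I would derive the identity $\nabla_\phi \kl(\pi_\phi \| \pi_{\phi'}) = I_\phi(s)(\phi - \phi')$ using the exponential-family formulas $\nabla_\phi \log Z_\phi(s) = \bar\psi_\phi(s)$ and $\nabla_\phi \bar\psi_\phi(s) = I_\phi(s)$; the $\bar\psi_\phi$ contributions from differentiating $(\phi - \phi')^\top \bar\psi_\phi$ and from $-\log Z_\phi$ cancel, leaving the clean linear expression in $\phi$. Applied to $\phi' = \phi_0$ and $\phi' = \phi_k$ respectively, I get the needed two terms.

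Setting $\nabla_\phi \cL^k_{\rm OPO}(\phi_{k+1}) = 0$ and taking the expectation over $s \sim \cD$ then gives
\[
I_{\phi_{k+1}}\bigl[(\eta_k + \lambda_k)\phi_{k+1} - \eta_k \phi_k - \lambda_k \phi_0\bigr] = \EE_{s\sim \cD}\bigl[\nabla_\phi \Pi_{\phi_{k+1}}(s)\,\nabla_a Q_k(s, \Pi_{\phi_{k+1}}(s))\bigr],
\]
and left-multiplying by $(\eta_k + \lambda_k)^{-1} I_{\phi_{k+1}}^{-1}$ produces the claimed fixed-point formula.

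The main obstacle is recasting the raw policy-gradient expression $\Cov_a[\psi, Q_k]$ into the compact product $\nabla_\phi \Pi_\phi(s)\,\nabla_a Q_k(s, \Pi_\phi(s))$; this step is available only because of the linearity of $Q_k$ in $a$ postulated in \eqref{eq:lin-param}, and it requires carefully matching dimensional conventions so that the right-hand side is a genuine $d_\phi$-vector. A secondary subtlety is that the Fisher matrix $I_\phi(s)$ is state-dependent inside the expectation; I would verify that the combined KL gradients both appear as $I_\phi(s)$ times a quantity linear in $\phi$, so that after taking $\EE_{s\sim\cD}$ the state-averaged matrix $I_\phi = \EE_{s\sim \cD}[I_\phi(s)]$ can be inverted once at the very end to give the stated form.
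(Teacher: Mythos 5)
Your proposal is correct and follows essentially the same route as the paper's proof: reduce the objective to an expected $Q$ term plus two KL terms, show $\nabla_\phi \kl(\pi_\phi\|\pi_{\phi'}) = I_\phi(s)(\phi-\phi')$ via the exponential-family identities, and solve the stationarity condition. The only cosmetic difference is in the $Q$-term: the paper first uses linearity to write $\EE_{a\sim\pi_\phi}[Q_k(s,a)] = Q_k(s,\Pi_\phi(s))$ and then applies the chain rule, whereas you apply the score-function identity first and then factor the covariance using linearity — the two computations coincide.
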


Lemma~\ref{lemma:dpg=ppo} states that maximizing the offline proximal objective is equivalent to an implicit natural policy gradient step corresponding to the maximization of the Theo-SCORE objective. As a result, to study the convergence of Theo-SCORE, it suffices to analyze pessimistic OPO. 

\noindent\textbf{Convergence Analysis.}  
For simplicity of presentation, we take the regularization parameter $\lambda_k = \alpha^k$, where $0 < \alpha < 1$ quantifies the speed of annealing.
Recall that we employ pessimism to construct estimated Q-functions $Q_k$ at each iteration $k$, which ensures that there exists a $\xi$-uncertainty quantifier $U(s,a)$ defined in Definition~\ref{def:uncertainty}. Formally, we apply the following assumption on the estimated Q-functions, which can be achieved by a bootstrapped ensemble method as shown in Section~\ref{subsec:algorithm}. 

\begin{assumption}[Pessimistic Q-Functions]\label{ass:pess}
For any $k\in [K]$, $U\colon \cS\times \cA \to \RR$ is a $\xi$-uncertainty quantifier for the estimated Q-function $Q_k$, i.e., the event 
\$
\begin{aligned}
\mathcal{E}_K= \bigl\{ |\widehat{\mathcal{B}} Q_k(s, a) - \mathcal{B} Q_k(s, a)| \leq U(s, a),\\
\hfill \forall (s,a,k)\in \mathcal{S}\times\mathcal{A}\times[K] \bigr \}
\end{aligned}
\$
holds with probability at least $1 - \xi$. 
\end{assumption}

We further define the pessimistic error as follows, 
\#\label{eq:perr}
\perr = \sum_{t = 0}^\infty 2\gamma^t \cdot \EE_{\pi^*}\bigl[ U (s_t, a_t) \given s_0 \bigr].
\#
Such a pessimistic error in \eqref{eq:perr} quantifies the irremovable intrinsic uncertainty \cite{jinPessimismProvablyEfficient2021a}. Now, we introduce our main theoretical result as follows. 

\begin{theorem}\label{thm:main}
Suppose $\lambda_k = \alpha^k$ and $\eta_k + \lambda_k = \sqrt{\zeta / K}$ for any $k \geq 0$, where 
\$
\begin{aligned}
\zeta =  \bigl (1 + \alpha^4(1 - &\alpha)^{-4} \bigr)^2\\
&\cdot \sum_{t = 0}^\infty \gamma^t \cdot \EE_{\pi^*} \bigl[ \kl \bigl( \pi^*(\cdot \given s_t) \| \pi_0(\cdot \given s_t) \bigr) \given s_0 \bigr], 
\end{aligned}
\$
then for OPO with estimated Q-functions satisfying Assumption \ref{ass:pess}, it holds with probability at least $1 - \xi$ that
\$
\regr(K) = O\bigl((1 - \gamma)^{-3} \sqrt{\zeta / K} \bigr) + \perr, 
\$
where $\perr$ is defined in \eqref{eq:perr}.
\end{theorem}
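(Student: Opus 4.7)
The plan is to bound, for every $k$, the instantaneous suboptimality $V_k^*(s_0) - V_k^{\pi_k}(s_0)$ under the regularized MDP $\cM_{\lambda_k}$, and then use that $\regr(K)$ is the minimum over $k$, which is upper-bounded by the arithmetic mean $K^{-1}\sum_{k=0}^{K-1}(V_k^*(s_0) - V_k^{\pi_k}(s_0))$. For each iterate I would generalize the suboptimality decomposition in~\eqref{eq:suboptimality} to the regularized Bellman operator $\cB_{\lambda_k}$ defined on $\cM_{\lambda_k}$, producing three terms: a spurious-correlation term $-\sum_t \gamma^t \EE_{\pi_k}[\iota_k(s_t,a_t)\given s_0]$, an intrinsic-uncertainty term $\sum_t \gamma^t \EE_{\pi^*_k}[\iota_k(s_t,a_t)\given s_0]$, and an optimization-error term depending on $\lan Q_k(s,\cdot), \pi^*_k(\cdot\given s) - \pi_k(\cdot\given s)\ran$ plus a KL penalty. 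Invoking Assumption~\ref{ass:pess} together with the pessimistic construction in~\eqref{eq:bellman} yields $0\le \iota_k(s,a) \le 2U(s,a)$ on the event $\cE_K$, so the spurious term is non-positive and the intrinsic term contributes at most $\perr$, uniformly in $k$, with probability at least $1-\xi$.

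The core step is bounding the optimization error via the equivalence established in Lemma~\ref{lemma:dpg=ppo}. Because OPO's update is the maximizer of the proximal objective in~\eqref{eq:ppo-obj}, it is an instance of mirror descent on the probability simplex with Bregman divergence equal to KL and ``step size'' $1/(\eta_k+\lambda_k)$. I would apply the standard three-point identity for KL, comparing $\pi_{k+1}$ to the comparator $\pi^*_{k}$, to obtain a one-step inequality of the form
\#\label{eq:one-step}
\lan Q_k, \pi^*_k - \pi_{k+1}\ran + \lambda_k\bigl[\kl(\pi_{k+1}\|\pi_0) - \kl(\pi^*_k\|\pi_0)\bigr] \le (\eta_k+\lambda_k)\bigl[\kl(\pi^*_k\|\pi_k) - \kl(\pi^*_k\|\pi_{k+1})\bigr].
\#
Summing~\eqref{eq:one-step} weighted by the discounted occupancy of $\pi^*_k$ and telescoping in $k$ exploits the fact that $\pi^*_k$ changes slowly: because $\lambda_k = \alpha^k$ decays geometrically, $\pi^*_k$ converges and the residual ``shift'' between successive comparators is summable, contributing the $(1-\alpha)^{-4}$ factor inside $\zeta$. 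Together with the performance-difference lemma (applied to $\cM_{\lambda_k}$) this converts the per-step inequality into a bound on $V_k^*(s_0) - V_k^{\pi_k}(s_0)$ involving $(1-\gamma)^{-1}$ from the geometric sum of rewards, a second $(1-\gamma)^{-1}$ from passing from advantages to values, and a third $(1-\gamma)^{-1}$ from the discounted occupancy change-of-measure used to compare $\pi^*_k$ against the sampling distribution.

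Finally I would average over $k=0,\ldots,K-1$ and take the minimum. The telescoping KL terms vanish up to $\kl(\pi^*_0\|\pi_0)$, and the choice $\eta_k+\lambda_k=\sqrt{\zeta/K}$ balances the resulting $O(1/(\eta_k+\lambda_k) \cdot K^{-1})$ divergence term against the $O(\eta_k+\lambda_k)$ drift term, yielding the $O((1-\gamma)^{-3}\sqrt{\zeta/K})$ rate. Combining with the $\perr$ contribution from the intrinsic-uncertainty term gives the stated bound on $\regr(K)$ on the event $\cE_K$, which occurs with probability at least $1-\xi$ by Assumption~\ref{ass:pess}.

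\emph{Expected main obstacle.} The hardest part will be the telescoping argument across comparators $\{\pi^*_k\}_{k\ge 0}$: because the regularization $\lambda_k$ is annealed, the comparator is nonstationary, and a naive telescoping loses control of $\kl(\pi^*_{k+1}\|\pi_{k+1}) - \kl(\pi^*_k\|\pi_{k+1})$. Handling this requires a careful drift bound that exploits the geometric rate of $\lambda_k = \alpha^k$ to ensure the extra terms are summable and absorbed into the constant $\zeta$; this is precisely why the factor $\bigl(1+\alpha^4(1-\alpha)^{-4}\bigr)^2$ appears in the definition of $\zeta$.
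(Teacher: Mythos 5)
Your high-level architecture matches the paper's: bound $\regr(K)$ by the average gap $K^{-1}\sum_k (V_k^*(s_0)-V_k^{\pi_k}(s_0))$, apply a regularized suboptimality decomposition to split each gap into an optimization term plus $\EE_{\pi^*}[\iota_k]-\EE_{\pi_k}[\iota_k]$, invoke Assumption~\ref{ass:pess} to get $0\le\iota_k\le 2U$ (so the $\pi_k$-term is non-positive and the $\pi^*$-term is absorbed into $\perr$), and then run a three-point/mirror-descent analysis of the OPO update with telescoping KL and a step-size balance. Those are exactly the paper's Lemmas on decomposition, pessimism, and policy improvement.

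There is, however, a genuine gap in the core step. First, your displayed one-step inequality is stated as an exact mirror-descent bound with no error term, but the rate is driven precisely by a residual: the paper's policy-improvement lemma reads
\begin{align*}
(\eta_k+\lambda_k)^{-1}\Bigl\lan Q_k - \lambda_k\log\frac{\pi_k}{\pi_0},\, \pi^*-\pi_k\Bigr\ran
&\le \kl(\pi^*\|\pi_k)-\kl(\pi^*\|\pi_{k+1})\\
&\quad+(\eta_k+\lambda_k)^{-2}\bigl(1+\lambda_k\alpha^4(1-\alpha)^{-4}\bigr)^2(1-\gamma)^{-2},
\end{align*}
where the residual comes from Pinsker's inequality combined with H\"older applied to $\lan Q_k-\lambda_k\log(\pi_k/\pi_0),\,\pi_k-\pi_{k+1}\ran$. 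Second, and more substantively, you attribute the factor $\bigl(1+\alpha^4(1-\alpha)^{-4}\bigr)^2$ to drift of a nonstationary comparator $\pi_k^*$, whereas the paper uses a single fixed comparator $\pi^*$ throughout, so no comparator-drift argument appears anywhere. That factor instead arises from an $\ell_\infty$ bound on the accumulated behavior-cloning term: since each OPO update has the closed form $\pi_{k+1}\propto\exp\{(\eta_k+\lambda_k)^{-1}(Q_k+\eta_k f_k+\lambda_k f_0)\}$, the log-ratio $\log(\pi_k/\pi_0)$ unrolls recursively into a $\lambda$-weighted sum of all past Q-functions $\{Q_i\}_{i<k}$, and the geometric annealing $\lambda_k=\alpha^k$ bounds that sum's sup-norm by $\alpha^4(1-\alpha)^{-4}(1-\gamma)^{-1}$. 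Your route, as written, would have to control $\kl(\pi_{k+1}^*\|\pi_{k+1})-\kl(\pi_k^*\|\pi_{k+1})$, which requires continuity of the regularized optimum in $\lambda$ that is neither established nor needed in the paper; and without the residual term and the recursive characterization of $\log(\pi_k/\pi_0)$, neither the $\sqrt{\zeta/K}$ rate nor the specific constant $\zeta$ emerges.
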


Theorem~\ref{thm:main} states that the sequence of policies generated by pessimistic OPO converges sublinearly to an optimal policy in the regularized MDP with an additional pessimistic error term $\perr$. We remark that such an error term $\perr$ is irremovable, as it arises from the information-theoretic lower bound \cite{jinPessimismProvablyEfficient2021a}. 
Moreover, given the equivalence between offline OPO and Theo-SCORE in Lemma~\ref{lemma:dpg=ppo}, we know that Theo-SCORE also converges to an optimal policy under a sublinear rate. For the detailed proofs of the lemma and the theorem, please see Appendix~\ref{app:theory}.

%
%
\section{Related Work}
\label{sec:related_work}
The techniques used in most existing works in offline RL algorithms fall into two categories, namely, policy-constrained methods and value-penalized methods. Policy-constrained methods defend against OOD actions by restricting the hypothesis space of the policy. For example, BCQ~\cite{fujimotoBenchmarkingBatchDeep2019a,fujimotoOffpolicyDeepReinforcement2019} and EMaQ~\cite{ghasemipourEmaqExpectedmaxQlearning2021} exclusively consider actions proposed by the estimated behavioral policy. Alternatively, some methods~\cite{wuBehaviorRegularizedOffline2019a, kumarStabilizingOffpolicyQlearning2019,nairAWACAcceleratingOnline2021,kostrikovOfflineReinforcementLearning2021b,wuUncertaintyWeightedActorCritic2021a} reformulate the policy optimization problem as a constrained optimization problem to keep the learned policy sufficiently close to the behavioral policy. More recently, Fujimoto et al.~\cite{fujimotoMinimalistApproachOffline2021a} proposes a simple yet effective solution by directly using behavioral cloning in policy optimization. In brief, policy-constraint approaches have an intuitive motivation but often struggle to outperform the behavioral policy. While SCORE's regularization term also adopts the policy-constraint technique, it is improved by a decaying factor that avoids an explicit reliance on the behavioral policy.

Alternatively, the value-penalized methods steer the policy towards training distribution by penalizing the value of OOD actions. For example, CQL~\cite{kumarConservativeQLearningOffline2020} explicitly minimizes the action value of OOD data via a value regularization term. MOPO~\cite{yuMOPOModelbasedOffline2020a} and MOReL~\cite{kidambiMorelModelbasedOffline2020} learn pessimistic dynamic models and use model uncertainty as the penalty. However, in a subsequent work~\cite{yuCOMBOConservativeOffline2021}, the authors claim that the model uncertainty for complex dynamics tends to be unreliable and revert to the CQL-type penalty method. Instead of explicitly learning the dynamics and then inferring its uncertainty, SCORE utilizes bootstrapped q-networks to estimate uncertainty, providing reliable estimations. 

Some model-free methods also use uncertainty to construct the penalty term. For example, UWAC~\cite{wuUncertaintyWeightedActorCritic2021a} proposes to use Monte Carlo (MC) dropout to quantify uncertainty and perform uncertainty-weighted updates. This method relies on a strong policy-constrained method~\cite{kumarStabilizingOffpolicyQlearning2019}; more importantly, the dropout method does not converge with increasing data~\cite{osbandRandomizedPriorFunctions2018}. 
EDAC~\cite{anUncertaintyBasedOfflineReinforcement2021a} proposes to use a large number of networks to guarantee diversification so that OOD actions are sufficiently penalized via the traditional min-Q objective. PBRL~\cite{baiPessimisticBootstrappingUncertaintyDriven2022} introduces an OOD sampling procedure similar to CQL and a pseudo target punished by the uncertainty for OOD actions. 
Theoretically, PBRL analyzes the connection between bootstrapped uncertainty and the LCB penalty, but it does not explicitly address the algorithm's convergence rate. In contrast, SCORE is supported by a thorough theoretical analysis showing that it achieves a sublinear convergence rate. Since SCORE does not need to sample OOD data, its convergence result remains valid even without access to the exact value of OOD samples. 

In addition to algorithmic advances, some research works study the detrimental effects of false correlation from the theoretical perspective. To deal with this problem, most existing works impose various assumptions on the sufficient coverage of the dataset, e.g., the ratio between the visitation measure of the target policy and that of the behavior policy to be upper bounded uniformly over the state-action space \cite{farajtabarMoreRobustDoubly2018a, liuBreakingCurseHorizon2018a, xieOptimalOffPolicyEvaluation2019, nachumDualDICEBehaviorAgnosticEstimation2019a, jiangMinimaxValueInterval2020a, duanMinimaxOptimalOffPolicyEvaluation2020, yangOffPolicyEvaluationRegularized2020a, yinNearOptimalProvableUniform2021a, zhangGenDICEGeneralizedOffline2022}, or the concentrability coefficient to be upper bounded \cite{scherrerApproximateModifiedPolicy2015a, chenInformationTheoreticConsiderationsBatch2019a, xieApproximationSchemesBatch2020a, fanTheoreticalAnalysisDeep2020a, xieBatchValuefunctionApproximation2021a, liaoBatchPolicyLearning2022}. 
Until recently, without assuming sufficient coverage of the dataset, Jin et al.~\cite{jinPessimismProvablyEfficient2021a} incorporate pessimism into value iteration to establish a data-dependent upper bound on the suboptimality in episodic MDPs. 
We take a step forward from the theoretical result in~\cite{jinPessimismProvablyEfficient2021a}, extending it to address infinite-horizon regularized MDPs and incorporating the policy optimization process, which bridges the gap between theory and practice. Our algorithm converges to an optimal policy with a sublinear rate, relying on a minimal assumption regarding the compliance of the dataset. Specifically, we assume the data collection process aligns with the MDP of interest.

%
%
\section{Experiments}
\label{sec:experiments}
In this section, we conduct extensive experiments to verify the effectiveness of the propose algorithm. We first present the settings in Section~\ref{subsec:settings}, followed by the comparison results in Section~\ref{subsec:comparison}. Then we visualize and analyze the uncertainty learned by our method in Section~\ref{subsec:uncertainty}. Lastly, we discuss the results of ablation studies and hyperparameter analyzes in Section~\ref{subsec:ablation} and Section~\ref{subsec:hyperparameter} respectively.

\begin{figure}[t]
\begin{center}
\includegraphics[width=0.43\textwidth]{./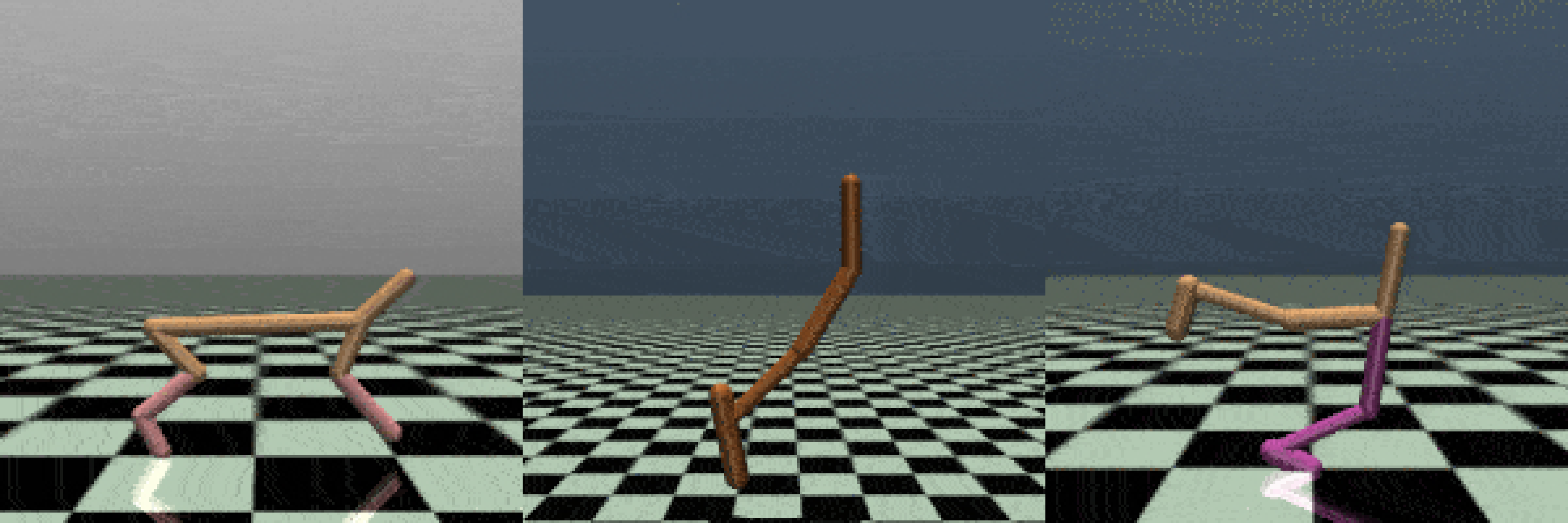}
\end{center}
\caption{An illustration of the robotic tasks: halfcheetah (left), hopper (middle), walker2d(right).}
\label{fig:d4rl_mujoco}
\end{figure}
\subsection{Experimental Settings}
\label{subsec:settings}
We conduct extensive experiments on the widely adopted D4RL~\cite{fuD4RLDatasetsDeep2021} benchmark. 
D4RL provides three robotic tasks (halfcheetah, hopper, and walker2d) shown in \figurename~\ref{fig:d4rl_mujoco}. Each task has five datasets of different qualities:
``random'' is collected using a random initialized policy, ``medium'' is collected using a partially trained policy, and ``expert'' is collected using a well-trained policy. ``medium-replay'' and ``medium-expert'' are derived from a mixture of policies~\footnote{In practice, the behavioral policy is usually unknown and comes in the form of a mixture of multiple policies of different quality.}, with the former containing all the data in the replay buffer of the ``medium-level'' policy and the latter being the product of mixing the ``meidum'' and ``expert'' datasets in equal proportions. In brief, unlike online RL experiments that only test algorithms on different tasks, the offline RL benchmark evaluates algorithms on datasets of different qualities. In this way, we can comprehensively assess the ability of an algorithm to learn effective policies on various training data distributions. There are 15 datasets in total.

We evaluate 8 different baselines of diversified types on the D4RL benchmark, as summarized in \tablename~\ref{tab:characteristics}:
\begin{itemize}
    \item \textbf{BCQ}~\cite{fujimotoOffpolicyDeepReinforcement2019} is a simple yet strong baseline, providing stable performance. It involves training a Variational Auto-Encoder (VAE) and an agent with the actor-critic architecture. In particular, the actor perturbs the actions proposed by the VAE instead of directly output actions, and the critic determines the final action to be performed from a set of candidate actions.
    \item \textbf{BEAR}~\cite{kumarStabilizingOffpolicyQlearning2019} employs a similar architecture to BCQ but directly uses the actor to output actions. The authors propose to use the  Maximum Mean Discrepancy (MMD) distance to constrain the learning process in policy optimization to avoid large differences with the behavioral policy.
    \item \textbf{CQL}~\cite{kumarConservativeQLearningOffline2020} is the state-of-the-art offline RL algorithm. Unlike policy-constrained methods, it incorporates a strong value regularizer into the critic's loss to suppress the value of OOD actions, thus indirectly leading the agent to resist them. Specifically, the OOD data is sampled using the learned policy in practice, which requires additional computational costs.
    \item \textbf{MOPO}~\cite{yuMOPOModelbasedOffline2020a} is a model-based offline RL algorithm which modifies the observed reward by incorporating the maximum standard deviation of the learned models (model uncertainty) as a penalty. The agent is then trained with the penalized rewards.
    \item \textbf{MOReL}~\cite{kidambiMorelModelbasedOffline2020} is also a model-based offline RL algorithm. Instead of the using the maximum standard deviation, MOReL proposes to use the maximum disagreement of the learned models as the penalty.
    \item \textbf{UWAC}~\cite{wuUncertaintyWeightedActorCritic2021a} improves BEAR~\cite{kumarStabilizingOffpolicyQlearning2019} by incorporating MC dropout to estimate the uncertainty of the input sample and weight the loss accordingly.
    \item \textbf{TD3-BC}~\cite{fujimotoMinimalistApproachOffline2021a} is a simple offline reinforcement learning algorithm with minimal modifications to the TD3~\cite{fujimotoAddressingFunctionApproximation2018} algorithm. The authors suggest to use a weighted sum of the critic loss and the behavior cloning loss to update the actor. Despite its simplicity, the method also offers stable and good performance.
    \item \textbf{PBRL}~\cite{baiPessimisticBootstrappingUncertaintyDriven2022} quantifies uncertainty through the disagreements of the bootstrapped Q-networks and performs pessimistic updates. Similar to CQL, PBRL utilizes an OOD sampler to obtain pseudo-OOD samples. In the policy evaluation step, PBRL proposes to introduce an additional pseudo target for these samples and imposes a large uncertainty penalty on them. Inspired by~\cite{osbandRandomizedPriorFunctions2018}, PBRL-prior constructs a set of random prior networks of the same size as the Q-networks on top of PBRL. Empirically, the performance is remarkably improved by incorporating these prior networks. Therefore, we report the performance of the PBRL-prior version in the experiments.
\end{itemize}

We remark that the results reported in the D4RL white paper~\cite{fuD4RLDatasetsDeep2021} are based on the ``v0" version of D4RL, and most previous work reuse those results for comparison. However, the ``v0" version has some errors that may lead to wrong conclusions and the authors release a ``v2" version later to correct the errors. To this end, we rerun all the baseline methods on the ``v2'' version and report the new results.

\begin{table}[t]
\centering
\caption{\footnotesize Characteristics of recent offline RL algorithms. The notations `P', `V', `U', 'O', and `M' represent `Policy-constrained', `Value-penalized', `Uncertainty-aware', 'OOD sampling' and `Model-based' respectively.}
\label{tab:characteristics}
\resizebox{0.3\textwidth}{!}{
\begin{tabular}{llllll}
\toprule
{} & P & V & U & O & M\\ 
\midrule
BCQ~\cite{fujimotoOffpolicyDeepReinforcement2019}  & $\surd$ & $\times$ & $\times$ & $\times$ & $\times$\\
BEAR~\cite{kumarStabilizingOffpolicyQlearning2019} & $\surd$ & $\times$ & $\times$ & $\times$ & $\times$\\
MOPO~\cite{yuMOPOModelbasedOffline2020a}            & $\times$ & $\surd$ & $\surd$ & $\times$ & $\surd$\\
MOReL~\cite{kidambiMorelModelbasedOffline2020}     & $\times$ & $\surd$ & $\surd$ & $\times$ & $\surd$\\
CQL~\cite{kumarConservativeQLearningOffline2020}    & $\times$ & $\surd$ & $\times$ & $\surd$ & $\times$\\
UWAC~\cite{wuUncertaintyWeightedActorCritic2021a}  & $\surd$ & $\times$ & $\surd$ & $\times$ & $\times$\\
TD3-BC~\cite{fujimotoMinimalistApproachOffline2021a} & $\surd$ & $\times$ & $\times$ & $\times$ & $\times$\\
PBRL~\cite{baiPessimisticBootstrappingUncertaintyDriven2022} & $\times$ & $\surd$ & $\surd$ & $\surd$ & $\times$\\
SCORE (Ours) & $\surd$ & $\surd$ & $\surd$ & $\times$ & $\times$\\
\bottomrule
\end{tabular}}
\end{table}
All the experiments are run on a single NVIDIA Quadro RTX 6000 GPU. To guarantee a fair comparison, we conducted the experiments under the same experimental protocol. The whole training process has a total of 1 million gradient steps, which are divided into 1000 epochs. At the end of each epoch, we run 10 episodes for evaluation. To reduce the effect of randomness on the experimental results, all experiments are run with 5 independent random seeds while keeping other factors unchanged.

\begin{figure*}[!t]
\begin{center}
\includegraphics[width=0.95\linewidth]{./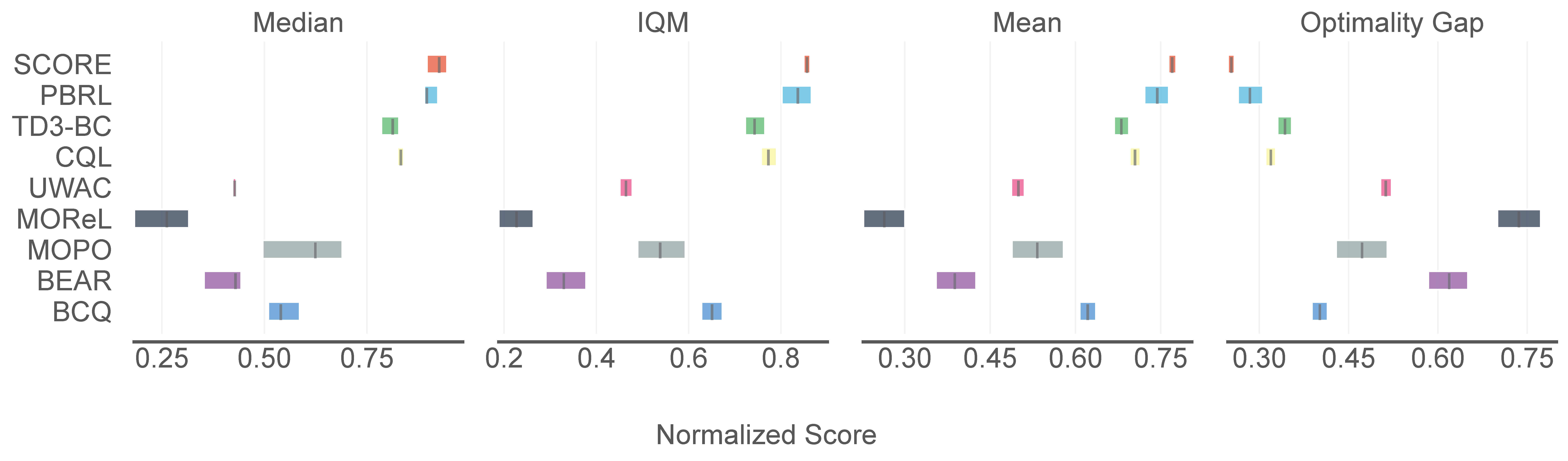}
\end{center}
\vskip -0.2in
\caption{\textbf{Aggregate metrics on D4RL-MuJoCo} with 95\% CIs based on 15 tasks. Higher mean, median, and interquartile mean (IQM) scores and lower optimality gap are better. The CIs are estimated using the percentile bootstrap with stratified sampling. IQM typically results in smaller CIs than median scores. All results are based on 5 runs (with different random seeds) per task.} 
\label{fig:d4rl_mujoco_aggregates}
\vskip -0.2in
\end{figure*}
\begin{figure}[t]
\begin{center}
\includegraphics[width=0.43\textwidth]{./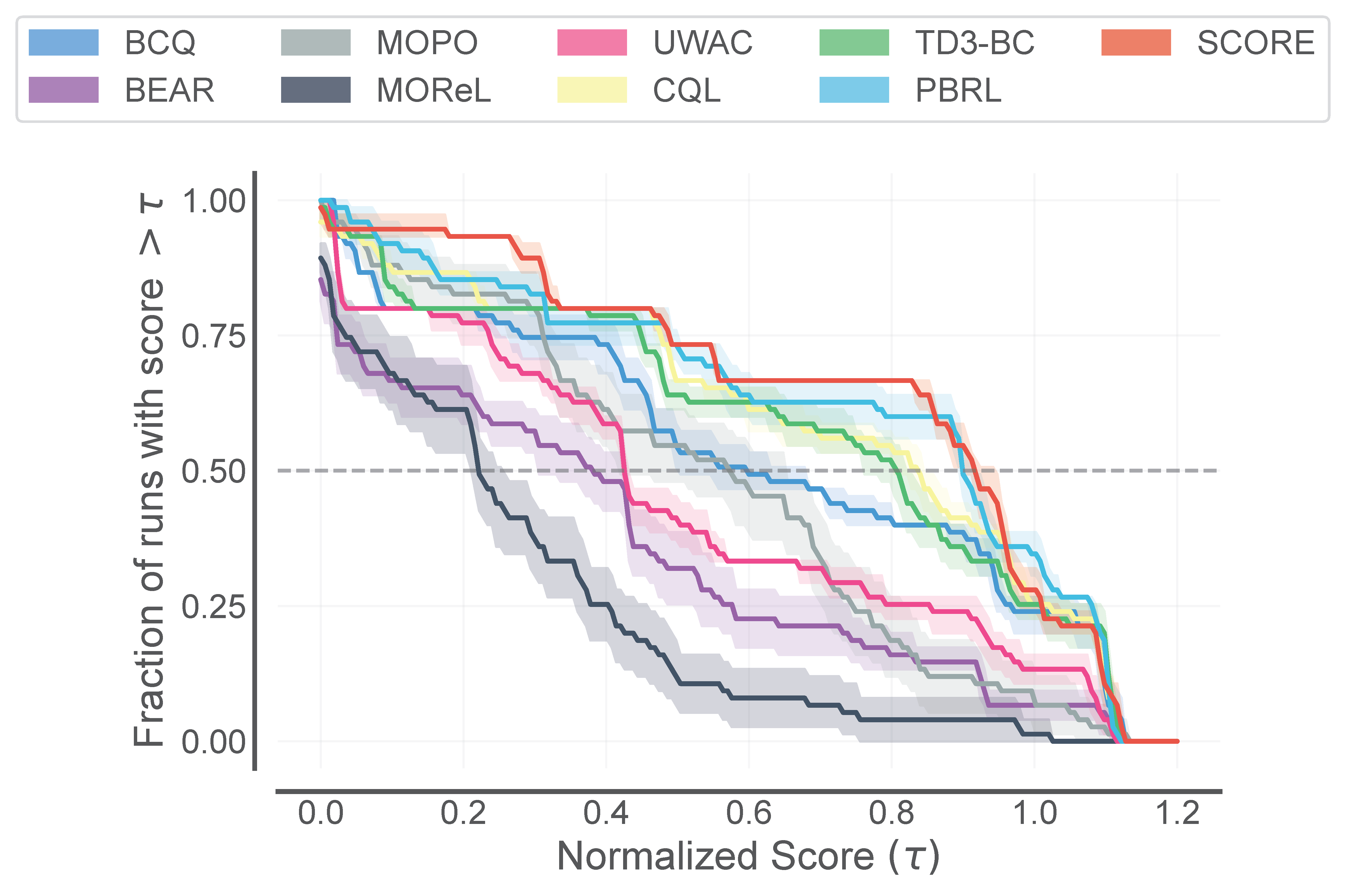}
\end{center}
\vskip -0.2in
\caption{\footnotesize \textbf{Performance profiles on D4RL-MuJoCo}. The distributions are computed using normalized scores obtained after training for 1M steps. Shaded regions show 95\% percentile stratified bootstrap CIs. The larger the area under the curve, the better.}
\label{fig:d4rl_mujoco_performance_profiles}
\end{figure}
We specify both the actor network and the critic network to be a three-layer neural network with 256 neurons per layer. The first two layers of both networks use the ReLU activation function. In particular, the last layer of the actor network uses the tanh activation function for outputting actions. Both networks have a separated Adam optimizer and the learning rate is 3e-4. Some baselines differ in settings and have additional hyperparameters. We use the official code and the suggested settings. When the default parameters do not perform well, we use grid search to find the optimal hyperparameters for different datasets.

\subsection{Comparison Experiments}
\label{subsec:comparison}
As suggested by the NeurIPS 2021 outstanding paper~\cite{agarwalDeepReinforcementLearning2021}, we analyze the experimental results via the reliable evaluation suite~\footnote{\url{https://github.com/google-research/rliable}}, which reports the interval estimations of the overall performance instead of the point estimations. As a result, it can effectively prevent the researcher from making unreliable conclusions. We present all four aggregate metrics with the corresponding confidence intervals. Specifically, the IQM score is regarded as a more robust aggregate metric than the commonly used mean score because it is robust to outlier scores and is more statistically efficient. 
The detailed numerical results are available in the Appendix~\ref{app:tables}. According to \figurename~\ref{fig:d4rl_mujoco_aggregates} and \figurename~\ref{fig:d4rl_mujoco_performance_profiles}, the two best-performing baselines are CQL and PBRL, while SCORE performs the best consistently in all four metrics. Unlike CQL, SCORE and PBRL differentiate the action-value using uncertainty, allowing refined value penalty and better performance. While PBRL is motivated by defending against OOD actions and reducing extrapolation errors. The uncertainty penalty does more than that, as shown in Section
~\ref{sec:preliminaries}. PBRL heavily relies on the OOD data sampling procedure during the learning process, which is quite time-consuming. On the other hand, SCORE identifies the core issue as the inability to obtain high-quality uncertainty estimations. We propose using a simple regularizer to effectively address this issue, enabling a reliable implementation of the pessimism principle. According to \tablename~\ref{tab:runtime}, SCORE has a much lower computational cost than CQL and PBRL due to its simplicity. By removing the OOD samplers from CQL and PBRL, SCORE significantly accelerates the training process without sacrificing the performance.

\begin{table}[t]
\vskip -0.1in
\centering
\caption{\footnotesize Comparison of the computational costs of the three best-performing methods.}
\label{tab:runtime}
\resizebox{0.38\textwidth}{!}{
\begin{tabular}{llll}
\toprule
{} & CQL & PBRL & SCORE\\ 
\midrule
Num of parameters & 0.69M & 5.66M & 0.87M\\
Run time per epoch & 32$\sim$35s & 52$\sim$55s & 16$\sim$18s\\
\bottomrule
\end{tabular}}
\end{table}
Empirically, SCORE is particularly effective on low- and medium-quality datasets. While the datasets do not satisfy the sufficient coverage assumption since the behavioral policies in these settings are highly suboptimal, they still include some positive behaviors, so it is possible to learn better policies than the behavioral ones. The key to mastering these good behaviors is to avoid suboptimal actions that lead to high returns by chance. While many existing methods focus on the extrapolation errors caused by OOD actions, SCORE strives to address false correlations, a border issue, offering further improvements.

\begin{figure}[t]
\centering
\subfloat[halfcheetah]{
\label{subfig:uncertainty_halfcheetah}
\includegraphics[width=0.15\textwidth]{./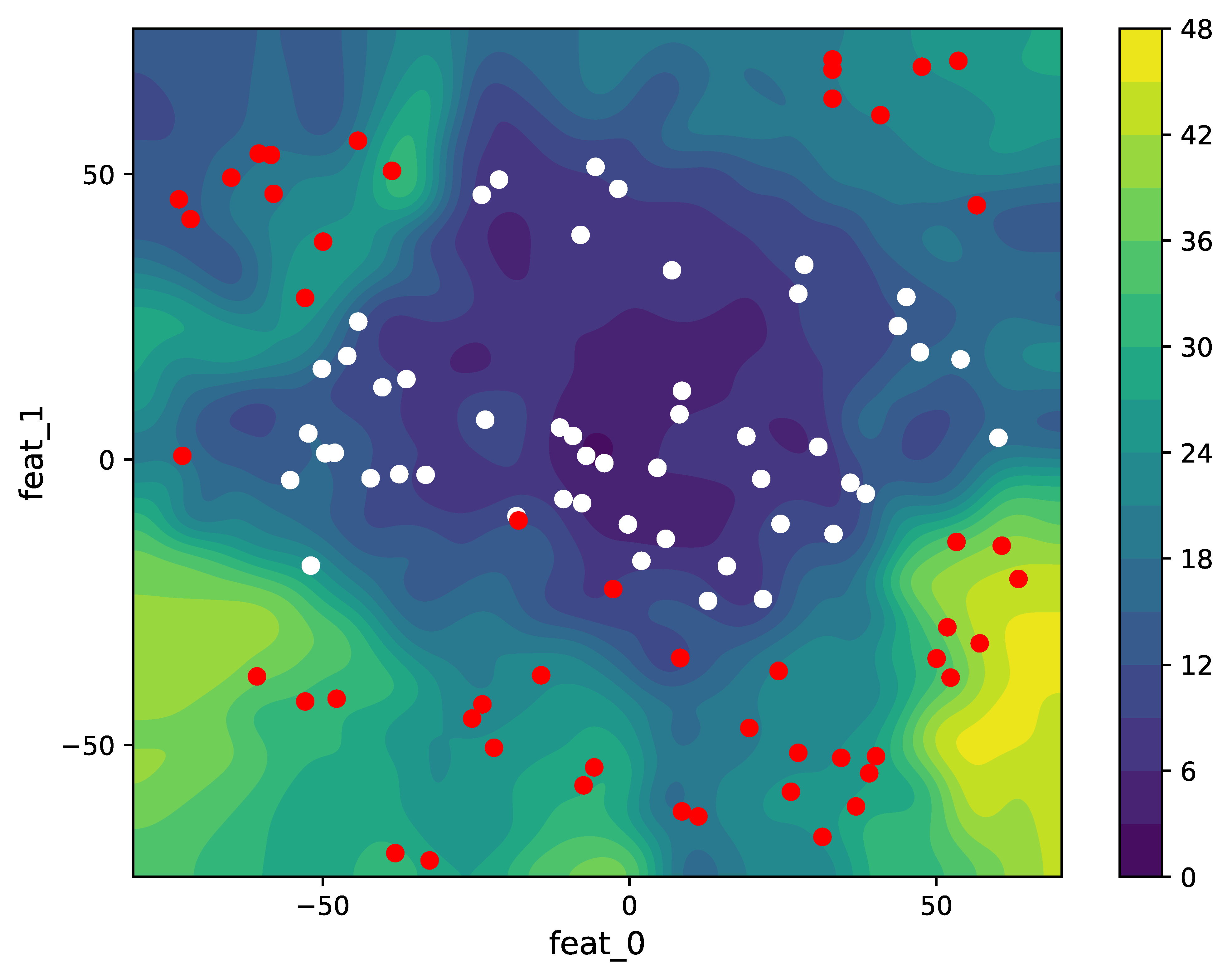}}
\subfloat[hopper]{
\label{subfig:uncertainty_hopper}
\includegraphics[width=0.15\textwidth]{./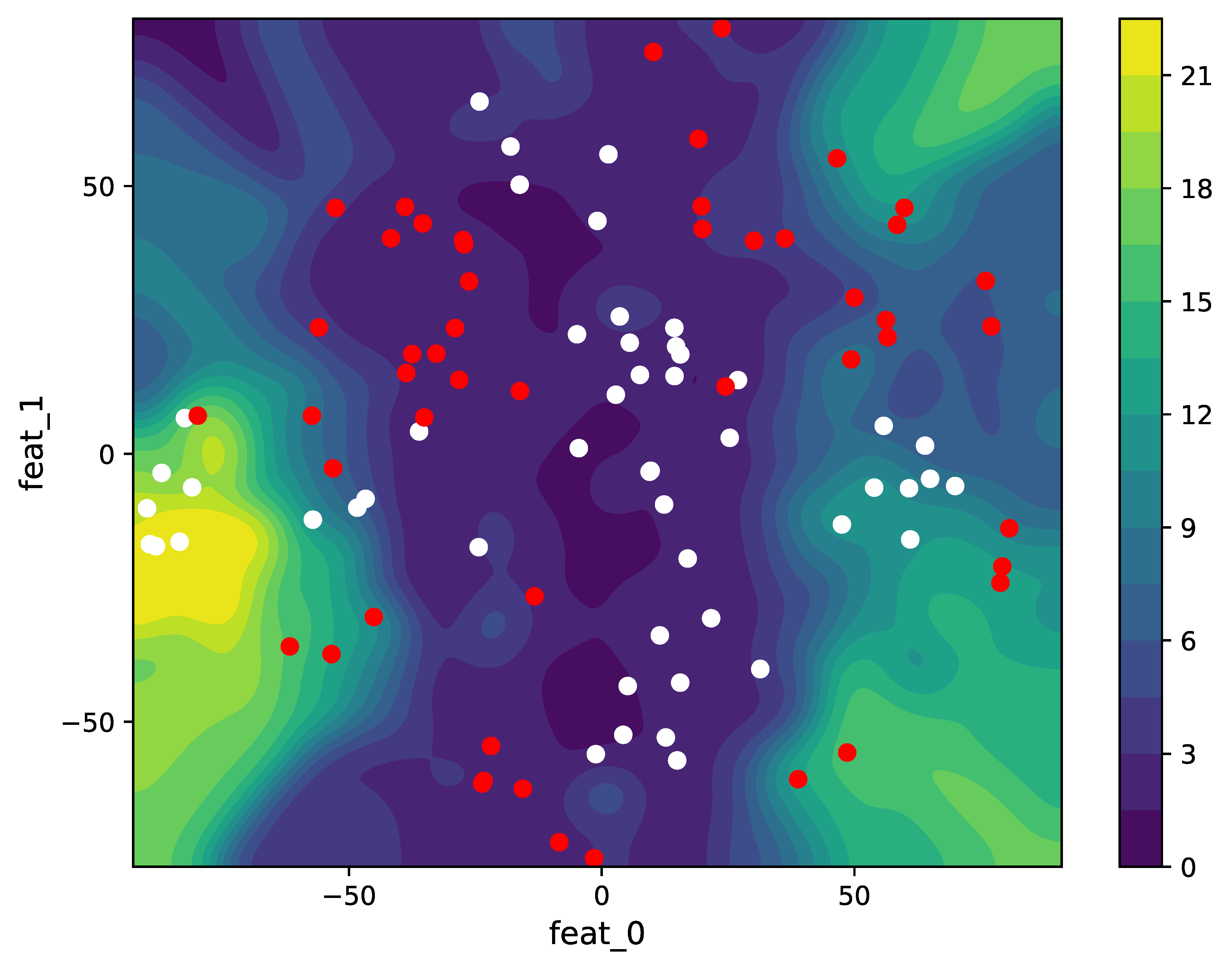}}
\subfloat[walker2d]{
\label{subfig:uncertainty_walker2d}
\includegraphics[width=0.15\textwidth]{./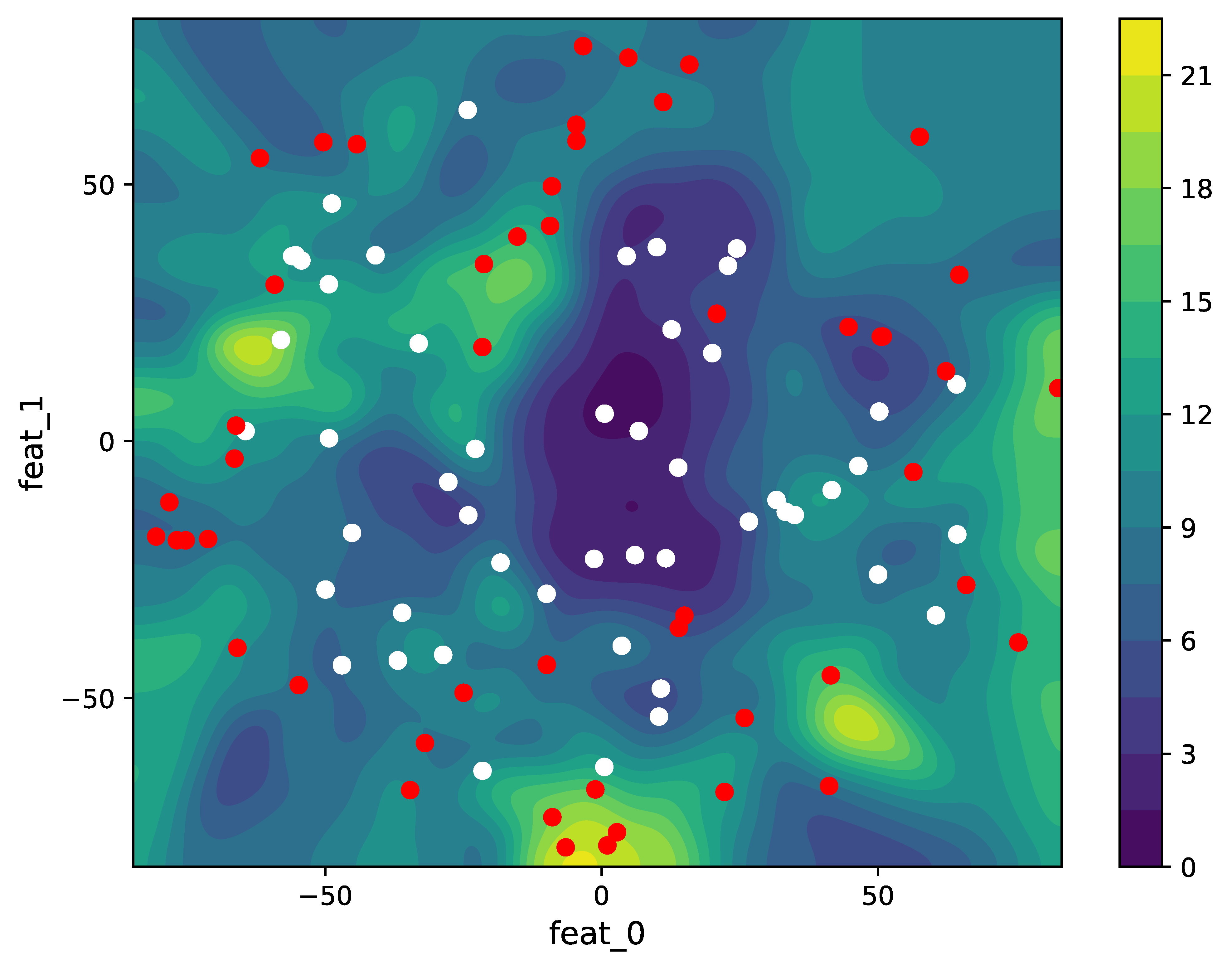}}
\vskip -0.12in
\caption{\footnotesize Uncertainty estimation of in-distribution (white dots) and OOD (red dots) samples.}
\label{fig:uncertainty}
\end{figure}
\subsection{Visualization and Analysis of Uncertainty}
\label{subsec:uncertainty}
To gain a better insight into SCORE, we visualize the learned uncertainty. Specifically, we apply the Q functions trained on the medium-replay dataset to quantify uncertainty. The in-distribution samples are drawn from the medium-replay dataset, and the OOD samples are from the expert dataset. For visualization purposes, we reduce the features of these samples to two dimensions using t-distributed stochastic neighbor embedding (t-SNE). \figurename~\ref{fig:uncertainty} shows the contour plot of the uncertainty on the two-dimensional feature space, in which the white dots denote in-distribution samples and the red dots correspond to OOD samples.

\begin{figure}[t]
\begin{center}
\includegraphics[width=0.43\textwidth]{./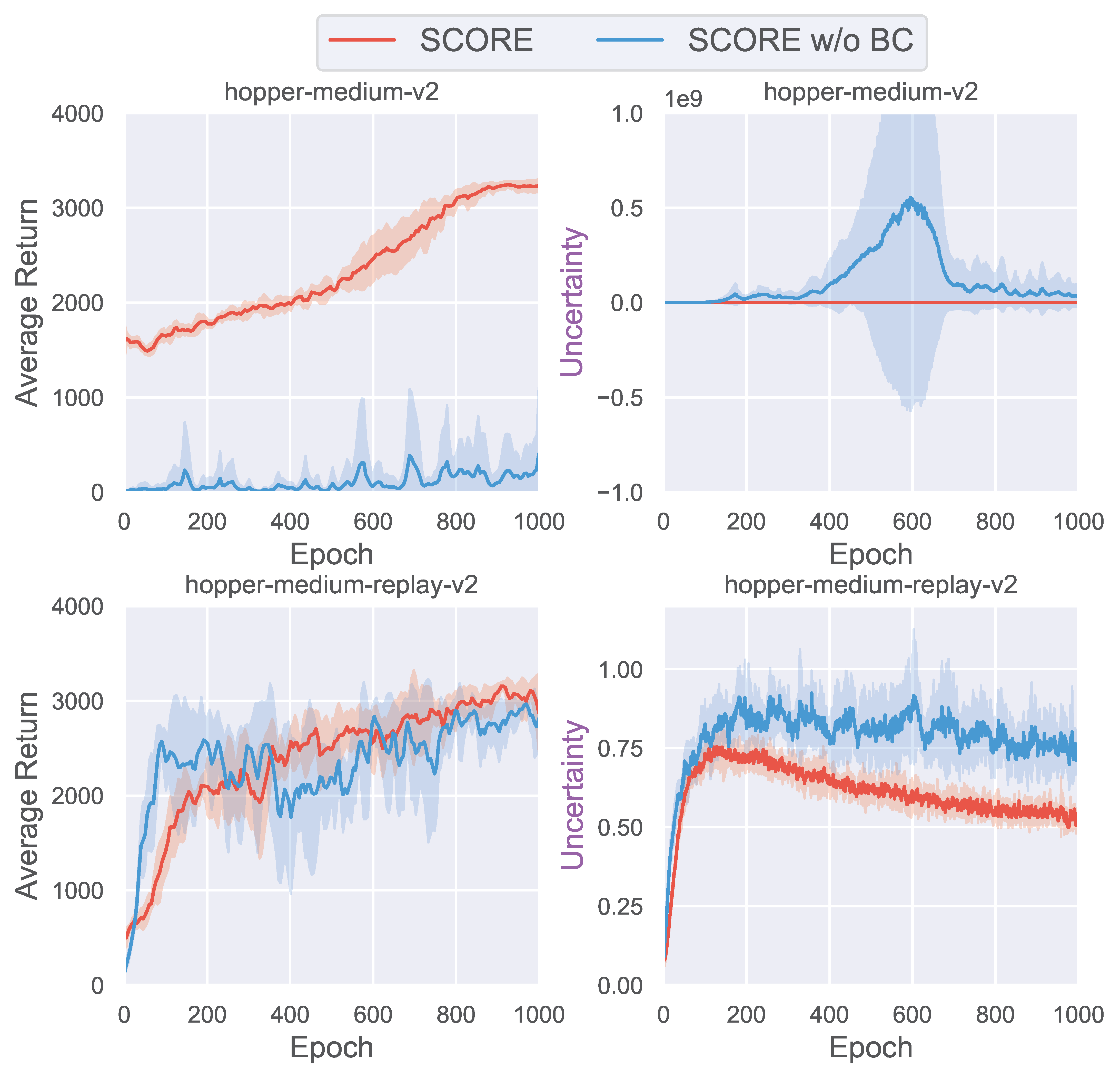}
\end{center}
\vskip -0.2in
\caption{The average return and the uncertainty of SCORE with or without the proposed regularizer.}
\label{fig:ablation_bc}
\end{figure}
At first glance, the in-distribution samples (white) are more concentrated in regions with low uncertainty (the dark regions), while the OOD samples (red) loosely distribute in regions with higher uncertainty (the bright regions). We remark that the in-distribution and OOD samples are more easily distinguishable on halfcheetah, while the opposite holds for hopper and walker2d. This phenomenon surprisingly matches the performance observed in the experiments. 
On halfcheetah, the performance on the medium-replay dataset is substantially lower than on the expert dataset, while it is much closer on hopper and walker2d. We suggest that this phenomenon reflects the property of the dataset, i.e., the medium-replay datasets of hopper and walker2d may have better coverage of the state-action pairs induced by the expert policy. Thus, algorithms are more likely to learn high-level policies from these two medium-quality datasets.

\subsection{Ablation Studies}
\label{subsec:ablation}
To better understand the effects of the proposed regularizer and the uncertainty penalty, we conduct experiments that remove or replace them with other alternatives in this section.

\begin{figure}[t]
\begin{center}
\includegraphics[width=0.43\textwidth]{./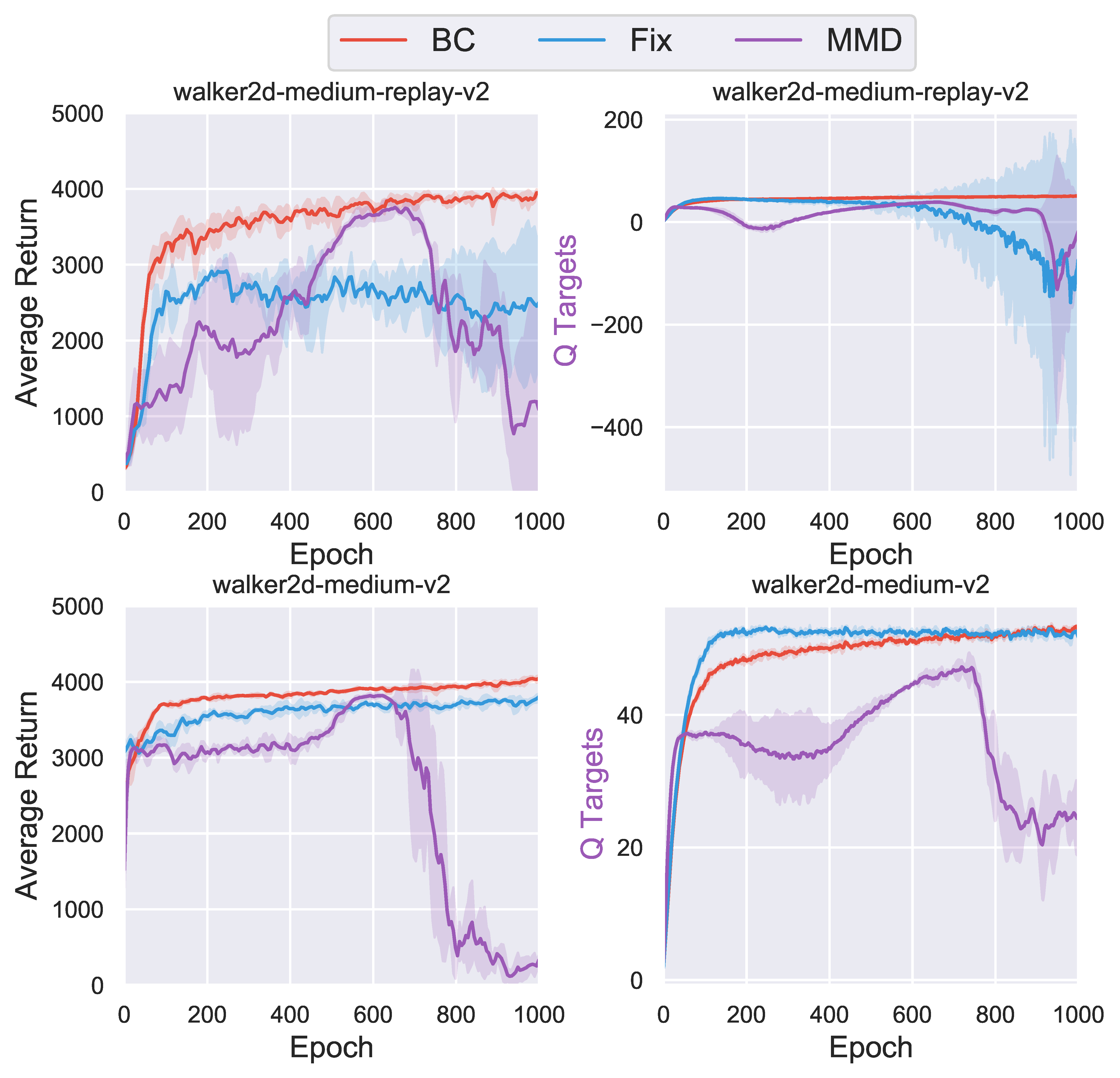}
\end{center}
\vskip -0.2in
\caption{The average return and the Q target of SCORE using different types of policy constraints. "BC" stands for our method, "Fix" uses a VAE to fit the behavior policy and then use it to propose actions (similar to BCQ~\cite{fujimotoOffpolicyDeepReinforcement2019}), and "MMD" constrains the learned policy to stay close to the estimated behavior policy under MMD (similar to BEAR~\cite{kumarStabilizingOffpolicyQlearning2019}).}
\label{fig:ablation_pc}
\end{figure}
\noindent \textbf{Regularization}. As pointed out in previous studies~\cite{fujimotoOffpolicyDeepReinforcement2019,levineOfflineReinforcementLearning2020a}, uncertainty-based methods empirically fail in offline RL. Moreover, estimating calibrated uncertainty for neural networks is a challenging task. \figurename~\ref{fig:ablation_bc} shows the ablation of the proposed BC regularizer. We first note that it plays a crucial role on narrow dataset distributions, e.g., the hopper-medium dataset collected by a single policy. Without this regularizer, the estimated uncertainty climbs rapidly at the beginning of the training process and then keeps fluctuating dramatically between enormous values. Meanwhile, we observe a remarkably different pattern when using the regularizer, with uncertainty tending to stabilize and slowly decrease after an initial rising phase. Even when the data are sufficiently diverse (less likely to suffer from OOD queries), e.g., the hopper-medium-replay dataset, regularization still helps stabilize uncertainty. These results reveal that the proposed regularizer contributes to reliable uncertainty estimations. Further, such high-quality uncertainty estimations drive SCORE to reduce false correlations and achieve superior performance.

Next, we investigate the effects of different policy constraint methods. In addition to using behavioral cloning, we study 1) using a VAE to fit the behavioral policy and propose actions, and 2) using a VAE to fit the behavioral policy and then constrain the learned policy to stay close to it during the policy optimization process. \figurename~\ref{fig:ablation_pc} reports the results. We can see that fitting a behavioral policy is usually a bad idea when the dataset is collected by a mixture of policies, e.g., the medium-replay datasets, since these policies sometimes conflict. As a result, proposing actions or constructing constraints based on the fitted behavioral policy may lead to suboptimal behavior. Even on datasets collected by a single policy, e.g., the medium datasets, using the fitted behavior policy is undesirable as it causes the agent to prematurely converge on suboptimal behavior. Overall, we find the proposed BC regularizer is simple yet effective.

\begin{figure}[t]
\begin{center}
\includegraphics[width=0.43\textwidth]{./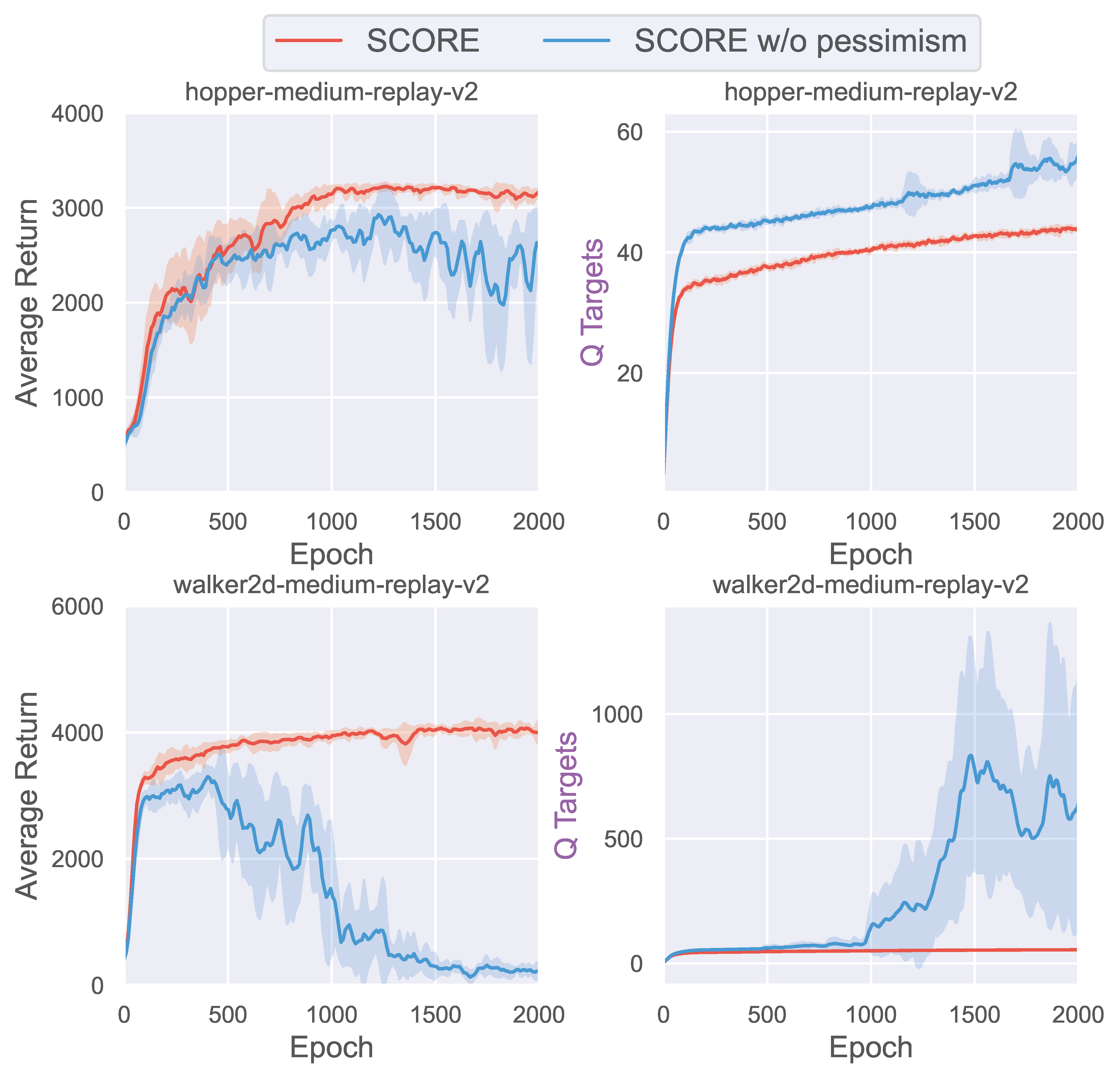}
\end{center}
\vskip -0.2in
\caption{The average return and the Q target of SCORE with and without the uncertainty penalty.}
\label{fig:ablation_pe}
\end{figure}
\begin{figure}[t]
\begin{center}
\includegraphics[width=0.43\textwidth]{./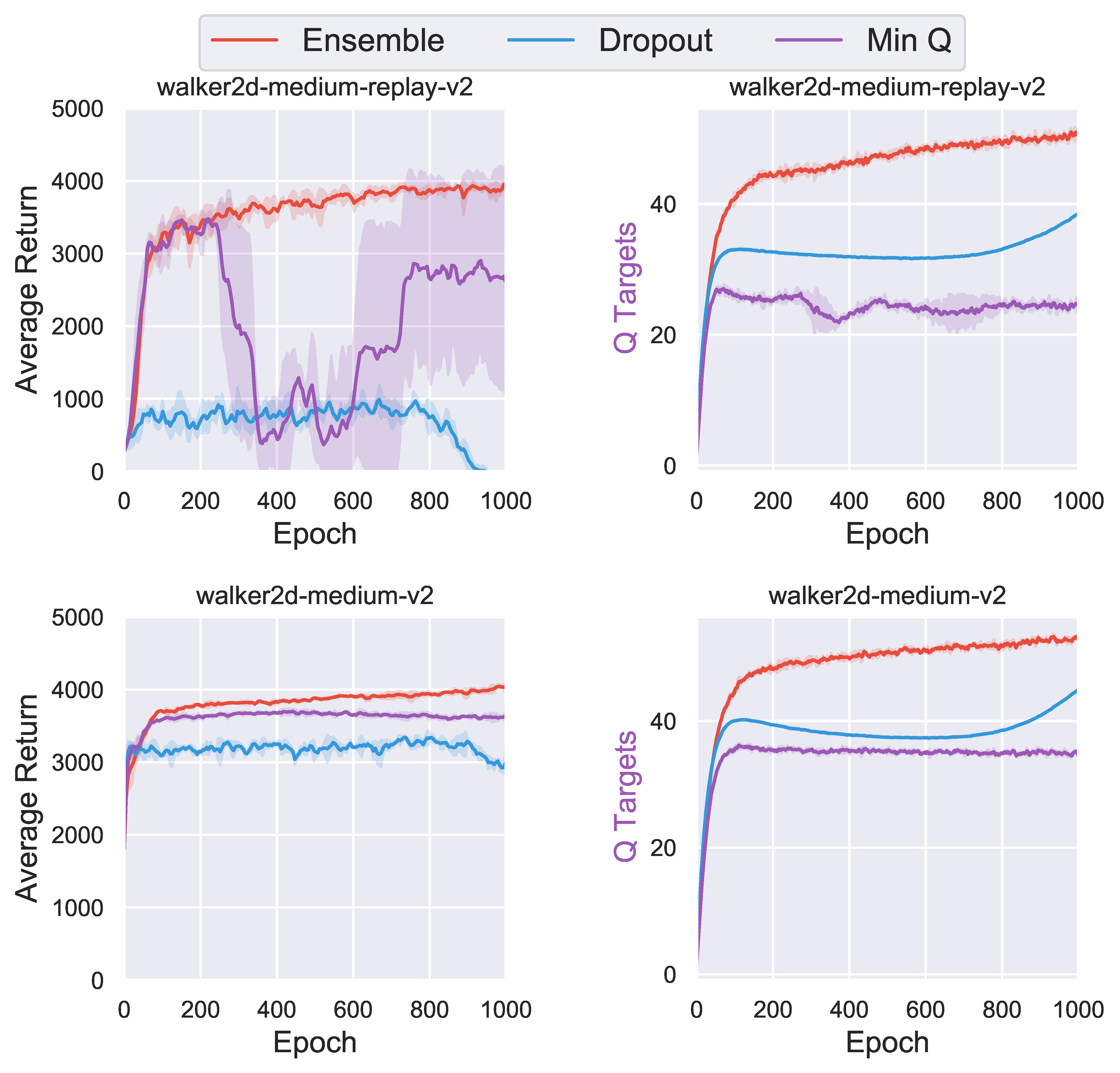}
\end{center}
\vskip -0.2in
\caption{The average return and the Q target of SCORE using different types of uncertainty qualification techniques. "Ensemble" stands for our method, "Dropout" means MC dropout, and "Min Q" ensures pessimism by using the smallest Q estimation.}
\label{fig:ablation_uq}
\end{figure}
\noindent \textbf{Pessimism}. In theory, false correlation is a critical factor that induces suboptimality in offline reinforcement learning, and pessimism in the face of uncertainty is a provably efficient solution to eliminate false correlations. \figurename~\ref{fig:ablation_pe} shows the difference between SCORE with and without pessimism. On the hopper-medium-replay dataset, removing the uncertainty penalty affects the stability of the training process and the convergence level. On the walker2d-medium-replay dataset, this even causes severe degradation. Through the curves of Q-values, we find that the performance drop when there is a jitter or explosion in Q-values. The agent without pessimism underperforms on both datasets and shows high Q-values. This is consistent with the intuition we established in Section~\ref{subsec:example}, i.e., an agent without knowledge of uncertainty is easily misled by false correlations, which makes the agent believe suboptimal actions have higher value and therefore acquire suboptimal policies. Overall, these results suggest that the regularizer alone cannot effectively solve the offline RL problem and that the pessimism principle is indispensable.

We also test other methods for quantifying uncertainty. SCORE quantifies uncertainty based on the bootstrapped ensemble method and uses it to construct the penalty term. Alternatively, one can use the MC dropout method to quantify uncertainty. In addition, one of the simplest ways to achieve pessimism is to use the minimum Q-value of multiple critics as the target. \figurename~\ref{fig:ablation_uq} illustrates the experimental results. We can see that both alternatives are inferior to the ensemble method, and they perform poorly on the medium-replay dataset generated by a mixture of multiple policies. As discussed in~\cite{osbandRandomizedPriorFunctions2018}, the dropout distribution does not concentrate with more observed data. Therefore, MC dropout fails to provide the epistemic uncertainty needed to eliminate false correlations in offline RL. As for the Min Q method, it generally needs more networks (accompanied by higher computation cost) and additional techniques to guarantee the diversity of the critics~\cite{anUncertaintyBasedOfflineReinforcement2021a}; otherwise, the multiple critics trained using the same target values may degenerate and fail to perceive uncertainty. Overall, the bootstrapped ensemble method performs well in measuring epistemic uncertainty and ensures SCORE's strong performance.

\begin{figure}[t]
\begin{center}
\includegraphics[width=0.48\textwidth]{./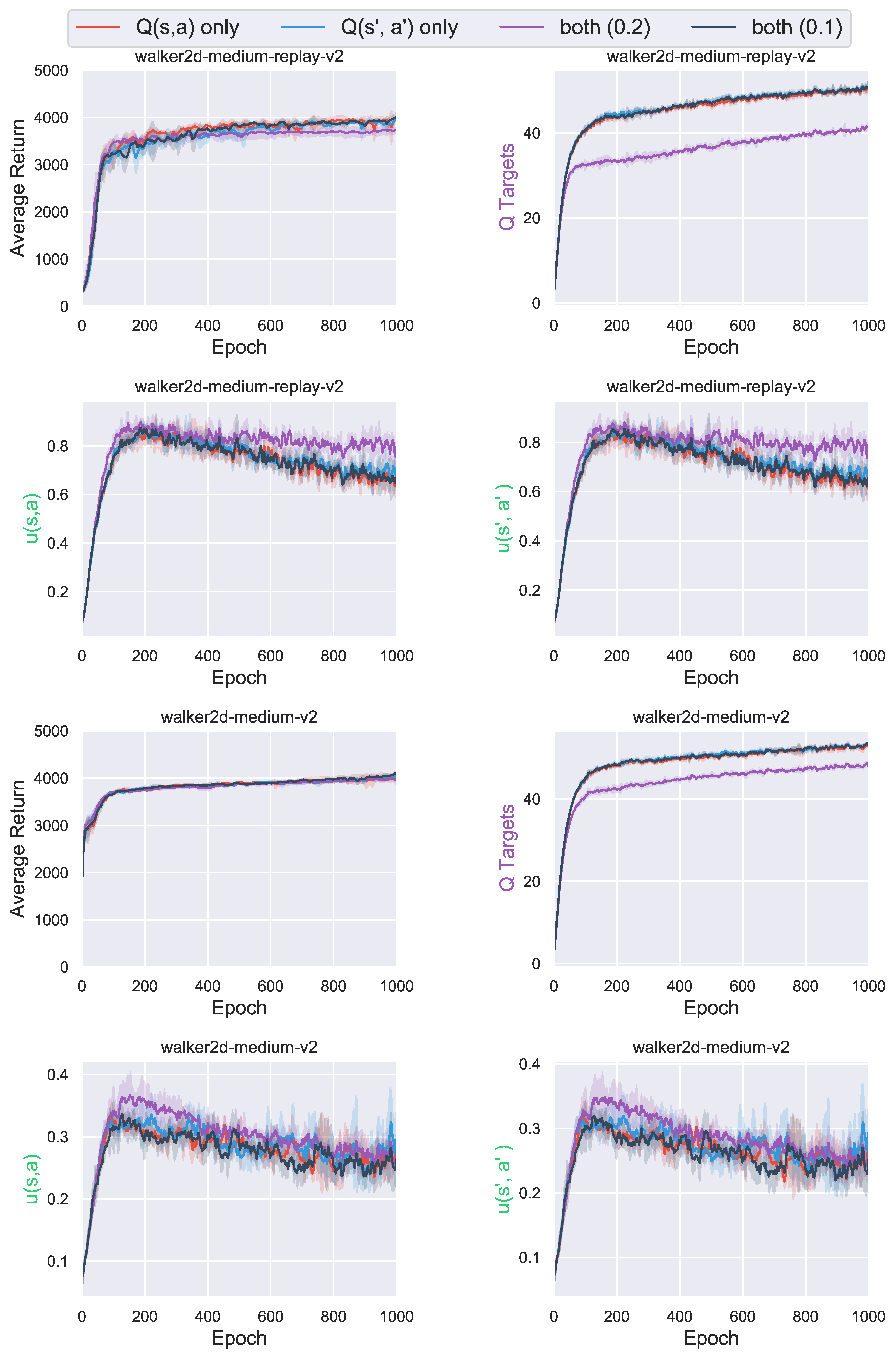}
\end{center}
\vskip -0.2in
\caption{The average return and the Q target of SCORE with different types of penalties. "$Q(s,a)$ only" and "$Q(s',a')$ only" mean the uncertainty penalty is computed only using $Q(s,a)$ or $Q(s',a')$, respectively ($\beta=0.2$). "both ($\beta$)" means the two penalty terms are used simultaneously. "both (0.1)" controls the overall strength penalty to be the same as "$Q(s,a)$ only" and "$Q(s',a')$ only", while "both (0.2)" doubles the strength. }
\label{fig:hyperparameter_penalty}
\end{figure}
\noindent \textbf{Penalty}. Remark that Equation~\ref{eq:q_objective} uses $u(s,a)$ for penalization. We note that researchers tried different types of penalties in previous work~\cite{baiPessimisticBootstrappingUncertaintyDriven2022}, including $u(s',a')$ or the combination of $u(s,a)$ and $u(s',a')$. Empirically, PBRL~\cite{baiPessimisticBootstrappingUncertaintyDriven2022} works best with $u(s',a')$. 
We conduct a similar experiment on SCORE to further study the penalty term. \figurename~\ref{fig:hyperparameter_penalty} demonstrates that SCORE is robust to different types of penalties, showing similar performance. 
This result motivates us to further analyze the difference between $u(s,a)$ and $u(s',a')$. $u(s,a)$ is computed based on $Q(s,a)$, which can further decompose as $r(s,a)+\mathbb{E}_{s'} \max_{a'} Q(s', a')$. 
We note that the Mujoco robotic tasks use a deterministic environment, meaning there is no missing information in $r(s,a)$ and $s'$. 
Therefore, if the agent is well-trained, the uncertainty $u(s,a)$ should depend only on $Q(s',a')$, which relates it to $u(s',a')$. This explains why ``$Q(s,a)$ only", ``$Q(s',a')$ only", and ``both (0.1)" show similar performance, Q values, and even uncertainty. 
We suggest that $u(s',a')$ works the best in~\cite{baiPessimisticBootstrappingUncertaintyDriven2022} because it directly fixes the uncertainty of $r(s,a)$ and $s'$ to be zero, avoiding the noise that may affect the training process. Overall, these experimental results support that SCORE obtains reliable uncertainty. 
The estimated uncertainty captures the long-term effects of the decisions; otherwise, it would equal zero in a deterministic environment.

\begin{figure}[t]
\begin{center}
\includegraphics[width=0.43\textwidth]{./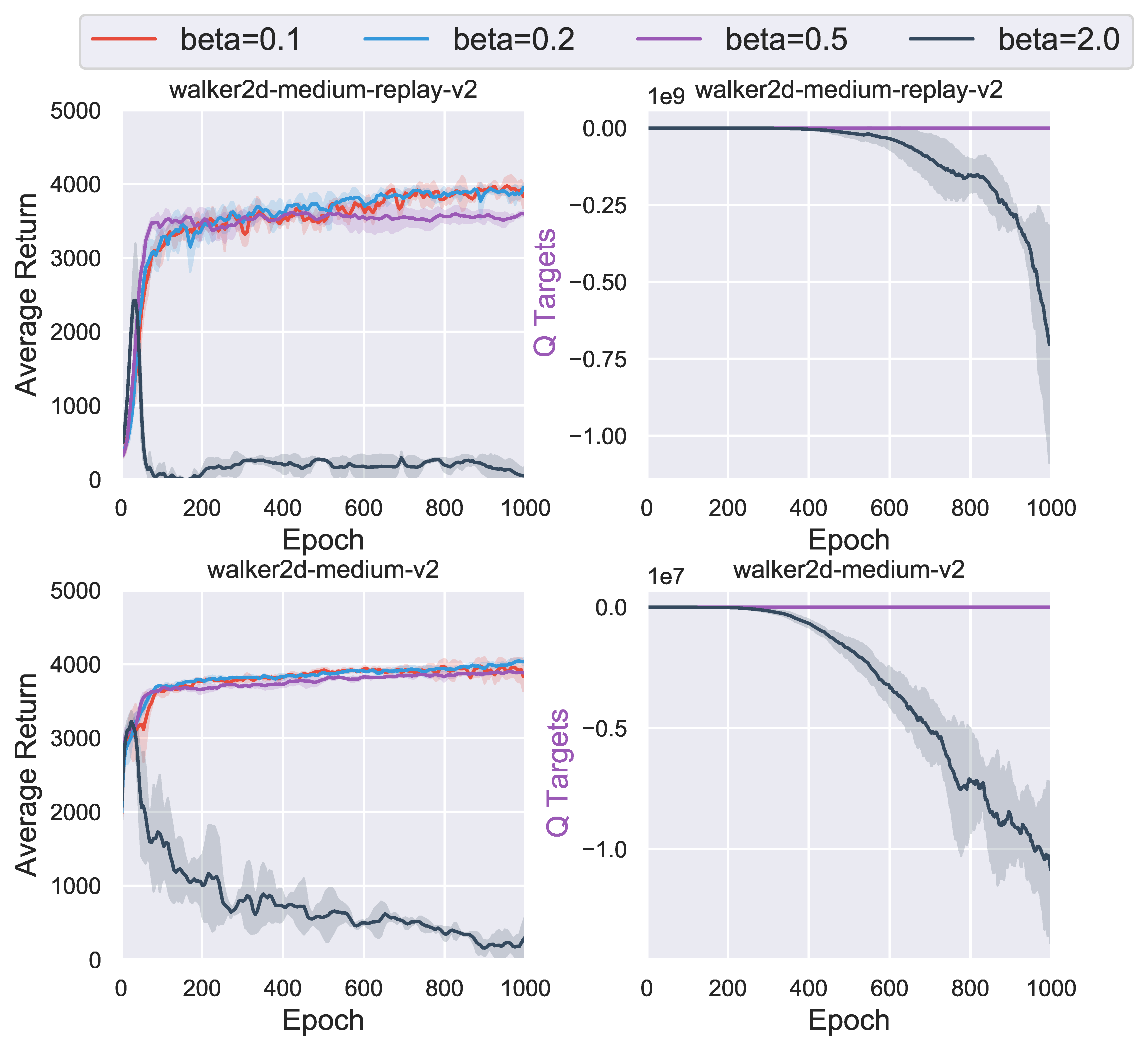}
\end{center}
\vskip -0.2in
\caption{The average return and the Q target of SCORE with different penalty coefficient $\beta$.}
\label{fig:hyperparameter_beta}
\end{figure}
\begin{figure}[t]
\begin{center}
\includegraphics[width=0.43\textwidth]{./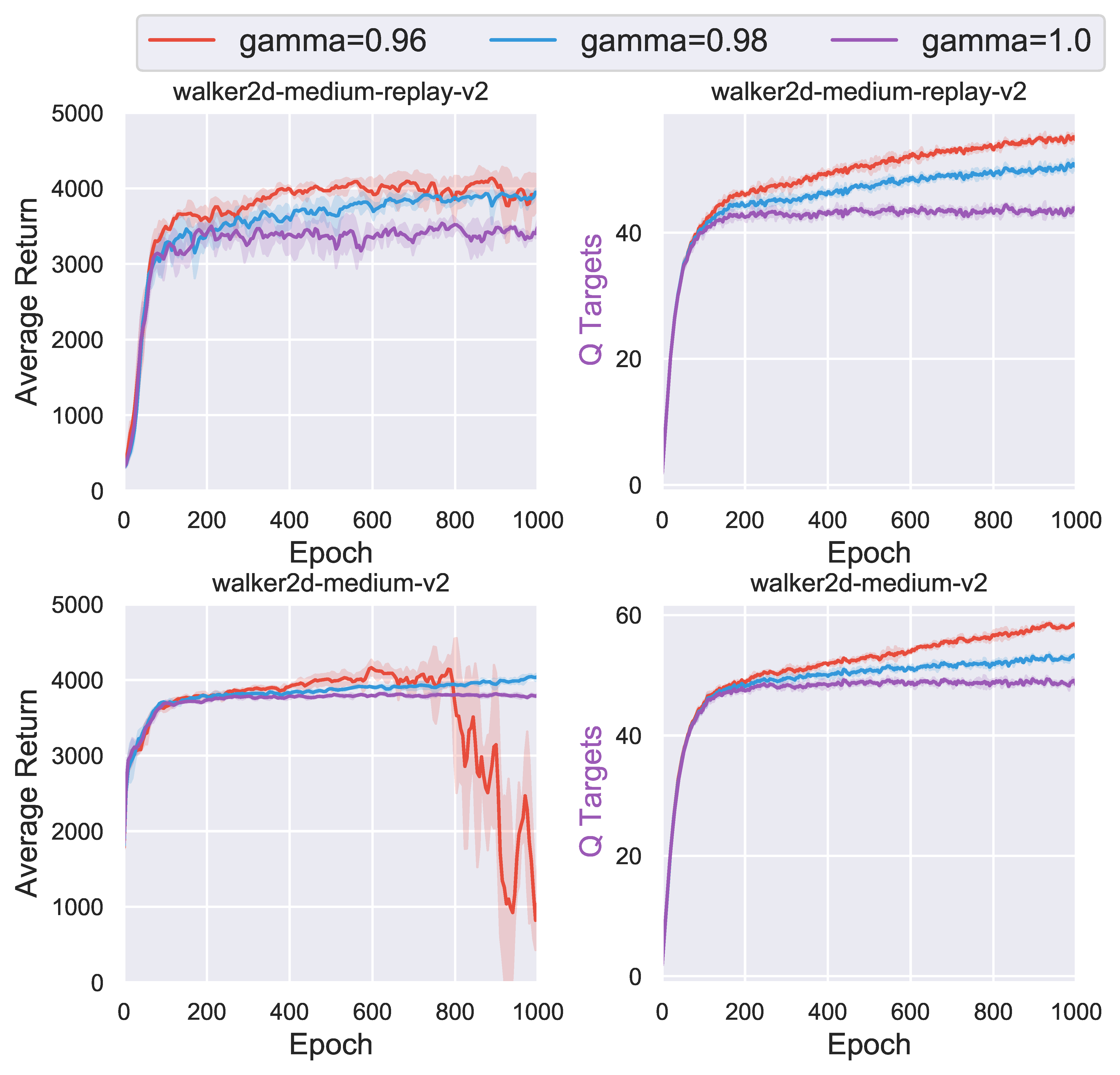}
\end{center}
\vskip -0.2in
\caption{The average return and the Q target of SCORE with different decaying factor $\gamma_{\text{bc}}$.}
\label{fig:hyperparameter_gamma}
\end{figure}
\begin{figure}[t]
\begin{center}
\includegraphics[width=0.43\textwidth]{./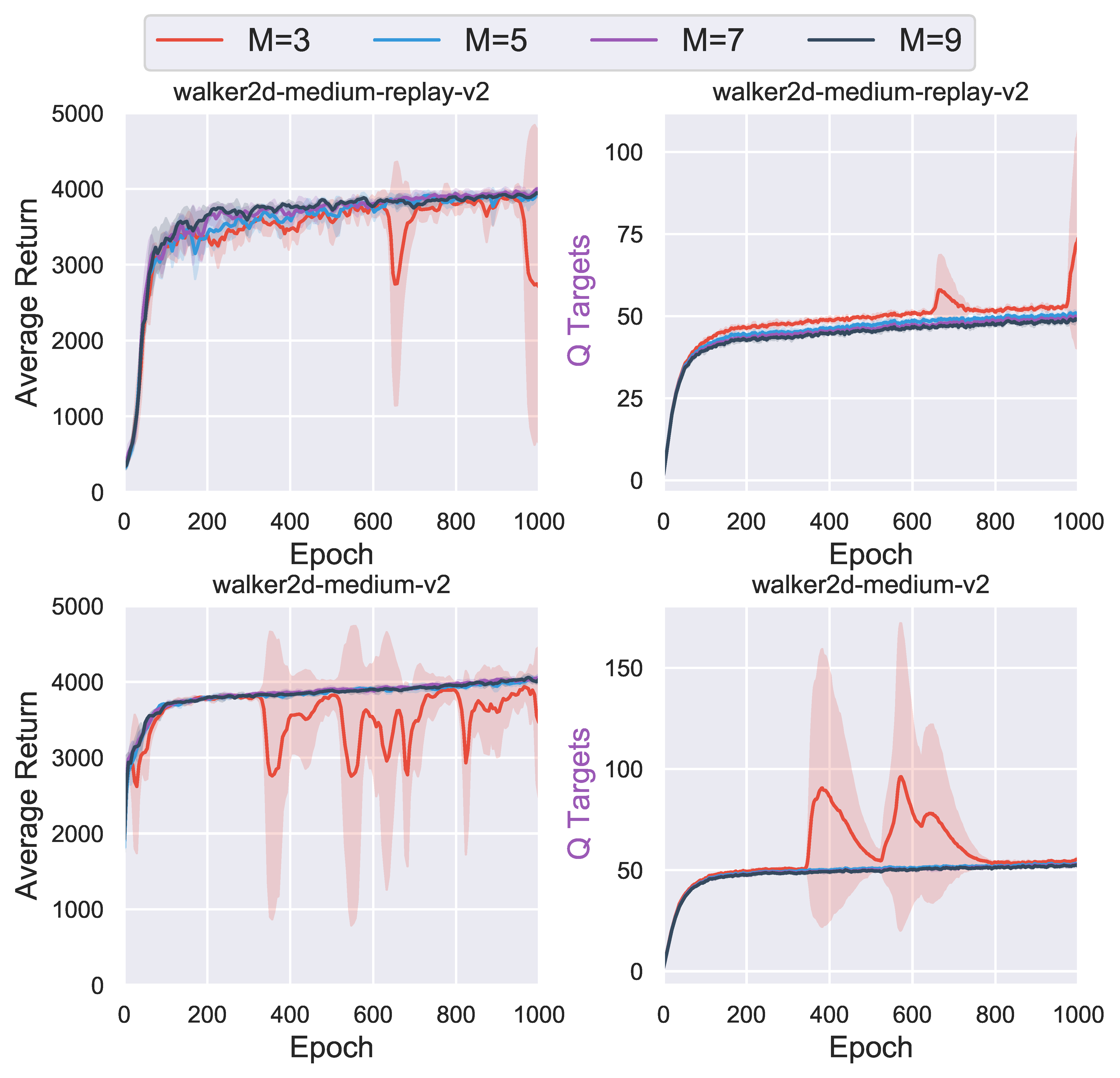}
\end{center}
\vskip -0.2in
\caption{The average return and the Q target of SCORE with different number of ensemble networks.}
\label{fig:hyperparameter_M}
\end{figure}
\begin{figure}[t]
\begin{center}
\includegraphics[width=0.52\textwidth]{./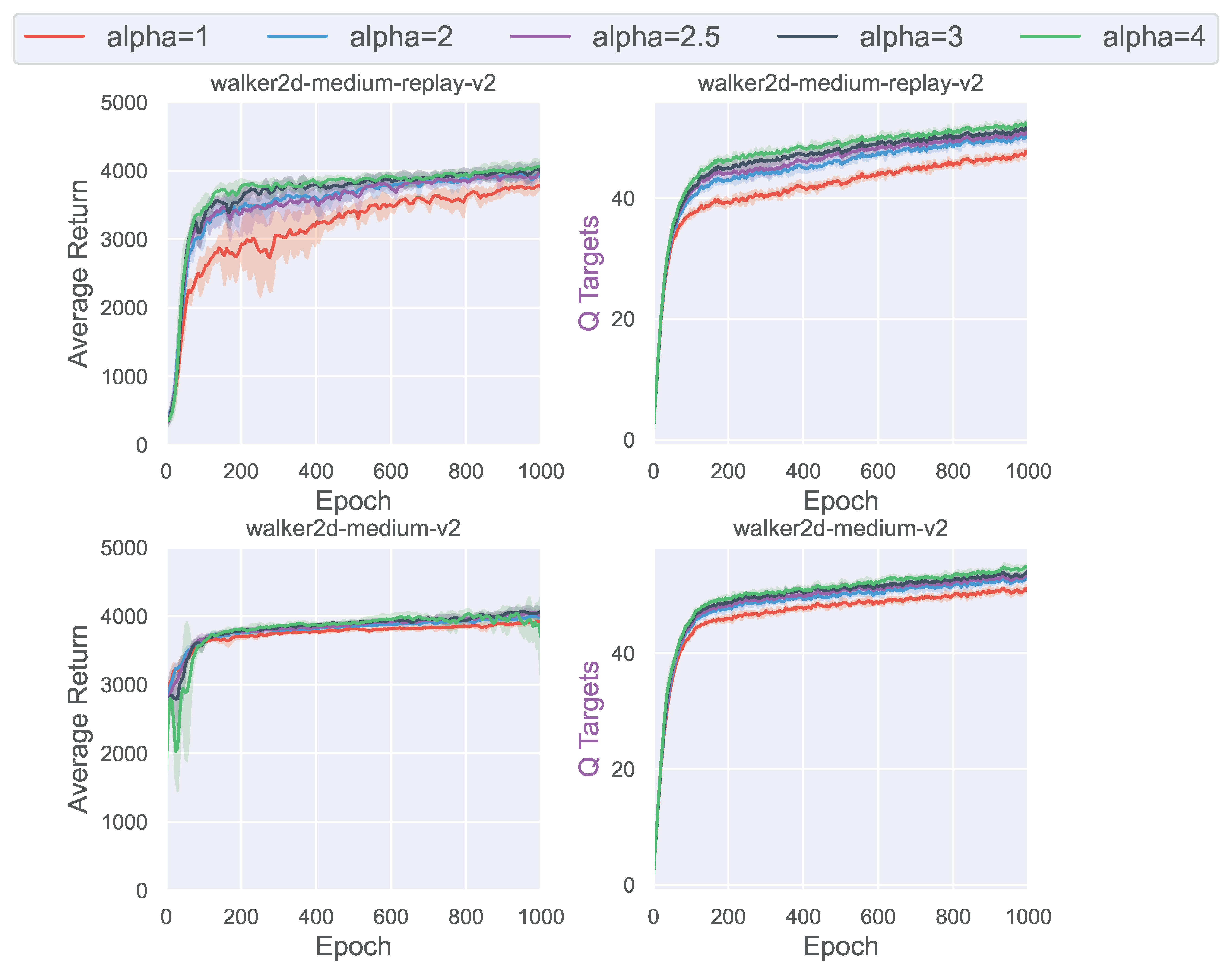}
\end{center}
\vskip -0.2in
\caption{The average return and the Q target of SCORE with different weighting factor $\alpha$ of the Q-loss.}
\label{fig:hyperparameter_alpha}
\end{figure}
\subsection{Hyperparameter Analyzes}
\label{subsec:hyperparameter}
\noindent \textbf{The penalty coefficient $\beta$}. $\beta$ controls the degree of pessimism. In \figurename~\ref{fig:ablation_pe}, we have seen that the agent fails to resolve false correlations without pessimism ($\beta=0$). Here we conduct experiments by choosing $\beta$ from $\{0.1, 0.2, 0.5, 2.0\}$. The experimental results are shown in \figurename~\ref{fig:hyperparameter_beta}. We can see that $\beta=0.1$, $\beta=0.2$ and $\beta=0.5$ work similarly on both datasets, with rapid improvement and stable convergence. In particular, when the penalty gets too large ($\beta=2.0$), the agent becomes over-pessimistic (the Q value rapidly becomes an extremely small negative number). In this case, the agent tends to act conservatively and cannot effectively exploit the information in the dataset to learn informed decisions, resulting in poor performance.

\noindent \textbf{The discount rate of the behavior cloning coefficient \emph{$\gamma_{\text{bc}}$}}. We use a decaying factor $\gamma_{\text{bc}}$ to control the weight of the behavior cloning loss in policy's objective function (\eqref{eq:pi_objective}). We choose discount rate $\gamma_{\text{bc}}$ from $\{0.96, 0.98, 1.0\}$ to validate the sensitivity with respect to behavior cloning. We remark that $\gamma_{\text{bc}}=1$ corresponds to a constant weight. From \figurename~\ref{fig:hyperparameter_gamma} we can see that $\gamma_{\text{bc}}=0.96$ and $\gamma_{\text{bc}}=0.98$ work similarly on the medum-replay dataset, with $\gamma_{\text{bc}}=0.96$ converges faster. In contrast, $\gamma_{\text{bc}}=1.0$ results in a sub-optimal policy that converges prematurely. This is because the medium-replay dataset is generated by a mixture of policies of different quality. A strong behavior cloning regularizer hinders the agent to take the essence and discard the dross. 
On the medium dataset collected by a single behavioral policy, $\gamma_{\text{bc}}=0.96$, $\gamma_{\text{bc}}=0.98$ and $\gamma_{\text{bc}}=1.0$ perform similarly at the initial training stage, with $\gamma_{\text{bc}}=0.98$ displaying the highest level of convergence. Although $\gamma_{\text{bc}}=0.96$ shows the same rapid improvement, it collapses in the later stages of the strategy. In general, a small $\gamma_{\text{bc}}$ is preferable when the dataset is diverse; conversely, a strong constraint is required.

\noindent \textbf{The number of Q-networks}. Empirically, using more ensemble networks provides better uncertainty quantification. We examine the case of $M \in \{3, 5, 7, 9\}$. \figurename~\ref{fig:hyperparameter_M} shows that the bootstrapped ensemble method fails to provide reliable uncertainty estimations when when $M=3$, leading to unstable performance. Interestingly, SCORE works very well with merely five networks. Using more networks improves stability but has little impact on performance improvements. Compared to ~\cite{anUncertaintyBasedOfflineReinforcement2021a} and~\cite{baiPessimisticBootstrappingUncertaintyDriven2022}, the computational overhead is much lower.

\noindent \textbf{The weighting factor of the Q-loss}. Following Fujimoto et al.~\cite{fujimotoMinimalistApproachOffline2021a}, we employ the Q-value normalization trick to scale the Q-loss during policy optimization. \cite{fujimotoMinimalistApproachOffline2021a} found that their algorithm is robust to the weighting factor $\alpha$ and the recommended setting is $\alpha=2.5$. \figurename~\ref{fig:hyperparameter_alpha} shows a similar observation. SCORE is robust to a wide range of $\alpha$. We use the default value $\alpha=2.5$ in all the other experiments.

%
%
\section{Conclusion}
\label{sec:conclusion}
In this work, we propose a novel offline RL algorithm named SCORE (falSe COrrelation REduction), which achieves the SoTA performance with 3.1x acceleration. We identify the false correlation between epistemic uncertainty and decision-making as a core issue in offline RL, which is a broader and more rigorous mathematical concept than the widely studied OOD action problem. To address this issue, SCORE employs the bootstrapped ensemble method to quantify uncertainty and treats it as a penalty. We point out that the failure of previous work is due to the inability to obtain high-quality uncertainty estimations, and propose introducing a simple annealing BC regularizer to solve the problem. Theoretically, we take a step forward from existing work by analyzing policy optimization. We show the proposed algorithm converges to the optimal policy with a sublinear rate under mild assumptions. Moreover, according to the extensive empirical results, SCORE is both provably efficient and practically effective. In the future, we plan to extend the theory to include safety, robustness, and other desiderata and design practical algorithms consistent with the theory.

\ifCLASSOPTIONcaptionsoff
  \newpage
\fi



\bibliography{SCORE_NeurIPS2022}
\bibliographystyle{IEEEtran}
%



%

\vskip -0.39in
\begin{IEEEbiography}
[{\includegraphics[width=1in,height=1.25in,clip,keepaspectratio]{./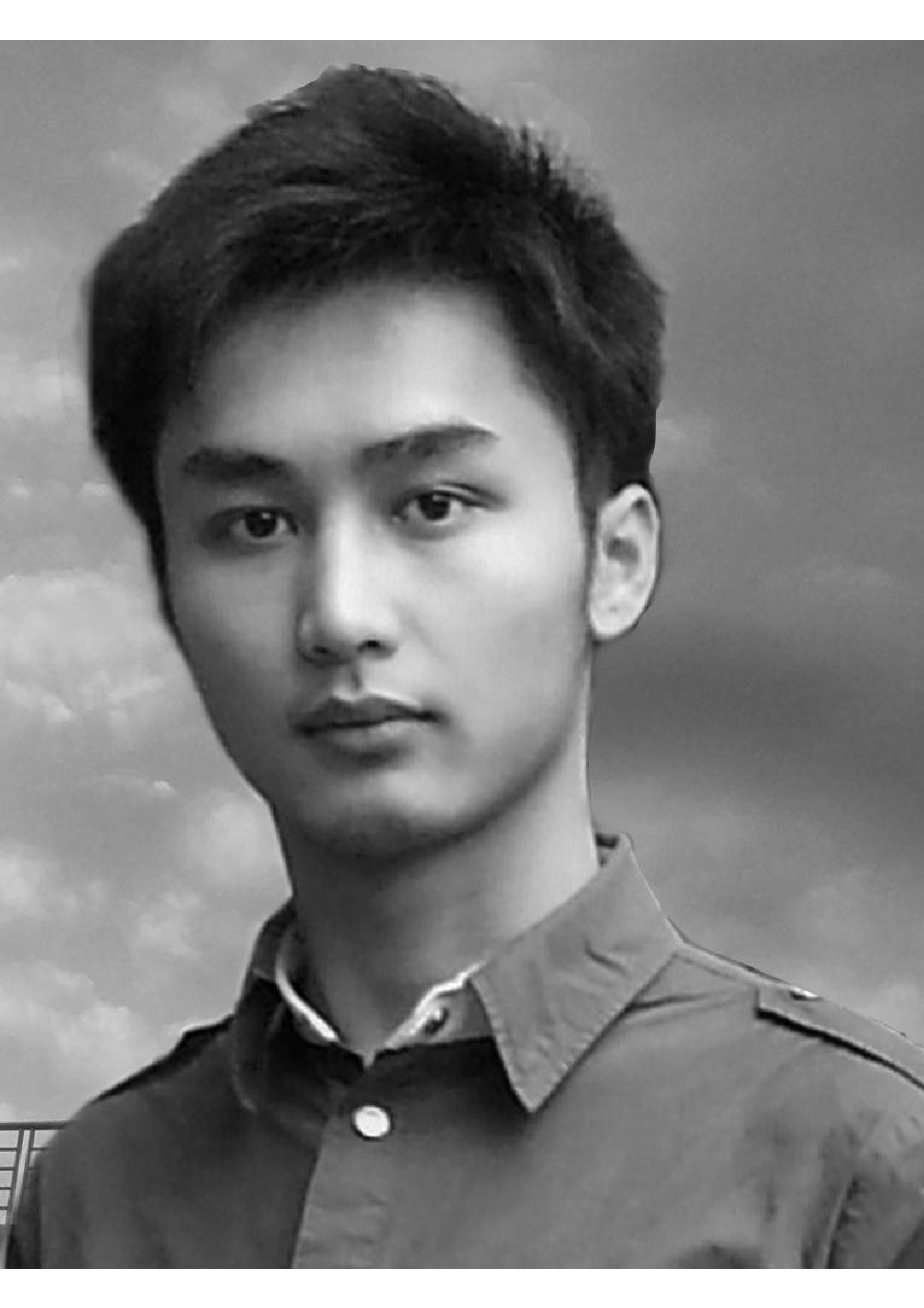}}]
{Zhihong Deng} received his undergraduate and master degree in computer science in 2017 and 2020 respectively from Sun Yat-sen University. {He is pursuing a PhD degree at University of Technology Sydney. His research interests span machine learning and data mining, with a special focus on reinforcement learning.} He has published papers in multiple international conferences and journals, such as AAAI, ICLR, IEEE TCYB and IEEE TNNLS.
\end{IEEEbiography}

\vskip -0.42in
\begin{IEEEbiography}[{\includegraphics[width=1in,height=1.25in,clip,keepaspectratio]{./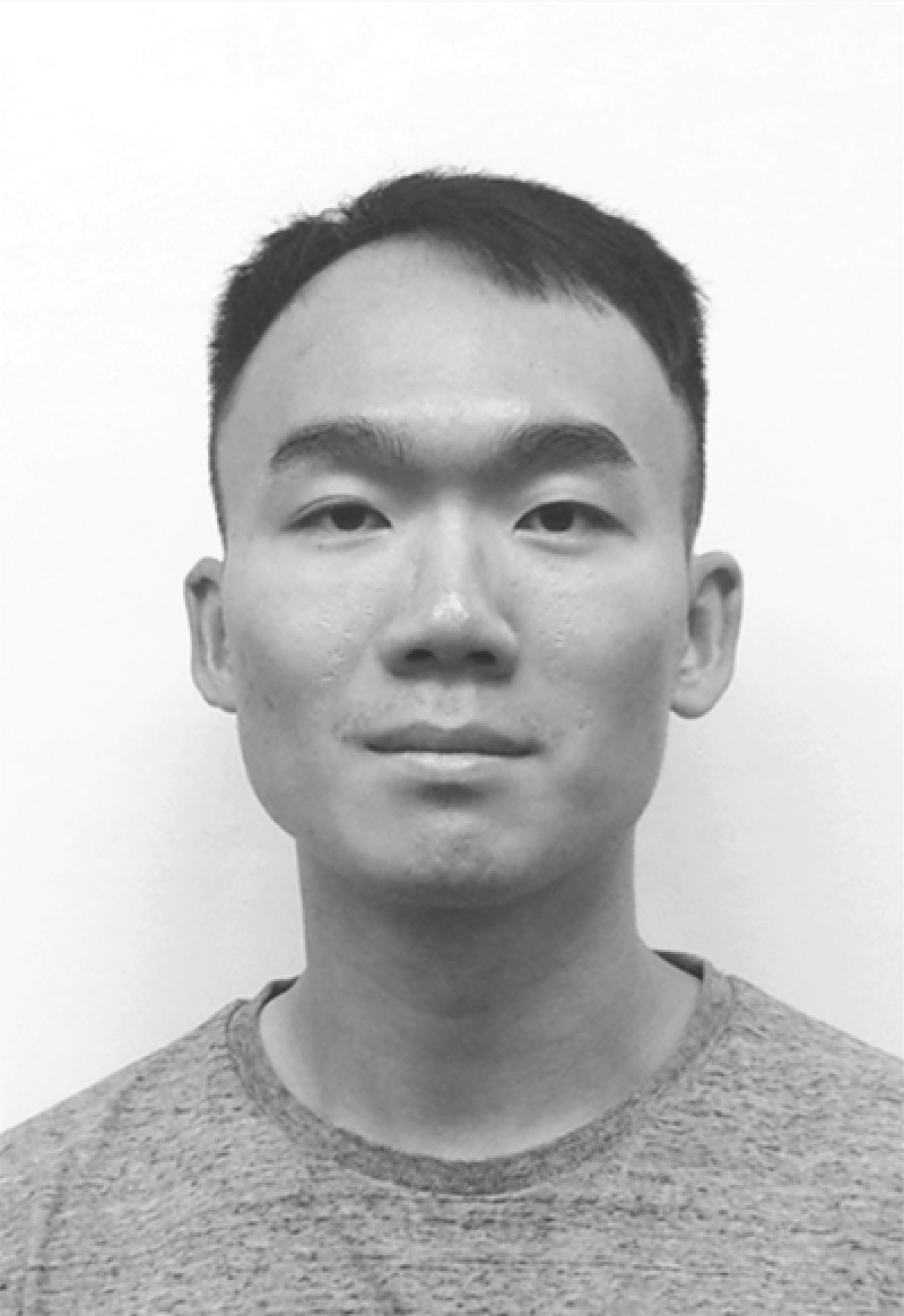}}]{Zuyue Fu} received his Ph.D. degree from Northwestern University in 2022. His research interest lies in the intersection of optimization and machine learning, with a special focus on reinforcement learning. He is now a research scientist with Meta.
\end{IEEEbiography}

\vskip -0.36in
\begin{IEEEbiography}[{\includegraphics[width=1in,height=1.25in,clip,keepaspectratio]{./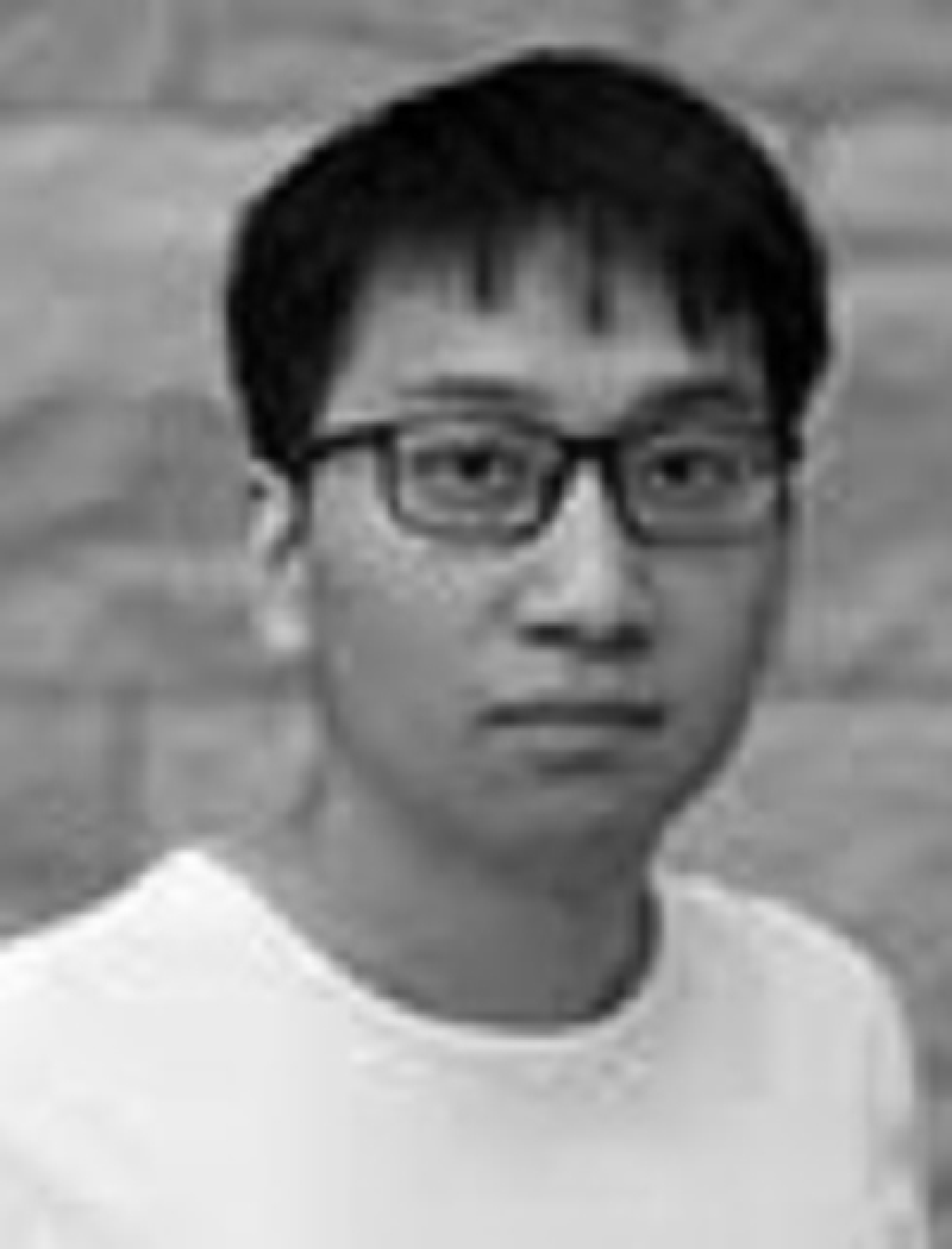}}]{Lingxiao Wang} received his undergraduate degree in mathematics in 2017 from Nanyang Technological University. He received his master's and Ph.D. degrees from Northwestern University in 2020 and 2022, respectively. His research interests lie in reinforcement learning, with a special focus on exploration and sample efficiency analysis.
\end{IEEEbiography}

\begin{IEEEbiography}[{\includegraphics[width=1in,height=1.25in,clip,keepaspectratio]{./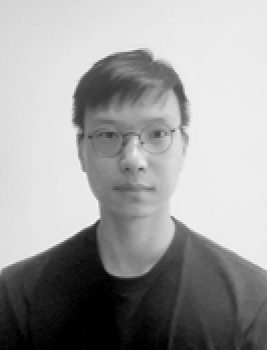}}]{Zhuoran Yang} is an Assistant Professor of Statistics and Data Science at Yale University, starting in July 2022. His research interests lie in the interface between machine learning, statistics, and optimization. He is particularly interested in the foundations of reinforcement learning, representation learning, and deep learning. Before joining Yale, Zhuoran worked as a postdoctoral researcher at the University of California, Berkeley, advised by Michael. I. Jordan. Prior to that, he obtained his Ph.D. from the Department of Operations Research and Financial Engineering at Princeton University, co-advised by Jianqing Fan and Han Liu. He received his bachelor’s degree in Mathematics from Tsinghua University in 2015.
\end{IEEEbiography}

\begin{IEEEbiography}[{\includegraphics[width=1in,height=1.25in,clip,keepaspectratio]{./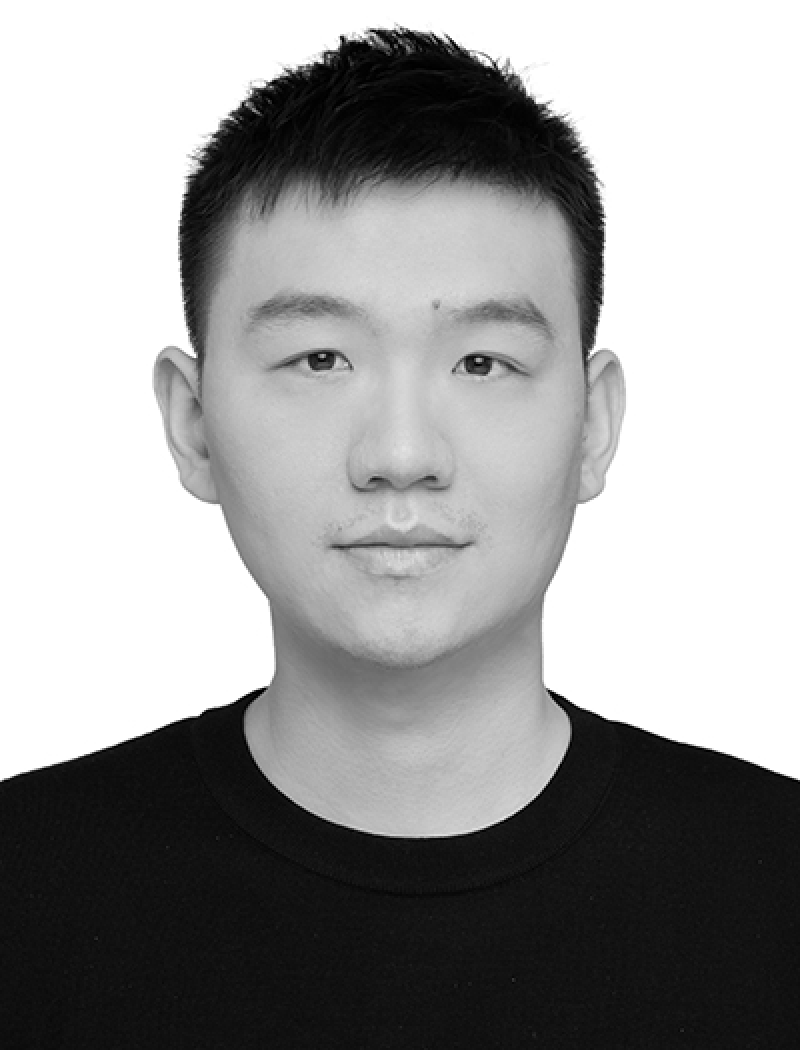}}]{Chenjia Bai} received the B.S., M.S., and Ph.D. degrees in Computer Science and Technology from the Harbin Institute of Technology, Harbin, China, in 2015, 2017, and 2022, respectively. He is currently a Researcher with Shanghai Artificial Intelligence Laboratory. His main research interests include reinforcement learning, deep learning networks, and robotics.
\end{IEEEbiography}

\begin{IEEEbiography}[{\includegraphics[width=1in,height=1.25in,clip,keepaspectratio]{./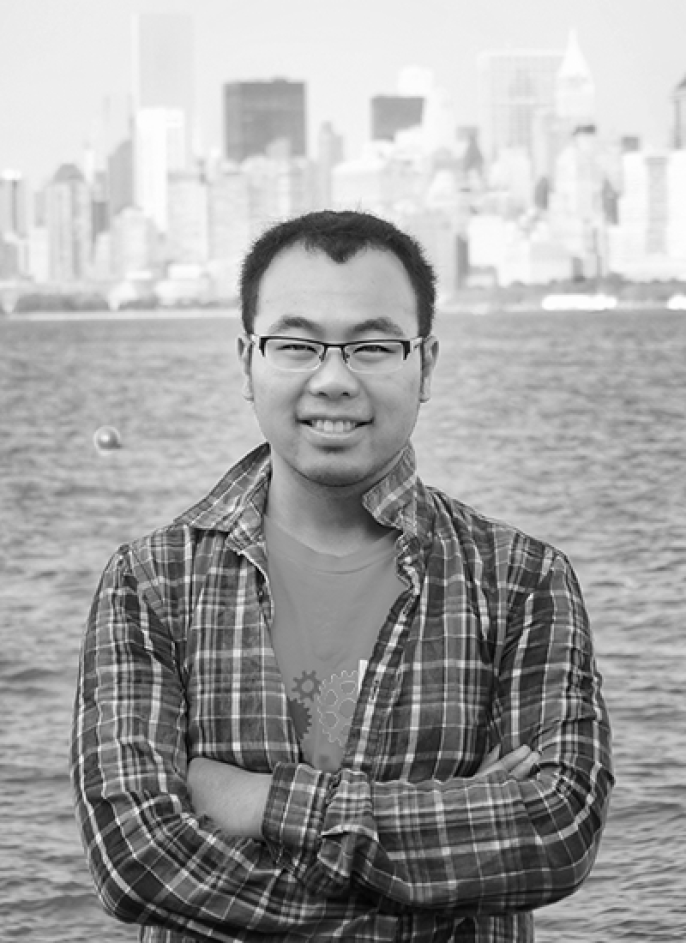}}]{Tianyi Zhou} is a tenure-track assistant professor of computer science at the University of Maryland, College Park. He received his Ph.D. from the school of computer science \& engineering at the University of Washington, Seattle. His research interests are in machine learning, optimization, and natural language processing (NLP). 
He published over 70 papers and is a recipient of the Best Student Paper Award at ICDM 2013 and the 2020 IEEE Computer Society TCSC Most Influential Paper Award.
\end{IEEEbiography}

\begin{IEEEbiography}[{\includegraphics[width=0.9in,height=1.35in,clip,keepaspectratio]{./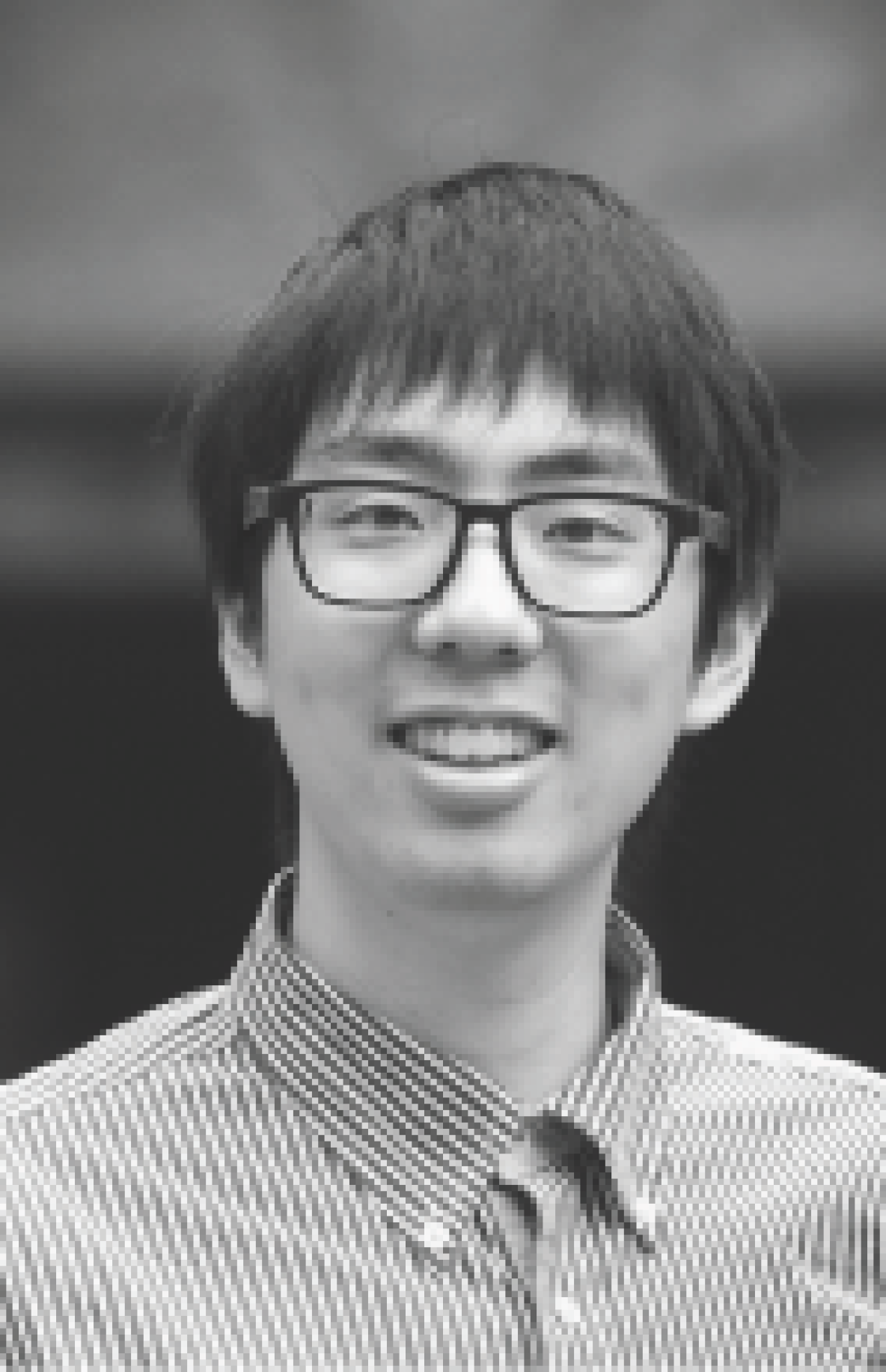}}]{Zhaoran Wang} is an assistant professor at Northwestern University, working at the interface of machine learning, statistics, and optimization. He is the recipient of the AISTATS (Artificial Intelligence and Statistics Conference) notable paper award, ASA (American Statistical Association) best student paper in statistical learning and data mining, INFORMS (Institute for Operations Research and the Management Sciences) best student paper finalist in data mining, Microsoft Ph.D. Fellowship, Simons-Berkeley/J.P. Morgan AI Research Fellowship, Amazon Machine Learning Research Award, and NSF CAREER Award.
\end{IEEEbiography}

\begin{IEEEbiography}[{\includegraphics[width=0.9in,height=1.35in,clip,keepaspectratio]{./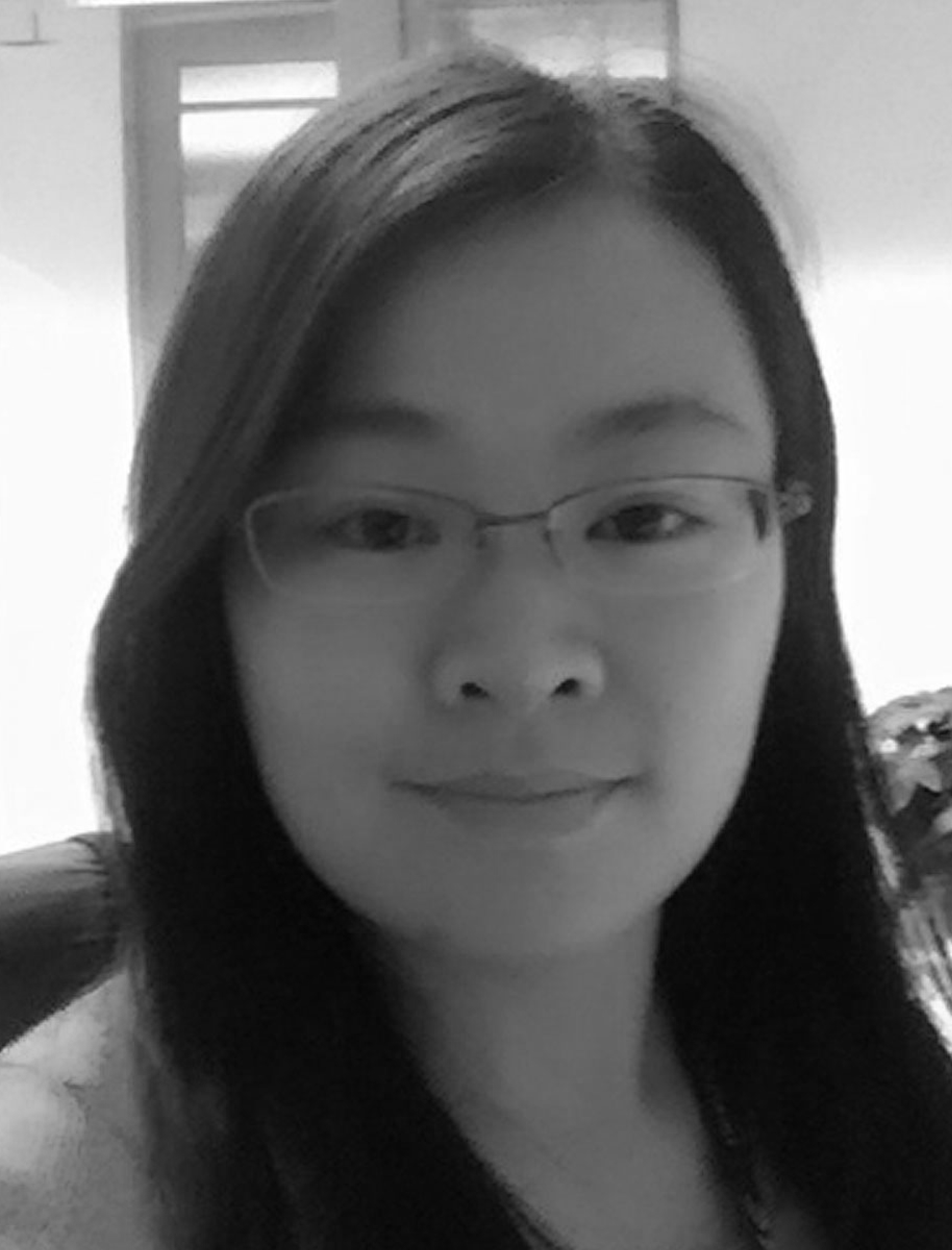}}]{Jing Jiang} is a Senior Lecturer in the School of Computer Science, a core member of Australian Artificial Intelligence Institute (AAII), at the University of Technology Sydney (UTS) in Australia. Her research interests focus on machine learning and its applications. She is the recipient of the DECRA (Discovery Early Career Researcher Award) fellowship funded by ARC (Australian Research Council). She has published over 70 papers in the related areas of AI in the top-tier conferences and journals, such as NeurIPS, ICML, ICLR, AAAI, IJCAI, KDD, TNNLS and TKDE.
\end{IEEEbiography}






\newpage
\appendices
\onecolumn
%
%
\section{Theoretical Analyzes and Proofs}
\label{app:theory}
\subsection{Proof of Theorem \ref{thm:main}}
\label{sec:main}
\begin{proof}

We denote by
\#\label{eq:sumsub}
\text{AveSubOptGap}(K) = \frac{1}{K} \cdot \sum_{k=0}^{K-1} \bigl (V_k^*(s_0) - V_{k}^{\pi_k}(s_0) \bigr ). 
\#
By the definition of $\regr(K)$ in \eqref{eq:regr}, we know that $\regr(K) \leq \text{AveSubOptGap}(K)$. 

Before we prove the theorem, we first introduce the following useful lemmas. 

\begin{lemma}[Suboptimality Decomposition]\label{lemma:reg-decomp}
For $\text{AveSubOptGap}(K)$ defined in \eqref{eq:sumsub}, we have
\$
\text{AveSubOptGap}(K) = \frac{1}{K} \cdot \sum_{k=0}^{K-1} \sum_{t = 0}^\infty \gamma^t \cdot & \Bigl( \EE_{\pi^*}\Bigl[ \Bigl\lan Q_k(s_t, \cdot) - \lambda_k \cdot \log \frac{\pi_k(\cdot\given s_t)}{\pi_0(\cdot \given s_t)}, \pi^*(\cdot\given s_t) - \pi_k(\cdot\given s_t) \Bigr \ran \Biggiven s_0\Bigr]\notag\\
&\quad  + \EE_{\pi^*}\bigl[\iota_k(s_t, a_t) \given s_0 \bigr] - \EE_{\pi_k}\bigl[\iota_k(s_t, a_t) \given s_0 \bigr]\Bigr), 
\$
where $\iota_k(s,a) = r(s,a) + \gamma \cdot  \EE_{s' \sim \cP(\cdot \given s, a)}[V_k(s')] - Q_k(s,a)$ for any $(s,a)\in \cS\times \cA$. 
\end{lemma}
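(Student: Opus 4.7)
The statement is a regularized-MDP analog of the classical performance-difference decomposition used in Jin et al.\ \cite{jinPessimismProvablyEfficient2021a}. Since $\mathrm{AveSubOptGap}(K) \geq \regr(K)$ is a straight averaging, it suffices to decompose $V_k^*(s_0) - V_k^{\pi_k}(s_0)$ for each $k$ into the three indicated pieces and then average. I would introduce the ``estimated'' regularized state-value $V_k(s) := \langle Q_k(s, \cdot) - \lambda_k \log\bigl(\pi_k(\cdot\given s)/\pi_0(\cdot\given s)\bigr), \pi_k(\cdot\given s)\rangle$, so that the Bellman residual $\iota_k(s,a) = r(s,a) + \gamma\EE_{s'}[V_k(s')] - Q_k(s,a)$ is exactly the amount by which $Q_k$ fails to be the (regularized) Bellman image of $V_k$ along $\pi_k$. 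Then split
\[
V_k^*(s_0) - V_k^{\pi_k}(s_0) \;=\; \underbrace{V_k^*(s_0) - V_k(s_0)}_{\text{(I)}} \;+\; \underbrace{V_k(s_0) - V_k^{\pi_k}(s_0)}_{\text{(II)}}.
\]

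\textbf{Key steps.} For (II), unroll using the Bellman equation for the true regularized value $V_k^{\pi_k}$ along $\pi_k$: the $\lambda_k\log(\pi_k/\pi_0)$ contribution is identical in $V_k$ and $V_k^{\pi_k}$ (both are evaluated under $\pi_k$), so it cancels at every step, and the remainder telescopes into $V_k(s_0) - V_k^{\pi_k}(s_0) = -\sum_{t\geq 0}\gamma^t\EE_{\pi_k}[\iota_k(s_t,a_t)\given s_0]$. This yields the third term in the lemma. For (I), expand $V_k^*(s_0)$ one step under $\pi^*$ using the regularized Bellman equation for $V_k^*$, and expand $V_k(s_0)$ as $\langle Q_k - \lambda_k\log(\pi_k/\pi_0), \pi_k\rangle$. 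Add and subtract $\langle Q_k(s_0,\cdot) - \lambda_k\log(\pi_k(\cdot\given s_0)/\pi_0(\cdot\given s_0)), \pi^*(\cdot\given s_0)\rangle$ to generate the inner product $\langle Q_k - \lambda_k\log(\pi_k/\pi_0), \pi^* - \pi_k\rangle$. The remaining one-step residual equals $\EE_{\pi^*}[Q_k^*(s_0,a) - Q_k(s_0,a)]$ (up to regularization pieces that telescope), and rewriting $Q_k^* - Q_k$ via its defining Bellman equation and the definition of $\iota_k$ produces a $\gamma\EE_{\pi^*}[V_k^*(s_1) - V_k(s_1)]$ recursion plus a one-step $\EE_{\pi^*}[\iota_k(s_0,a_0)]$ contribution. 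Iterating the recursion yields $\text{(I)} = \sum_{t\geq 0}\gamma^t\EE_{\pi^*}\bigl[\langle Q_k - \lambda_k\log(\pi_k/\pi_0), \pi^* - \pi_k\rangle + \iota_k(s_t,a_t)\given s_0\bigr]$. Summing (I)+(II) and averaging over $k\in\{0,\ldots,K-1\}$ delivers the stated identity.

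\textbf{Main obstacle.} The delicate part is the bookkeeping of the $\lambda_k\log(\pi/\pi_0)$ terms when switching between expectations under $\pi^*$ and under $\pi_k$: naively these produce extra $\lambda_k\kl(\pi^*(\cdot\given s_t)\|\pi_k(\cdot\given s_t))$ residuals. The choice to place $\log(\pi_k/\pi_0)$ rather than $\log(\pi^*/\pi_0)$ inside the inner product in the lemma is precisely what reabsorbs these KL contributions back into $V_k^*$'s own regularization via the Bellman recursion (since the $\lambda_k\log(\pi^*/\pi_0)$ reward penalty in the definition of $V_k^*$ is evaluated under $\pi^*$). I would verify this cancellation at the one-step level first and then invoke induction on the unrolling depth. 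Once that cancellation is in place, the rest is a direct telescoping argument using $\gamma<1$ and the standard identity $V^\pi(s_0) = \sum_{t\geq 0}\gamma^t\EE_\pi[\text{(one-step regularized reward)}\given s_0]$.
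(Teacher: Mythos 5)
Your route is the same one the paper takes: the paper's ``proof'' is only a pointer to the extended value-difference lemma of Cai et al., and your split $V_k^*(s_0)-V_k^{\pi_k}(s_0) = \bigl(V_k^*(s_0)-V_k(s_0)\bigr) + \bigl(V_k(s_0)-V_k^{\pi_k}(s_0)\bigr)$ with $V_k(s) = \langle Q_k(s,\cdot)-\lambda_k\log(\pi_k(\cdot\given s)/\pi_0(\cdot\given s)), \pi_k(\cdot\given s)\rangle$, the telescoping of the second piece into $-\sum_{t}\gamma^t\EE_{\pi_k}[\iota_k(s_t,a_t)\given s_0]$, and the expand-one-step-and-recurse treatment of the first piece are exactly that argument transplanted to the infinite-horizon regularized setting. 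The second piece is clean: both values are evaluated under $\pi_k$, so the regularization terms cancel identically and only the Bellman residuals survive.

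The one step that does not go through as you describe is the cancellation promised in your ``main obstacle'' paragraph. Writing $V_k^*(s) = \langle Q_k^*(s,\cdot),\pi^*(\cdot\given s)\rangle - \lambda_k\,\kl(\pi^*\|\pi_0)$ and $V_k(s) = \langle Q_k(s,\cdot),\pi_k(\cdot\given s)\rangle - \lambda_k\,\kl(\pi_k\|\pi_0)$, and using the identity $\kl(\pi^*\|\pi_0)-\kl(\pi_k\|\pi_0) = \langle\log(\pi_k/\pi_0),\,\pi^*-\pi_k\rangle + \kl(\pi^*\|\pi_k)$, the one-step computation gives
\begin{align*}
V_k^*(s)-V_k(s) = {}&\Bigl\langle Q_k(s,\cdot)-\lambda_k\log\frac{\pi_k(\cdot\given s)}{\pi_0(\cdot\given s)},\, \pi^*(\cdot\given s)-\pi_k(\cdot\given s)\Bigr\rangle + \EE_{a\sim\pi^*}\bigl[\iota_k(s,a)\bigr]\\
&- \lambda_k\,\kl\bigl(\pi^*(\cdot\given s)\,\|\,\pi_k(\cdot\given s)\bigr) + \gamma\cdot\EE_{a \sim \pi^*,\, s'}\bigl[V_k^*(s')-V_k(s')\bigr].
\end{align*}
The $-\lambda_k\,\kl(\pi^*\|\pi_k)$ term is not reabsorbed by the recursion; it survives at every level of the unrolling. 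What your argument actually yields is therefore the stated display with an extra $-\lambda_k\sum_{t}\gamma^t\,\EE_{\pi^*}[\kl(\pi^*(\cdot\given s_t)\|\pi_k(\cdot\given s_t))\given s_0]$ on the right-hand side, equivalently the displayed decomposition as an inequality ``$\le$'' after dropping this non-positive term. That weaker form is all Theorem~\ref{thm:main} uses (the lemma only ever upper-bounds $\text{AveSubOptGap}$), and the lemma's own ``$=$'' glosses over the same term; your instinct that putting $\log(\pi_k/\pi_0)$ rather than $\log(\pi^*/\pi_0)$ in the inner product is the right choice is correct, but the reason is that it makes the leftover KL term carry the favorable sign, not that it cancels. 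State the leftover explicitly; the one-step check you propose would refute the cancellation as written.
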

\begin{proof}
See proof of Lemma 4.2 in \cite{caiProvablyEfficientExploration2020} for a detailed proof.
\end{proof}

\begin{lemma}[Policy Improvement]\label{lemma:pi}
It holds for any $k$ that
\$
& (\eta_k + \lambda_k)^{-1} \cdot \Bigl\lan Q_k(s_t, \cdot) - \lambda_k \cdot \log \frac{\pi_k(\cdot\given s_t)}{\pi_0(\cdot \given s_t)}, \pi^*(\cdot\given s_t) - \pi_k(\cdot\given s_t) \Bigr \ran \notag \\
& \qquad \leq \kl \bigl( \pi^*(\cdot \given s_t) \| \pi_k(\cdot \given s_t) \bigr) - \kl \bigl( \pi^*(\cdot \given s_t) \| \pi_{k+1}(\cdot \given s_t) \bigr) \\
& \qquad \qquad + (\eta_k + \lambda_k)^{-2} \cdot \bigl (1 + \lambda_k \cdot \alpha^4(1 - \alpha)^{-4} \bigr)^2 \cdot (1 - \gamma)^{-2}. 
\$
\end{lemma}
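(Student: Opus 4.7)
The plan is to follow the mirror-descent template: exploit the first-order optimality condition of the OPO objective to get a closed form for $\pi_{k+1}$, use a three-point identity to convert the inner product on the LHS into a telescoping sum of KL differences plus a stability term $\kl(\pi_k\|\pi_{k+1})$, and finally bound the stability term via Hoeffding's lemma.

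First I would fix a state $s$ and maximize the per-state OPO objective $\lan Q_k(s,\cdot) - \lambda_k \log(\pi/\pi_0), \pi\ran - \eta_k \kl(\pi\|\pi_k(\cdot\given s))$ over probability vectors. Writing out the Lagrangian and setting gradients to zero gives the multiplicative update $\pi_{k+1}(a\given s) \propto \pi_k(a\given s)^{\eta_k/(\eta_k+\lambda_k)} \pi_0(a\given s)^{\lambda_k/(\eta_k+\lambda_k)} \exp\bigl(Q_k(s,a)/(\eta_k+\lambda_k)\bigr)$. Setting $f_k(s,a) = Q_k(s,a) - \lambda_k \log(\pi_k(a\given s)/\pi_0(a\given s))$ and letting $Z_k(s)$ be the normalizer, this rewrites cleanly as $\log(\pi_{k+1}/\pi_k)(a\given s) = (\eta_k+\lambda_k)^{-1} f_k(s,a) - \log Z_k(s)$.

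Next I would take the inner product of this identity against $\pi^*(\cdot\given s_t) - \pi_k(\cdot\given s_t)$. The $\log Z_k(s_t)$ contribution vanishes because both arguments are probability vectors summing to $1$, so expanding via standard log-ratio algebra one gets the exact equality $(\eta_k+\lambda_k)^{-1}\lan f_k(s_t,\cdot), \pi^*-\pi_k\ran = \kl(\pi^*\|\pi_k) - \kl(\pi^*\|\pi_{k+1}) + \kl(\pi_k\|\pi_{k+1})$, using $\lan \log(\pi_{k+1}/\pi_k), \pi^*\ran = \kl(\pi^*\|\pi_k) - \kl(\pi^*\|\pi_{k+1})$ and $\lan \log(\pi_{k+1}/\pi_k), \pi_k\ran = -\kl(\pi_k\|\pi_{k+1})$. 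This already recovers the first two terms on the RHS of the lemma. It then remains to bound $\kl(\pi_k\|\pi_{k+1})$. From the closed form, $\pi_{k+1}/\pi_k = Z_k^{-1}\exp(f_k/(\eta_k+\lambda_k))$, so $\kl(\pi_k\|\pi_{k+1}) = \log Z_k(s_t) - \EE_{\pi_k}[f_k/(\eta_k+\lambda_k)]$. Applying Hoeffding's lemma to the bounded variable $f_k/(\eta_k+\lambda_k)$ yields $\kl(\pi_k\|\pi_{k+1}) \leq \|f_k\|_\infty^2/(\eta_k+\lambda_k)^2$. The final task is to show $\|f_k\|_\infty \leq (1 + \lambda_k\alpha^4(1-\alpha)^{-4})/(1-\gamma)$. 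The piece $\|Q_k\|_\infty$ is bounded by $1/(1-\gamma)$ by the reward-boundedness argument applied to the pessimistic Bellman iteration; for the log-ratio piece, letting $g_k = \log(\pi_k/\pi_0)$ the closed form gives the recursion $g_{k+1} = \eta_k/(\eta_k+\lambda_k)\cdot g_k + Q_k/(\eta_k+\lambda_k) - \log Z_k$. Iterating, using $\lambda_k = \alpha^k$, and observing that $|\log Z_k|$ is itself controlled by $\|f_k\|_\infty/(\eta_k+\lambda_k)$, yields after careful geometric-series bookkeeping the required bound $|\lambda_k g_k| \leq \lambda_k\alpha^4(1-\alpha)^{-4}/(1-\gamma)$. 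Plugging back into the Hoeffding estimate completes the proof.

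\emph{Main obstacle.} The first two steps are standard mirror-descent manipulations and the Hoeffding estimate is one line. The delicate part is controlling $\lambda_k g_k$: because $\log Z_k$ itself depends on $g_k$, the recursion is self-referential, and a naive unrolling yields only a $(1-\alpha)^{-1}$ factor. Extracting the prescribed $(1-\alpha)^{-4}$ scaling requires iterating the recursion several times and carefully tracking the coupling between the contraction factor $\alpha\eta_k/(\eta_k+\lambda_k)$ and the perturbations $Q_k/(\eta_k+\lambda_k)$ and $\log Z_k$; this accounting, which must use the specific annealing schedule $\lambda_k=\alpha^k$ in a non-trivial way, is where the bulk of the technical work resides.
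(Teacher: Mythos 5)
Your proposal follows essentially the same route as the paper's proof: the closed-form multiplicative update from the first-order optimality condition of the OPO objective, the three-point KL identity producing the two telescoping KL terms, and the unrolled recursion for $\log(\pi_k/\pi_0)$ under the schedule $\lambda_k=\alpha^k$ to extract the bound $\bigl(1+\lambda_k\alpha^4(1-\alpha)^{-4}\bigr)(1-\gamma)^{-1}$ on the relevant sup-norm. The only cosmetic difference is that you bound the stability term $\kl(\pi_k\|\pi_{k+1})$ directly via Hoeffding's lemma, whereas the paper retains $-\kl(\pi_{k+1}\|\pi_k)$ together with an extra inner product and combines Pinsker's and H\"older's inequalities before completing the square — both yield the same final estimate, and your worry about self-referentiality through $\log Z_k$ is moot because that term is independent of $a$ and cancels in every pairing against a difference of probability vectors.
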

\begin{proof}
See Section \ref{sec:pi} for a detailed proof. 
\end{proof}

\begin{lemma}[Pessimism]\label{lemma:pess}
Under Assumption \ref{ass:pess}, with probability at least $1 - \xi$, it holds for any $(s,a,k) \in \cS \times\cA\times [K]$ that 
\$
0 \leq \iota_k(s,a) \leq 2U(s,a), 
\$
where $\iota_k = \mathcal{B}Q_k - \widehat{\mathcal{B}}Q_k$ is the epistemic error defined in \eqref{eq:epistemic_error}. 

\end{lemma}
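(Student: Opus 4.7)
The plan is to verify the two-sided sandwich on $\iota_k$ by writing the epistemic error as a sum of two terms whose signs can be controlled directly by Assumption~\ref{ass:pess}. The argument mirrors the single-iteration reasoning in Section~\ref{subsec:pessimism} (in particular~\eqref{eq:suboptimality_bound}), lifted uniformly over all $k \in [K]$.

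First I would unpack the definition of the pessimistic estimator actually used by Theo-SCORE. The Q-functions $Q_k$ are produced via the pessimistic Bellman operator $\widehat{\mathcal{B}}^- = \widehat{\mathcal{B}} - U$, so the $\widehat{\mathcal{B}}$ appearing in the statement of Lemma~\ref{lemma:pess} should be interpreted as $\widehat{\mathcal{B}}^-$ in order to match the convention in~\eqref{eq:suboptimality_bound}. Substituting the definition yields the clean decomposition $\iota_k(s,a) = (\mathcal{B}Q_k(s,a) - \widehat{\mathcal{B}}Q_k(s,a)) + U(s,a)$, which isolates a \emph{noise} term (controlled by $U$) and a deterministic \emph{pessimism offset} (exactly $U$).

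Next I would condition on the event $\mathcal{E}_K$ from Assumption~\ref{ass:pess}, which holds with probability at least $1-\xi$. On this event the noise term is pinned to $[-U(s,a), U(s,a)]$ uniformly in $(s,a,k)$. Adding the deterministic offset $U(s,a)$ then immediately produces the lower bound $\iota_k(s,a) \geq 0$ and the upper bound $\iota_k(s,a) \leq 2U(s,a)$. These are routine arithmetic manipulations once the decomposition is in place; no quantitative slack is lost, so the proof is essentially an algebraic rearrangement of the assumption.

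The main (and rather mild) obstacle is making sure the bounds hold simultaneously across all $K$ iterations, since each $Q_k$ is a random object depending on the dataset and on the preceding iterates $Q_0, \ldots, Q_{k-1}$. This is precisely what Assumption~\ref{ass:pess} is designed to absorb: the event $\mathcal{E}_K$ already quantifies uniform control over $k \in [K]$, so no additional union bounds or concentration inequalities are required within the proof of this lemma. The only subtlety to verify is that the quantifier $U$ used to construct the pessimistic operator $\widehat{\mathcal{B}}^-$ coincides with the one appearing in $\mathcal{E}_K$, which holds by construction as described in Section~\ref{subsec:algorithm}.
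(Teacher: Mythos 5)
Your proof is correct and follows essentially the same route as the paper, which simply defers to Lemma~5.1 of Jin et al.: interpret $\widehat{\mathcal{B}}$ in the definition of $\iota_k$ as the pessimistic operator $\widehat{\mathcal{B}}^- = \widehat{\mathcal{B}} - U$, decompose $\iota_k = (\mathcal{B}Q_k - \widehat{\mathcal{B}}Q_k) + U$, and sandwich the first term in $[-U, U]$ on the event $\mathcal{E}_K$ (exactly mirroring~\eqref{eq:suboptimality_bound}). Your observation that the uniformity over $k \in [K]$ is already built into Assumption~\ref{ass:pess}, so no further union bound is needed, is also the right reading.
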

\begin{proof}
See proof of Lemma 5.1 in \cite{jinPessimismProvablyEfficient2021a} for a detailed proof. 
\end{proof}

Now we prove the theorem. 
By Lemma \ref{lemma:reg-decomp}, we have
\#\label{eq:jjj1}
\text{AveSubOptGap}(K) & = \frac{1}{K} \cdot \sum_{k=0}^K \sum_{t = 0}^\infty \gamma^t \cdot   \Bigl( \EE_{\pi^*}\Bigl[ \Bigl\lan Q_k(s_t, \cdot) - \lambda_k \cdot \log \frac{\pi_k(\cdot\given s_t)}{\pi_0(\cdot \given s_t)}, \pi^*(\cdot\given s_t) - \pi_k(\cdot\given s_t) \Bigr \ran \Biggiven s_0\Bigr]\notag\\
&\qquad\qquad\qquad\quad + \EE_{\pi^*}\bigl[\iota_k(s_t, a_t) \given s_0 \bigr] - \EE_{\pi_k}\bigl[\iota_k(s_t, a_t) \given s_0 \bigr]\Bigr)\notag \\
& \leq \frac{1}{K} \cdot \sum_{k=0}^K \sum_{t = 0}^\infty \gamma^t \cdot \Bigl( \EE_{\pi^*} \bigl[ \eta \cdot \kl \bigl( \pi^*(\cdot \given s_t) \| \pi_k(\cdot \given s_t) \bigr) - \eta\cdot \kl \bigl( \pi^*(\cdot \given s_t) \| \pi_{k+1}(\cdot \given s_t) \bigr) \bigr] \notag \\
& \qquad \qquad \qquad \quad  + \eta^{-1} \cdot \bigl (1 + \lambda_k \cdot \alpha^4(1 - \alpha)^{-4} \bigr)^2 \cdot (1 - \gamma)^{-2} \notag \\
&\qquad\qquad\qquad\quad + \EE_{\pi^*}\bigl[\iota_k(s_t, a_t) \given s_0 \bigr] - \EE_{\pi_k}\bigl[\iota_k(s_t, a_t) \given s_0 \bigr]\Bigr),
\#
where we denote by $\eta = \eta_k + \lambda_k$, and the last inequality comes from Lemma \ref{lemma:pi}. Further, by telescoping the sum of $k$ on the right-hand side of \eqref{eq:jjj1} and the non-negativity of the KL divergence, it holds with probability at least $1 - \xi$ that
\#\label{eq:jjj2}
\text{AveSubOptGap}(K) & \leq \frac{\eta}{K} \cdot \sum_{t = 0}^\infty \gamma^t \cdot  \EE_{\pi^*} \bigl[ \kl \bigl( \pi^*(\cdot \given s_t) \| \pi_0(\cdot \given s_t) \bigr) \bigr] +   \eta^{-1} \cdot \bigl (1 + \alpha^4(1 - \alpha)^{-4} \bigr)^2 \cdot (1 - \gamma)^{-3} \notag \\
& \qquad + \frac{1}{K} \cdot \sum_{k=0}^K \sum_{t = 0}^\infty \gamma^t \cdot \Bigl( \EE_{\pi^*}\bigl[\iota_k(s_t, a_t) \given s_0 \bigr] - \EE_{\pi_k}\bigl[\iota_k(s_t, a_t) \given s_0 \bigr] \Bigr) \notag \\
& \leq \frac{\eta}{K} \cdot \sum_{t = 0}^\infty \gamma^t \cdot  \EE_{\pi^*} \bigl[ \kl \bigl( \pi^*(\cdot \given s_t) \| \pi_0(\cdot \given s_t) \bigr) \bigr] +  \eta^{-1} \cdot \bigl (1 + \alpha^4(1 - \alpha)^{-4} \bigr)^2 \cdot (1 - \gamma)^{-3} \notag \\
& \qquad + \sum_{t = 0}^\infty \gamma^t \cdot \EE_{\pi^*}\bigl[2 U (s_t, a_t) \given s_0 \bigr], 
\#
where the last inequality comes from Lemma \ref{lemma:pess}. Now, by taking $\eta = \sqrt{\zeta / K}$, where
\$
\zeta =  \bigl (1 + \alpha^4(1 - \alpha)^{-4} \bigr)^2 \cdot \sum_{t = 0}^\infty \gamma^t \cdot  \EE_{\pi^*} \bigl[ \kl \bigl( \pi^*(\cdot \given s_t) \| \pi_0(\cdot \given s_t) \bigr) \bigr], 
\$
combining \eqref{eq:jjj2}, with probability at least $1 - \xi$, we have
\$
\text{AveSubOptGap}(K) = O\bigl((1 - \gamma)^{-3} \sqrt{\zeta / K} \bigr) + \perr. 
\$
Here $\perr$ is the intrinsic uncertainty defined in \eqref{eq:perr}. By the fact that $\regr(K) \leq \text{AveSubOptGap}(K)$, we conclude the proof. 
\end{proof}

%
\subsection{Proof of Lemmas}
\subsubsection{Proof of Lemma \ref{lemma:dpg=ppo}}\label{sec:dpg=ppo}
\begin{proof}
By plugging the definition of $\Pi_\phi(s) = \EE_{a\sim \pi_\phi(\cdot \given s)}[a]$ and the linear parameterization $Q_k(s,a) = \theta_k(s)^\top a$ into \eqref{eq:ppo-obj}, we have
\#\label{eq:ppo2}
\cL^k_{\rm OPO}(\phi) = \EE_{s\sim \cD} \bigl[ Q_k(s, \Pi_\phi(s)) - \lambda_k \cdot \kl (\pi_{\phi}(\cdot \given s) \| \pi_0(\cdot \given s) )  - \eta_k\cdot \kl (\pi_\phi (\cdot\given s) \| \pi_k(\cdot \given s)) \bigr].
\#
It holds for any $s\in \cS$ that
\#\label{eq:kl1}
\nabla_\phi \kl (\pi_\phi(\cdot\given s) \| \pi_0(\cdot \given s)) & = \nabla_\phi \EE_{a \sim \pi_\phi(\cdot \given s)} \Bigl[\log \bigl(  \pi_\phi(a \given s) / \pi_0(a \given s)   \bigr) \Bigr] \notag \\
& = \nabla_\phi \EE_{a \sim \pi_\phi(\cdot \given s)} \bigl[ (\phi - \phi_0)^\top \psi(s,a) + Z_\phi(s) - Z_{\phi_0}(s) \bigr]\notag \\
& =  \nabla_\phi \EE_{a \sim \pi_\phi(\cdot \given s)} [\psi(s,a)] (\phi - \phi_0) + \EE_{a \sim \pi_\phi(\cdot \given s)} [\psi(s,a)] - \nabla_\phi Z_\phi(s) \notag \\
& =  \nabla_\phi^2 Z_\phi(s) (\phi - \phi_0) = \var_{a\sim \pi_\phi(\cdot\given s)}[\psi(s,a)] (\phi - \phi_0)\notag \\
& = I_\phi(s) (\phi - \phi_0). 
\#
Similarly, we have
\#\label{eq:kl2}
\nabla_\phi \kl (\pi_\phi(\cdot\given s) \| \pi_k(\cdot \given s)) = I_\phi(s) (\phi - \phi_k). 
\#
for any $s\in \cS$. Thus, by combining \eqref{eq:ppo2}, \eqref{eq:kl1}, and \eqref{eq:kl2}, the stationary point $\phi_{k+1}$ of $\cL^k_{\rm OPO}(\phi)$ satisfies
\#\label{eq:station1}
\EE_{s\sim \cD} \Bigl[ \nabla_a Q_k(s, \Pi_{\phi_{k+1}}(s)) \nabla_\phi \Pi_{\phi_{k+1}}(s)  - \lambda_k \cdot I_{\phi_{k+1}}(s) (\phi_{k+1} - \phi_0)\notag \\
- \eta_k \cdot I_{\phi_{k+1}}(s) (\phi_{k+1} - \phi_k)   \Bigr] = 0. 
\#
Now, by \eqref{eq:station1}, we have
\$
\phi_{k+1} = \frac{\eta_k \phi_k + \lambda_k \phi_0}{\eta_k + \lambda_k} + (\eta_k + \lambda_k)^{-1} \cdot I_{\phi_{k+1}}^{-1} \EE_{s\sim \cD} \bigl[ \nabla_a Q_k(s, \Pi_{\phi_{k+1}}(s)) \nabla_\phi \Pi_{\phi_{k+1}}(s) \bigr], 
\$
which concludes the proof. 
\end{proof}

\subsubsection{Proof of Lemma \ref{lemma:pi}}
\begin{proof} \label{sec:pi}

First, by maximizing \eqref{eq:ppo-obj}, we have
\$
\pi_{k+1}(a\given s)\propto \exp\{ (\eta_k + \lambda_k)^{-1} \cdot (Q_k(s,a) + \eta_k f_k(s,a) + \lambda_k f_0(s,a)) \}. 
\$
Thus, for any policy $\pi'$ and $\pi''$, it holds for any $s\in \cS$ that
\#\label{eq:fff1}
& \Bigl \lan \log \frac{\pi_{k+1}(\cdot \given s)}{\pi_k(\cdot\given s)} , \pi'(\cdot\given s) - \pi''(\cdot\given s) \Bigr\ran\notag\\
& \qquad = (\eta_k + \lambda_k)^{-1} \cdot \Bigl \lan Q_k(s,\cdot) - \lambda_k \cdot \log \frac{\pi_k(\cdot\given s)}{\pi_0(\cdot\given s)}, \pi'(\cdot\given s) - \pi''(\cdot\given s) \Bigr \ran. 
\#
We will use \eqref{eq:fff1} in the following proof.

Note that 
\#\label{eq:pi1}
& \kl \bigl( \pi^*(\cdot \given s_t) \| \pi_k(\cdot \given s_t) \bigr) - \kl \bigl( \pi^*(\cdot \given s_t) \| \pi_{k+1}(\cdot \given s_t) \bigr) \notag \\
& \qquad =  \Bigl \lan \log\frac{\pi_{k+1}(\cdot \given s_t)}{\pi_k(\cdot \given s_t) } , \pi^*(\cdot \given s_t)  \Bigr\ran \notag \\
& \qquad = \Bigl \lan \log\frac{\pi_{k+1}(\cdot \given s_t)}{\pi_k(\cdot \given s_t) }, \pi^*(\cdot \given s_t) - \pi_{k+1}(\cdot \given s_t)  \Bigr\ran + \kl \bigl( \pi_{k+1}(\cdot \given s_t) \| \pi_k(\cdot \given s_t) \bigr). 
\#
In the meanwhile, we have
\#\label{eq:pi2}
& \Bigl \lan \log\frac{\pi_{k+1}(\cdot \given s_t)}{\pi_k(\cdot \given s_t) }, \pi^*(\cdot \given s_t) - \pi_{k+1}(\cdot \given s_t)  \Bigr\ran \notag\\
& \qquad = \Bigl \lan \log\frac{\pi_{k+1}(\cdot \given s_t)}{\pi_k(\cdot \given s_t) }, \pi^*(\cdot \given s_t) - \pi_k(\cdot \given s_t)  \Bigr\ran + \Bigl \lan \log\frac{\pi_{k+1}(\cdot \given s_t)}{\pi_k(\cdot \given s_t) }, \pi_k(\cdot \given s_t) - \pi_{k+1}(\cdot \given s_t)  \Bigr\ran \notag \\
& \qquad  =  (\eta_k + \lambda_k)^{-1} \cdot   \Bigl \lan Q_k(s_t, \cdot ) - \lambda_k \cdot \log \frac{\pi_k(\cdot \given s_t)}{\pi_0(\cdot \given s_t)}, \pi^*(\cdot \given s_t) - \pi_k(\cdot \given s_t)  \Bigr\ran \notag \\
& \qquad \qquad + (\eta_k + \lambda_k)^{-1} \cdot   \Bigl \lan Q_k(s_t, \cdot ) - \lambda_k \cdot \log \frac{\pi_k(\cdot \given s_t)}{\pi_0(\cdot \given s_t)}, \pi_k(\cdot \given s_t) - \pi_{k+1}(\cdot \given s_t)  \Bigr\ran, 
\#
where the last equality comes from \eqref{eq:fff1}. 
Combining \eqref{eq:pi1} and \eqref{eq:pi2}, we have
\#\label{eq:pi3}
& (\eta_k + \lambda_k)^{-1} \cdot   \Bigl \lan Q_k(s_t, \cdot ) - \lambda_k \cdot \log \frac{\pi_k(\cdot \given s_t)}{\pi_0(\cdot \given s_t)}, \pi^*(\cdot \given s_t) - \pi_k(\cdot \given s_t)  \Bigr\ran\notag \\
& \qquad = \kl \bigl( \pi^*(\cdot \given s_t) \| \pi_k(\cdot \given s_t) \bigr) - \kl \bigl( \pi^*(\cdot \given s_t) \| \pi_{k+1}(\cdot \given s_t) \bigr) - \kl \bigl( \pi_{k+1}(\cdot \given s_t) \| \pi_k(\cdot \given s_t) \bigr) \notag \\
& \qquad \qquad - (\eta_k + \lambda_k)^{-1} \cdot   \Bigl \lan Q_k(s_t, \cdot ) - \lambda_k \cdot \log \frac{\pi_k(\cdot \given s_t)}{\pi_0(\cdot \given s_t)}, \pi_k(\cdot \given s_t) - \pi_{k+1}(\cdot \given s_t)  \Bigr\ran \notag \\
& \qquad \leq \kl \bigl( \pi^*(\cdot \given s_t) \| \pi_k(\cdot \given s_t) \bigr) - \kl \bigl( \pi^*(\cdot \given s_t) \| \pi_{k+1}(\cdot \given s_t) \bigr) - \bigl \| \pi_{k+1}(\cdot \given s_t) - \pi_k(\cdot \given s_t) \bigr\|_1^2 / 2 \notag \\
& \qquad \qquad - (\eta_k + \lambda_k)^{-1} \cdot   \Bigl \lan Q_k(s_t, \cdot ) - \lambda_k \cdot \log \frac{\pi_k(\cdot \given s_t)}{\pi_0(\cdot \given s_t)}, \pi_k(\cdot \given s_t) - \pi_{k+1}(\cdot \given s_t)  \Bigr\ran, 
\#
where the last inequality comes from Pinsker's inequality. 
To upper bound the last term on the right-hand side of \eqref{eq:pi3}, we characterize $\log(\pi_k(a\given s) / \pi_0(a\given s))$ as follows. 

\noindent\textbf{Characterization of $\log(\pi_k / \pi_0)$.} For any $(s,a) \in \cS\times\cA$, we have
\$
\log\frac{\pi_k}{\pi_0} & = \log\Bigl(\frac{\pi_k}{\pi_{k-1}} \cdot \frac{\pi_{k-1}}{\pi_{k-2}} \cdot \cdots \cdot \frac{\pi_{1}}{\pi_{0}} \Bigr) = \sum_{i = 0}^{k-1} \log\frac{\pi_{i+1}}{\pi_{i}}. 
\$
Then, we have 
\#\label{eq:re1}
\log\frac{\pi_k}{\pi_0}  & = \sum_{i = 0}^{k-1}  \log\frac{\pi_{i+1}}{\pi_{i}}  =  \sum_{i = 0}^{k-1} \Bigl ( Q_i + \lambda_i \cdot \log\frac{\pi_i}{\pi_0}  \Bigr ) + Z_1,
\#
where $Z_1$ is a function independent of $a$. 
Now, by recursively applying \eqref{eq:re1}, we have
\#\label{eq:chara-log}
\log\frac{\pi_k}{\pi_0} = \sum_{i = 0}^{k-1} Q_i \cdot \sum_{j = i+1}^k \lambda_j \Bigl (1 + \sum_{\ell = 0}^{k-j-1} \varepsilon_\ell \prod_{p = 0}^{\ell} \lambda_{k-p} \Bigr ) + Z_2, 
\#
where $\varepsilon_\ell$ is either $1$ or $-1$, and $Z_2$ is a function independent of $a$. 

Now, by \eqref{eq:chara-log}, the last term on the right-hand side of \eqref{eq:pi3} can be upper bounded as follows, 
\#\label{eq:bound-q-log}
& - (\eta_k + \lambda_k)^{-1} \cdot   \Bigl \lan Q_k(s_t, \cdot ) - \lambda_k \cdot \log \frac{\pi_k(\cdot \given s_t)}{\pi_0(\cdot \given s_t)}, \pi_k(\cdot \given s_t) - \pi_{k+1}(\cdot \given s_t)  \Bigr\ran \notag \\
& \qquad = - (\eta_k + \lambda_k)^{-1} \cdot   \Bigl \lan Q_k(s_t, \cdot ) - \lambda_k  \sum_{i = 0}^{k-1} Q_i(s_t, \cdot)  \sum_{j = i+1}^k \lambda_j \Bigl(1 + \sum_{\ell = 0}^{k-j-1} \varepsilon_\ell \prod_{p = 0}^{\ell} \lambda_{k-p} \Bigr) -\lambda_k \cdot Z_2(s_t) , \notag \\
& \qquad \qquad \qquad \qquad \qquad \qquad \pi_k(\cdot \given s_t) - \pi_{k+1}(\cdot \given s_t)  \Bigr\ran \notag  \\
& \qquad = - (\eta_k + \lambda_k)^{-1} \cdot   \Bigl \lan Q_k(s_t, \cdot ) - \lambda_k  \sum_{i = 0}^{k-1} Q_i(s_t, \cdot)  \sum_{j = i+1}^k \lambda_j \Bigl(1 + \sum_{\ell = 0}^{k-j-1} \varepsilon_\ell \prod_{p = 0}^{\ell} \lambda_{k-p} \Bigr), \notag \\
& \qquad \qquad \qquad \qquad \qquad \qquad \pi_k(\cdot \given s_t) - \pi_{k+1}(\cdot \given s_t)  \Bigr\ran \notag \\ 
& \qquad \leq (\eta_k + \lambda_k)^{-1} \cdot \Bigl\| Q_k(s_t, \cdot ) - \lambda_k  \sum_{i = 0}^{k-1} Q_i(s_t, \cdot)  \sum_{j = i+1}^k \lambda_j \Bigl(1 + \sum_{\ell = 0}^{k-j-1} \varepsilon_\ell \prod_{p = 0}^{\ell} \lambda_{k-p} \Bigr) \Bigr\|_\infty  \\
& \qquad \qquad\qquad \qquad \quad \cdot \|\pi_k(\cdot \given s_t) - \pi_{k+1}(\cdot \given s_t)  \|_1 \notag, 
\#
where the last line comes from H\"older's inequality. In the meanwhile, it holds that
\#\label{eq:q-bound}
\|Q_k\|_\infty \leq (1 - \gamma)^{-1},
\#
and 
\#\label{eq:log-bound}
& \Bigl\| \sum_{i = 0}^{k-1} Q_i(s_t, \cdot)  \sum_{j = i+1}^k \lambda_j \Bigl(1 + \sum_{\ell = 0}^{k-j-1} \varepsilon_\ell \prod_{p = 0}^{\ell} \lambda_{k-p} \Bigr) \Bigr\|_\infty  \leq \alpha^4(1 - \alpha)^{-4}(1 - \gamma)^{-1}. 
\#
Now, by plugging \eqref{eq:q-bound} and \eqref{eq:log-bound} into \eqref{eq:bound-q-log}, we have
\#\label{eq:pi4}
& (\eta_k + \lambda_k)^{-1} \cdot   \Bigl \lan Q_k(s_t, \cdot ) - \lambda_k \cdot \log \frac{\pi_k(\cdot \given s_t)}{\pi_0(\cdot \given s_t)}, \pi_k(\cdot \given s_t) - \pi_{k+1}(\cdot \given s_t)  \Bigr\ran \notag \\
& \qquad \leq (\eta_k + \lambda_k)^{-1} \cdot (1 + \lambda_k \alpha^4(1 - \alpha)^{-4})(1 - \gamma)^{-1} \cdot \| \pi_k(\cdot \given s_t) - \pi_{k+1}(\cdot \given s_t)\|_1. 
\#
Now, combining \eqref{eq:pi3} and \eqref{eq:pi4}, it holds that 
\$
& (\eta_k + \lambda_k)^{-1} \cdot   \Bigl \lan Q_k(s_t, \cdot ) - \lambda_k \cdot \log \frac{\pi_k(\cdot \given s_t)}{\pi_0(\cdot \given s_t)}, \pi^*(\cdot \given s_t) - \pi_k(\cdot \given s_t)  \Bigr\ran\notag \\
& \qquad \leq \kl \bigl( \pi^*(\cdot \given s_t) \| \pi_k(\cdot \given s_t) \bigr) - \kl \bigl( \pi^*(\cdot \given s_t) \| \pi_{k+1}(\cdot \given s_t) \bigr) - \bigl \| \pi_{k+1}(\cdot \given s_t) - \pi_k(\cdot \given s_t) \bigr\|_1^2 / 2 \notag \\
& \qquad \qquad + (\eta_k + \lambda_k)^{-1} \cdot (1 + \lambda_k \alpha^4(1 - \alpha)^{-4})(1 - \gamma)^{-1} \cdot \| \pi_k(\cdot \given s_t) - \pi_{k+1}(\cdot \given s_t)\|_1 \notag \\
& \qquad \leq \kl \bigl( \pi^*(\cdot \given s_t) \| \pi_k(\cdot \given s_t) \bigr) - \kl \bigl( \pi^*(\cdot \given s_t) \| \pi_{k+1}(\cdot \given s_t) \bigr) \\
& \qquad \qquad + (\eta_k + \lambda_k)^{-2} \cdot \bigl (1 + \lambda_k \cdot \alpha^4(1 - \alpha)^{-4} \bigr)^2 \cdot (1 - \gamma)^{-2} \notag, 
\$
which concludes the proof. 
\end{proof}

%
%
\section{Supplementary Tables}
\label{app:tables}
\begin{table}[h]
\caption{Hyperparameters of SCORE}
\label{tab:hyperparameters}
\centering
\resizebox{\textwidth}{!}{
\begin{tabular}{lll|lll}
\toprule
\multicolumn{3}{c|}{Basic hyperparameters from TD3~\cite{fujimotoAddressingFunctionApproximation2018}} & \multicolumn{3}{c}{SCORE hyperparameters}  \\ 
Notation & Description & Value                     & Notation & Description & Value             \\
\midrule
$\sigma$ & The std of the Gaussian exploration noise         & 0.2                       & $M$        & The number of critic networks        & 5                 \\
$c$        & The max noise         & 0.5                       & $d_{\text{bc}}$        & The update frequency of the behavior cloning coefficient         & 10000             \\
$d$       & The update frequency of the actor network and the target networks         & 2                         & $\gamma_{\text{bc}}$   & The discount rate of the behavior cloning coefficient         & $\{0.96, 0.98, 1.0\}$   \\
$\tau$     & The target network update rate         & 0.005                     & $\beta$     & The uncertainty penalty coefficient         & $\{0.1, 0.2, 0.5\}$\\   
\bottomrule
\end{tabular}}
\end{table}

\vspace{0.5in}

\begin{table}[h]
\caption{Average normalized scores over 5 random seeds on the D4RL datasets. We compare SCORE with both model-based methods (MOPO, MOReL) and model-free methods (BCQ, BEAR, UWAC, CQL, TD3-BC, PBRL, PBRL w/o prior). The standard deviation is reported in the parentheses. A score of zero corresponds to the performance of the random policy and a score of 100 corresponds to the performance of the expert policy.}
\vskip -0.1in
\label{tab:mujoco}
\begin{center}
\resizebox{\textwidth}{!}{
\begin{tabular}{c@{\hspace{5pt}}lllllllllll}
\toprule
{} & Task & SCORE & MOPO & MOReL & BCQ & BEAR & UWAC & CQL & TD3-BC & PBRL & PBRL w/o prior\\ 
\midrule
\multirow{3}{*}{\rotatebox[origin=c]{90}{Random}} & halfcheetah & 29.1$\pm$2.6 & 35.9$\pm$2.9  & 30.3$\pm$5.9 & 2.2$\pm$0.0 & 2.3$\pm$0.0 & 2.3$\pm$0.0  & 21.7$\pm$0.6 & 10.6$\pm$1.7 & 13.1$\pm$1.2 & 11.0$\pm$5.8\\
{} & hopper      & 31.3$\pm$0.3 & 16.7$\pm$12.2 & 44.8$\pm$4.8 & 8.1$\pm$0.5 & 3.9$\pm$2.3  & 2.6$\pm$0.3  & 8.1$\pm$1.4  & 8.6$\pm$0.4 & 31.6$\pm$0.3 &26.8$\pm$9.3\\
{} & walker2d    & 3.7$\pm$7.0  & 4.2$\pm$5.7   & 17.3$\pm$8.2 & 4.6$\pm$0.7 & 12.8$\pm$10.2  & 1.8$\pm$0.4  & 0.5$\pm$1.3  & 1.5$\pm$1.4 & 8.8$\pm$6.3 & 8.1$\pm$4.4\\
\midrule
\multirow{3}{*}{\rotatebox[origin=c]{90}{\shortstack{Medium\\Replay}}} & halfcheetah & 48.0$\pm$0.7  & 69.2$\pm$1.1 & 31.9$\pm$6.0  & 40.9$\pm$1.1  & 36.3$\pm$3.1  & 36.4$\pm$3.3 & 47.2$\pm$0.4 & 44.8$\pm$0.5 & 49.5$\pm$9.8 & 45.1$\pm$8.0\\
{} & hopper      & 94.0$\pm$1.8 & 32.7$\pm$9.4 & 54.2$\pm$32.0  & 40.9$\pm$16.7 & 52.2$\pm$19.3 & 23.7$\pm$2.6 & 95.6$\pm$2.4 & 57.8$\pm$17.3 & 100.7$\pm$0.4 & 100.6$\pm$1.0\\
{} & walker2d    & 84.8$\pm$1.1  & 73.7$\pm$2.4 & 13.7$\pm$8.0 & 42.5$\pm$13.7 & 7.0$\pm$7.8   & 24.3$\pm$5.4 & 85.3$\pm$2.7 & 81.9$\pm$2.7 & 86.2$\pm$3.4 & 77.7$\pm$14.5\\
\midrule
\multirow{3}{*}{\rotatebox[origin=c]{90}{Medium}} & halfcheetah        & 55.2$\pm$0.4 & 73.1$\pm$2.4  & 20.4$\pm$13.8 & 45.4$\pm$1.7 & 43.0$\pm$0.2 & 42.3$\pm$0.3 & 49.2$\pm$0.3 & 47.8$\pm$0.4 & 58.2$\pm$1.5 & 57.9$\pm$1.5\\
{} & hopper             & 99.6$\pm$2.8 & 38.3$\pm$34.9 & 53.2$\pm$32.1 & 54.0$\pm$3.7 & 51.8$\pm$4.0 & 50.2$\pm$5.2 & 62.7$\pm$3.7 &69.1$\pm$4.5 & 81.6$\pm$14.5 & 75.3$\pm$31.2\\
{} & walker2d           & 89.2$\pm$1.2 & 41.2$\pm$30.8 & 10.3$\pm$8.9 & 74.5$\pm$3.7 & -0.2$\pm$0.1 & 72.8$\pm$4.1 & 83.3$\pm$0.8 & 81.3$\pm$3.0 & 90.3$\pm$1.2 & 89.6$\pm$0.7\\
\midrule
\multirow{3}{*}{\rotatebox[origin=c]{90}{\shortstack{Medium\\Expert}}} & halfcheetah & 92.6$\pm$3.5  & 70.3$\pm$21.9 & 35.9$\pm$19.2 & 94.0$\pm$1.2  & 46.0$\pm$4.7 & 42.8$\pm$0.3 & 70.6$\pm$13.6 & 88.9$\pm$5.3 & 93.6$\pm$2.3 & 92.3$\pm$1.1\\
{} & hopper      & 100.3$\pm$6.9 & 60.6$\pm$32.5 & 52.1$\pm$27.7 & 108.6$\pm$6.0 & 50.6$\pm$25.3 & 48.6$\pm$7.8 & 111.0$\pm$1.2 & 102.0$\pm$10.1 & 111.2$\pm$0.7 & 110.8$\pm$0.8\\
{} & walker2d    & 109.3$\pm$0.5 & 77.4$\pm$27.9 & 3.9$\pm$2.8 & 109.7$\pm$0.6 & 22.1$\pm$44.5 & 96.9$\pm$7.1 & 109.7$\pm$0.3 & 110.5$\pm$0.3 & 109.8$\pm$0.2 & 110.1$\pm$0.3\\
\midrule
\multirow{3}{*}{\rotatebox[origin=c]{90}{Expert}} & halfcheetah        & 96.4$\pm$0.6  & 81.3$\pm$21.8 & 2.2$\pm$5.4 & 92.7$\pm$2.5  & 92.7$\pm$0.6  & 93.5$\pm$0.9 & 97.5$\pm$1.8  & 96.3$\pm$0.9 & 96.2$\pm$2.3 & 92.4$\pm$1.7\\
{} & hopper      & 112.0$\pm$0.3 & 62.5$\pm$29.0 & 26.2$\pm$14.0 & 105.3$\pm$8.1 & 54.6$\pm$21.0 & 103.9$\pm$13.6 & 105.4$\pm$5.9 & 109.5$\pm$4.1 & 110.4$\pm$0.3 & 110.5$\pm$0.4\\
{} & walker2d    & 109.4$\pm$0.6 & 62.4$\pm$3.2  & -0.3$\pm$0.3 & 109.0$\pm$0.4 & 106.8$\pm$6.8 & 108.2$\pm$0.5 & 109.0$\pm$0.4 & 110.3$\pm$0.4 & 108.8$\pm$0.2 & 108.3$\pm$0.3\\
\bottomrule
{} & Overall & \textbf{77.0$\pm$2.0} & 53.3$\pm$16.3 & 26.4$\pm$12.6 & 62.6$\pm$4.0 & 38.8$\pm$10.0 & 50.0$\pm$3.5 & 70.5$\pm$2.5 & 68.1$\pm$3.5 & 76.6$\pm$2.4 & 74.4$\pm$5.3\\
\bottomrule
\end{tabular}}
\end{center}
\end{table}

\end{document}